\title{A Finite-Time Analysis of Two Time-Scale Actor-Critic Methods}
\author{%
  Yue Wu \\
  Department of Computer Science \\
  University of California, Los Angeles \\
  Los Angeles, CA 90095 \\
  \texttt{ywu@cs.ucla.edu}
  \And 
  Weitong Zhang \\
  Department of Computer Science \\
  University of California, Los Angeles \\
  Los Angeles, CA 90095 \\
  \texttt{weightzero@cs.ucla.edu}
  \And 
  Pan Xu \\
  Department of Computer Science \\
  University of California, Los Angeles \\
  Los Angeles, CA 90095 \\
  \texttt{panxu@cs.ucla.edu}
  \And
  Quanquan Gu \\
  Department of Computer Science \\
  University of California, Los Angeles \\
  Los Angeles, CA 90095 \\
  \texttt{qgu@cs.ucla.edu}
  % examples of more authors
  % \And
  % Coauthor \\
  % Affiliation \\
  % Address \\
  % \texttt{email} \\
  % \AND
  % Coauthor \\
  % Affiliation \\
  % Address \\
  % \texttt{email} \\
  % \And
  % Coauthor \\
  % Affiliation \\
  % Address \\
  % \texttt{email} \\
  % \And
  % Coauthor \\
  % Affiliation \\
  % Address \\
  % \texttt{email} \\
}
\begin{document}

\maketitle

\begin{abstract}
Actor-critic (AC) methods have exhibited great empirical success compared with other reinforcement learning algorithms, where the actor uses the policy gradient to improve the learning policy and the critic uses temporal difference learning to estimate the policy gradient. Under the two time-scale learning rate schedule, the asymptotic convergence of AC has been well studied in the literature.
However, the non-asymptotic convergence and finite sample complexity of actor-critic methods are largely open.
In this work, we provide a non-asymptotic analysis for two time-scale actor-critic methods under non-i.i.d. setting. We prove that the actor-critic method is guaranteed to find a first-order stationary point (i.e., $\|\nabla J(\bm{\theta})\|_2^2 \le \epsilon$) of the non-concave performance function $J(\bm{\theta})$, with $\mathcal{\tilde{O}}(\epsilon^{-2.5})$ sample complexity. 
To the best of our knowledge, this is the first work providing finite-time analysis and sample complexity bound for two time-scale actor-critic methods.
\end{abstract}

\section{Introduction}

Actor-Critic (AC) methods \citep{barto1983neuronlike,konda2000actor}
aim at combining the advantages of actor-only methods and critic-only methods, and have achieved great empirical success in reinforcement learning \citep{wang2016sample, bahdanau2016actor}.
Specifically, actor-only methods, such as policy gradient \citep{sutton2000policy} and trust region policy optimization \citep{schulman2015trust}, utilize a parameterized policy function class and improve the policy by optimizing the parameters of some performance function using gradient ascent, whose exact form is characterized by the Policy Gradient Theorem \citep{sutton2000policy}. Actor-only methods can be naturally applied to continuous setting but suffer from high variance when estimating the policy gradient. On the other hand, critic-only methods, such as temporal difference learning \citep{sutton1988learning} and Q-learning \citep{watkins1992q}, focus on learning a value function (expected cumulative rewards), and determine the policy based on the value function, which is recursively approximated based on the Bellman equation. Although the critic-only methods can efficiently learn a satisfying policy under tabular setting \citep{jin2018q}, they can diverge with function approximation under continuous setting \citep{wiering2004convergence}. Therefore, it is natural to combine actor and critic based methods to achieve the best of both worlds. The principal idea behind actor-critic methods is simple: the critic tries to learn the value function, given the policy from the actor, while the actor can estimate the policy gradient based on the approximate value function provided by the critic.
 
If the actor is fixed, the policy remains unchanged throughout the updates of the critic. Thus one can use policy evaluation algorithm such as temporal difference (TD) learning \citep{sutton2018reinforcement} to estimate the value function (critic). 
After many steps of the critic update, one can expect a good estimation of the value function, which in turn enables an accurate estimation of the policy gradient for the actor. 
A more favorable implementation is the so-called two time-scale actor-critic algorithm, where the actor and the critic are updated simultaneously at each iteration except that the actor changes more slowly (with a small step size) than the critic (with a large step size). In this way, one can hope the critic will be well approximated even after one step of update. From the theoretical perspective, the asymptotic analysis of two time-scale actor-critic methods has been established in \cite{borkar1997actor,konda2000actor}. 
In specific, under the assumption that the ratio of the two time-scales goes to infinity (i.e. $\lim_{t \rightarrow \infty} \beta_t / \alpha_t = \infty$), the asymptotic convergence is guaranteed through the lens of the two time-scale ordinary differential equations(ODE), where the slower component is fixed and the faster component converges to its stationary point. This type of analysis was also applied in the context of generic two time-scale stochastic approximation \citep{borkar1997stochastic}. 

However, finite-time analysis (non-asymptotic analysis) of two-time scale actor-critic is still largely missing in the literature, which is important because it can address the questions that how many samples are needed for two time-scale actor-critic to converge,  
and how to appropriately choose the different learning rates for the actor and the critic. 
Some recent work has attempted to provide the finite-time analysis for the ``decoupled'' actor-critic methods \citep{kumar2019sample,qiu2019finite}. 
The term ``decoupled'' means that before updating the actor at the $t$-th iteration, the critic starts from scratch to estimate the state-value (or Q-value) function. At each iteration, the ``decoupled'' setting requires the critic to perform multiple sampling and updating (often from another new sample trajectory). As we will see in the later comparison, this setting is sample-inefficient or even impractical.
Besides, their analyses are based on either the i.i.d.\ assumption \citep{kumar2019sample} or the partially i.i.d.\ assumption \citep{qiu2019finite} (the actor receives i.i.d.\ samples), which is unrealistic in practice.
In this paper, we present the first finite-time analysis on the convergence of the two time-scale actor-critic algorithm. 
We summarize our contributions as follows:

\begin{itemize}[leftmargin=*]
\item We prove that, the actor in the two time-scale actor critic algorithm converges to an $\epsilon$-approximate stationary point of the non-concave performance function $J$ after accessing at most $\tilde{\cO}(\epsilon^{-2.5})$ samples. Compared with existing finite-time analysis of actor-critic methods \citep{kumar2019sample,qiu2019finite}, the algorithm we analyzed is based on two time-scale update and therefore more practical and efficient than the ``decoupled'' version.
Moreover, we do not need any i.i.d. data assumptions in the convergence analysis as required by \citet{kumar2019sample,qiu2019finite}, which do not hold in real applications.

\item From the technical viewpoint, we also present a new proof framework that can tightly characterize the estimation error in two time-scale algorithms. Compared with the proof technique used in \cite{xu2019two}, we remove the extra artificial factor $\cO(t^{\xi})$ in the convergence rate introduced by their ``iterative refinement'' technique. Therefore, our new proof technique may be of independent interest for analyzing the convergence of other two time-scale algorithms to get sharper rates. 

\end{itemize}

% The remainder of this paper is organized as follows: In Section \ref{sec:related}, we review the related work in the literature. In Section \ref{sec:preliminary}, we introduce preliminaries and the two time-scale actor-critic algorithm. We analyze the algorithm in detail in Section \ref{subsec:alg}. In Sections \ref{sec:theory} and \ref{sec:proof} we present our main theory and the corresponding the proof sketch respectively. Finally, we conclude our paper in Section \ref{sec:conclusion}.

\noindent\textbf{Notation} We use lower case letters to denote scalars, and use lower and upper case bold face letters to denote vectors and matrices respectively. For two sequences $\{a_n\}$ and $\{b_n\}$, we write $a_n = \cO(b_n)$ if there exists an absolute constant $C$ such that $a_n\le C b_n$. We use $\tilde{\cO}(\cdot)$ to further hide logarithm factors. Without other specification, $\|\cdot\|$ denotes the $\ell_2$ norm of Euclidean vectors.
$d_{TV}(P,Q)$ is the total variation norm between two probability measure $P$ and $Q$, which is defined as $d_{TV}(P,Q) = 1/2 \int_{\cX}|P(dx)-Q(dx)|$.

\section{Related work} \label{sec:related}

In this section, we briefly review and discuss existing work, which is mostly related to ours.

\textbf{Stochastic bias characterization}
The main difficulty in analyzing reinforcement learning algorithms under non-i.i.d.\ data assumptions is that the samples and the trainable parameters are correlated, which makes the noise term biased. \citet{bhandari2018finite} used information-theoretical techniques to bound the Markovian bias and provide a simple and explicit analysis for the temporal difference learning. Similar techniques were also established in \cite{srikant2019finite} through the lens of stochastic approximation methods.  \citet{gupta2019finite,xu2019two} applied such methods to deriving the non-asymptotic convergence of two time-scale temporal difference learning algorithms (TDC). \citet{zou2019finite,chen2019performance,xu2019finite} further applied these analysis methods to on-policy learning algorithms including SARSA and Q-learning. In addition, \citet{hu2019characterizing} formulated a family of TD learning algorithms as  Markov jump linear systems and analyzed the evolution of the mean and covariance matrix of the estimation error. \citet{cai2019provably} studied TD learning with neural network approximation, and proved its global convergence.

\noindent\textbf{Two time-scale reinforcement learning}
The two time-scale stochastic approximation can be seen as a general framework for analyzing reinforcement learning \citep{borkar1997stochastic, tadic2003asymptotic, konda2004convergence}. 
Recently, the finite-time analysis of two time-scale stochastic approximation has gained much interest. \citet{dalal2017finite} proved convergence rate for the two time-scale linear stochastic approximation under i.i.d.\ assumption. \citet{gupta2019finite} also provided finite-time analysis for the two time-scale linear stochastic approximation algorithms. Both can be applied to analyze two time-scale TD methods like GTD, GTD2 and TDC. \citet{xu2019two} proved convergence rate and sample complexity for the TDC algorithm over Markovian samples.
\citep{kaledin2020finite} further improved the convergence rate of two time-scale linear stochastic approximation and removed the projection step.
However, since the update rule for the actor is generally not linear, we cannot apply these results to the actor-critic algorithms.

\noindent\textbf{Analysis for actor-critic methods}
The asymptotic analysis of actor-critic methods has been well established. \citet{konda2000actor} proposed the actor-critic algorithm, and established the asymptotic convergence for the two time-scale actor-critic, with TD($\lambda$) learning-based critic. \citet{bhatnagar2009natural} proved the convergence result for the original actor-critic and natural actor-critic methods. \citet{castro2010convergent} proposed a single time-scale actor-critic algorithm and proved its convergence. 
Recently, \citep{zhang2019provably} proved convergence of two time-scale off-policy actor-critic with function approximation. 
Recently, there has emerged some works concerning the finite-time behavior of actor-critic methods.
\citet{yang2019global} studied the global convergence of actor-critic algorithms under the Linear Quadratic Regulator. 
%\citet{liu2019neural} and 
\citet{yang2018finite} analyzed the finite-sample performance of batched actor-critic, where all samples are assumed i.i.d.\ and the critic performs several empirical risk minimization (ERM) steps. 
\citet{qiu2019finite} treated the actor-critic algorithms as a bilevel optimization problem and established a finite sample analysis under the ``average-reward'' setting, assuming that the actor has access to independent samples. 
Similar result has also been established by \citet{kumar2019sample}, where they considered the sample complexity for the ``decoupled'' actor-critic methods under i.i.d.\ assumption.  \citet{wang2020neural} also proved the global convergence of actor-critic algorithms with both actor and critic being approximated by overparameterized neural networks.

When we were preparing this work, we noticed that there is a concurrent and independent work \citep{xu2020non} which also analyzes the non-asymptotic convergence of two time-scale actor-critic algorithms and achieves the same sample complexity, i.e., $\mathcal{\tilde{O}}(\epsilon^{-2.5})$. However, there are two key differences between their work and ours. First, the two time-scale algorithms analyzed in both papers are very different. We analyze the classical two time-scale algorithm described in \citep{sutton2018reinforcement}, where both actor and critic take one step update in each iteration. It is very easy to implement and has been widely used in practice, while the update rule in \cite{xu2020non} for the critic needs to call a sub-algorithm, which involves generating a fresh episode to estimate the Q-function. Second, the analysis in \cite{xu2020non} relies on the compatible function approximation \citep{sutton2000policy}, which requires the critic to be a specific linear function class, while our analysis does not require such specific approximation, and therefore is more general. This makes our analysis potentially extendable to non-linear function approximation such as neural networks \citep{cai2019provably}.

%(1) the analysis in \citet{xu2020non} relies on the compatible function approximation, which enables analyzing natural policy gradient descent, but it is required that the critic must use the specific linear function class, while our function class could be chosen according to the specific RL problem, and potentially extended to non-linear function classes; (2) compatible function approximation requires a more complicated update rule for the critic: each iteration in the algorithm of \cite{xu2020non} involves restarting a new episode to estimate the current Q-function value, while our Algorithm \ref{alg:2ts_ac} in Section \ref{sec:preliminary} only requires one sample trajectory and enjoys a simple and practical implementation.

\section{Preliminaries} \label{sec:preliminary}
In this section, we present the background of the two time-scale actor-critic algorithm.

\subsection{Markov decision processes} 
Reinforcement learning tasks can be  modeled as a discrete-time Markov Decision Process (MDP)  $\cM=\{\cS,\cA,\cP,r\}$, where $\cS$ and $\cA$ are the state and action spaces respectively. In this work we consider the finite action space $|\cA| < \infty$. $\cP(s'|s,a)$ is the transition probability that the agent transits to state $s'$ after taking action $a$ at state $s$. Function $r:\cS\times\cA\rightarrow[-U_r,U_r]$ emits a bounded reward after the agent takes action
$a$ at state $s$, where $U_r>0$ is a constant. 
A policy parameterized by $\btheta$ at state $s$ is a probability function $\pi_{\btheta}(a|s)$ over action space $\cA$. $\mu_{\btheta}$ denotes the stationary distribution induced by the policy $\pi_{\btheta}$.

In this work we consider the ``average reward'' setting \citep{sutton2000policy}, where under the ergodicity assumption, the average reward over time eventually converges to the expected reward under the stationary distribution:
\begin{align*}
    r(\btheta)
    & := 
    \lim_{N \rightarrow \infty} \frac{\sum_{t=0}^{N} r(s_t, a_t)}{N}
    =
    \EE_{s \sim \mu_{\btheta}, a \sim \pi_{\btheta}}\big[r(s,a)\big].
\end{align*}

To evaluate the overall rewards given a starting state $s_0$ and the behavior policy $\pi_{\btheta}$, we define the state-value function  as 
\begin{align*}
    V^{\pi_{\btheta}}(\cdot) 
    & := 
    \EE\bigg[\sum_{t=0}^{\infty} 
    \big( 
    r(s_t, a_t) - r(\btheta) 
    \big) | s_0 = \cdot\bigg],
\end{align*}
where the action follows the policy $a_t \sim \pi_{\btheta}(\cdot|s_t)$ and the next state follows the transition probability $s_{t+1} \sim \cP(\cdot|s_t,a_t)$. Another frequently used function is the state-action value function, also called Q-value function:
\begin{align*}
    Q^{\pi_{\btheta}}(s,a) 
    : & =
    \EE \bigg[\sum_{t=0}^{\infty} 
    \big( 
    r(s_t, a_t) - r(\btheta) 
    \big) |s_0 = s, a_0 = a\bigg] \\
    & =
    r(s, a) - r(\btheta)  + \EE\big[V^{\pi_{\btheta}}(s')\big],
\end{align*}
where the expectation is taken over $s' \sim \cP(\cdot | s, a)$.

Throughout this paper, we use $O$ to denote the tuple $O = (s, a, s')$, some variants are like ${O}_t = ({s}_t, {a}_t, {s}_{t+1})$ and $\tilde{O}_t = (\tilde{s}_t, \tilde{a}_t, \tilde{s}_{t+1})$. 

\subsection{Policy gradient theorem}

We define the performance function associated with policy $\pi_{\btheta}$ naturally as the expected reward under the stationary distribution $\mu_{\btheta}$ induced by  $\pi_{\btheta}$, which takes the form
\begin{align}\label{Eq:J-function}
J(\btheta)  : &=  r(\btheta).
\end{align}

To maximize the performance function with respect to the policy parameters, \citet{sutton2000policy} proved the following policy gradient theorem.
\begin{lemma}[Policy Gradient] \label{lemma:policy-gradient}
Consider the performance function defined in \eqref{Eq:J-function}, its gradient takes the form
\begin{align*}
    \nabla J(\btheta) 
    & =  
    \EE_{s \sim \mu_{\btheta}(\cdot)}
    \bigg[
    \sum_{a \in \cA}
    Q^{\pi_{\btheta}}(s,a)\nabla \pi (a | s)
    \bigg].
\end{align*}
\end{lemma}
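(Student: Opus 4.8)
The plan is to establish the identity through the Bellman recursion for the value functions together with the fixed-point property of the stationary distribution $\mu_{\btheta}$, thereby avoiding differentiating $\mu_{\btheta}$ itself. Since $J(\btheta)=r(\btheta)$, it suffices to compute $\nabla r(\btheta)$. First I would record the two Bellman identities implied by the definitions of $V^{\pi_{\btheta}}$ and $Q^{\pi_{\btheta}}$, valid for every $s$ and every $(s,a)$:
\begin{align*}
V^{\pi_{\btheta}}(s) = \sum_{a\in\cA}\pi_{\btheta}(a|s)\,Q^{\pi_{\btheta}}(s,a),
\qquad
Q^{\pi_{\btheta}}(s,a) = r(s,a) - r(\btheta) + \EE_{s'\sim\cP(\cdot|s,a)}\big[V^{\pi_{\btheta}}(s')\big].
\end{align*}
Differentiating the first identity with respect to $\btheta$ and substituting the gradient of the second — using that $r(s,a)$ and $\cP(\cdot|s,a)$ do not depend on $\btheta$ — yields, for every $s$,
\begin{align*}
\nabla V^{\pi_{\btheta}}(s)
= \sum_{a\in\cA}Q^{\pi_{\btheta}}(s,a)\,\nabla\pi_{\btheta}(a|s)
- \nabla r(\btheta)
+ \sum_{a\in\cA}\pi_{\btheta}(a|s)\,\EE_{s'\sim\cP(\cdot|s,a)}\big[\nabla V^{\pi_{\btheta}}(s')\big].
\end{align*}
Because $\nabla r(\btheta)=\nabla J(\btheta)$ is a fixed vector that does not depend on $s$, rearranging expresses $\nabla J(\btheta)$ as the sum of a score-weighted $Q$ term and a one-step difference of $\nabla V^{\pi_{\btheta}}$.

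Next I would take expectations over $s\sim\mu_{\btheta}$ and invoke the defining invariance of the stationary distribution: if $s\sim\mu_{\btheta}$, $a\sim\pi_{\btheta}(\cdot|s)$ and $s'\sim\cP(\cdot|s,a)$, then $s'\sim\mu_{\btheta}$ as well, so
\begin{align*}
\EE_{s\sim\mu_{\btheta}}\!\Big[\sum_{a\in\cA}\pi_{\btheta}(a|s)\,\EE_{s'\sim\cP(\cdot|s,a)}\big[\nabla V^{\pi_{\btheta}}(s')\big]\Big]
= \EE_{s\sim\mu_{\btheta}}\big[\nabla V^{\pi_{\btheta}}(s)\big].
\end{align*}
Hence the two $\nabla V^{\pi_{\btheta}}$ contributions cancel after averaging, and what remains is precisely $\nabla J(\btheta)=\EE_{s\sim\mu_{\btheta}}\big[\sum_{a\in\cA}Q^{\pi_{\btheta}}(s,a)\nabla\pi_{\btheta}(a|s)\big]$, which is the claim.

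The only genuine subtlety is justifying the interchanges of $\nabla$ with the infinite series defining $V^{\pi_{\btheta}}$, with the expectation over $\mu_{\btheta}$, and with the transition expectation. This requires $V^{\pi_{\btheta}}$ and $\nabla V^{\pi_{\btheta}}$ to be well-defined and uniformly bounded; both follow from the boundedness of $r$ and the ergodicity/uniform mixing of the Markov chain induced by $\pi_{\btheta}$, together with smoothness of $\btheta\mapsto\pi_{\btheta}$, all of which are in force under the paper's assumptions. Given those regularity facts, the dominated convergence theorem legitimizes every exchange and the cancellation above is rigorous. I expect this boundedness-and-interchange bookkeeping — not the algebra — to be the main obstacle; the alternative route of differentiating $r(\btheta)=\EE_{s\sim\mu_{\btheta}}\big[\sum_{a\in\cA}\pi_{\btheta}(a|s)\,r(s,a)\big]$ term by term is deliberately avoided here because it would force one to control $\nabla\mu_{\btheta}$ explicitly.
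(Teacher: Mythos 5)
Your argument is correct: differentiating the average-reward Bellman identities, solving for $\nabla r(\btheta)$, and averaging over $s \sim \mu_{\btheta}$ so that the $\nabla V^{\pi_{\btheta}}$ terms cancel by stationarity is precisely the classical proof of the average-reward policy gradient theorem in \citet{sutton2000policy}, which is the source the paper cites for this lemma (the paper gives no independent proof of its own). Your closing remarks on the interchange of $\nabla$ with the series and expectations, justified by bounded rewards, uniform ergodicity, and smoothness of $\btheta \mapsto \pi_{\btheta}$, address the only real technical caveat, so nothing essential is missing.
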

The policy gradient also admits a neat form in expectation: 
\begin{align*}
    \nabla J(\btheta)
    & = \EE_{s \sim \mu_{\btheta}(\cdot), a \sim \pi_{\btheta}(\cdot | s)} 
    \big[
    Q^{\pi_{\btheta}}(s,a) 
    \nabla \log \pi_{\btheta}(a|s)
    \big].
\end{align*}
A typical way to estimate the policy gradient $\nabla J(\btheta)$ is by Monte Carlo method, namely using the summed return along the trajectory as the estimated Q-value, which is known as the ``REINFORCE'' method \citep{williams1992simple}.

\begin{remark}
The problem formulation in this paper is what \citet{sutton2000policy} had defined as ``average-reward'' formulation. An alternative formulation is the ``start-state'' formulation, which avoids estimating the average reward, but gives a more complicated form for the policy-gradient algorithm and the AC algorithm.  
\end{remark}

\subsection{REINFORCE with a baseline}

Note that for any function $b(s)$ depending only on the state, which is usually called ``baseline'' function, we have 
\begin{align*}
    \sum_{a \in \cA} b(s) \nabla \pi_{\btheta}(a|s) = b(s) \nabla \bigg( \sum_{a \in \cA} \pi_{\btheta}(a|s) \bigg) = 0.
\end{align*}
 So we also have 
\begin{align*}
    \nabla J(\btheta)
    & = 
    \EE
    \bigg[
    \sum_{a \in \cA}
    \big(
    Q^{\pi_{\btheta}}(s,a) - b(s)
    \big)
    \nabla \pi_{\btheta}(a|s)
    \bigg].
\end{align*}
A popular choice of $b(s)$ is $b(s) = V^{\pi_{\btheta}}(s)$ and $\Delta^{\pi_{\btheta}}(s,a) = Q^{\pi_{\btheta}}(s,a) - V^{\pi_{\btheta}}(s)$ is viewed as the advantage of taking a specific action $a$, compared with the expected reward at state $s$.  Also note that the expectation form still holds:
\begin{align*}
    \nabla J(\btheta)
    & =  
    \EE_{s,a} 
    \big[
    \Delta^{\pi_{\btheta}}(s,a) 
    \nabla \log \pi_{\btheta}(a|s)
    \big].
\end{align*}
Based on this fact, \citet{williams1992simple} also proposed a corresponding policy gradient algorithm named ``REINFORCE with a baseline'' which performs better due to the reduced variance.

In practice the policy gradient method could suffer from high variance.
An alternative approach is to introduce another trainable model to approximate the state-value function, which is called the actor-critic methods.

\subsection{The two time-scale actor-critic algorithm} \label{subsec:alg}
In previous subsection, we have seen how the policy gradient theorem appears in the form of the advantage value instead of the Q-value. 
Assume the critic uses linear function approximation $\hat{V}(\cdot; \bomega) = \bphi^{\top}(\cdot) \bomega$, and is updated by TD(0) algorithm, then this gives rise to Algorithm \ref{alg:2ts_ac} that we are going to analyze.

Algorithm \ref{alg:2ts_ac} has been proposed in many literature, and is clearly introduced in \cite{sutton2018reinforcement} as a classic on-line one-step actor-critic algorithm. It uses the advantage (namely temporal difference error) to update the critic and the actor simultaneously. Based on its on-line nature, this algorithm can be implemented both under episodic and continuing setting.
In practice, the asynchronous variant of this algorithm, called Asynchronous Advantage Actor-Critic(A3C), is an empirically very successful parallel actor-critic algorithm. 

Sometimes, 
Algorithm \ref{alg:2ts_ac} is also called Advantage Actor-Critic (A2C) because it is the synchronous version of A3C and the name indicates its use of advantage instead of Q-value \citep{mnih2016asynchronous}. 
\begin{algorithm}[htbp!]
\caption{Two Time-Scale Actor-Critic 
\label{alg:2ts_ac}} 
\begin{algorithmic}[1]
\STATE \textbf{Input:} initial actor parameter $\btheta_0$, initial critic parameter $\bomega_0$, initial average reward estimator $\eta_0$, step size $\alpha_{t}$ for actor, $\beta_{t}$ for critic and $\gamma_{t}$ for the average reward estimator.

\STATE Draw $s_{0}$ from some initial distribution
\FOR {$t=0,1,2,\dots$}
\STATE Take the action $a_t \sim \pi_{\btheta_t}(\cdot|s_t)$
\STATE Observe next state $s_{t+1} \sim \cP(\cdot|s_t,a_t)$ and the reward $r_t = r(s_t,a_t)$
\STATE $\delta_t = r_t - \eta_t + \bphi(s_{t+1})^{\top} \bomega_{t} - \bphi(s_t)^{\top} \bomega_{t}$
\STATE $\eta_{t+1} = \eta_t + \gamma_t (r_t - \eta_t)$
\STATE $\bomega_{t+1} = \Pi_{R_{\bomega}}\big( \bomega_{t} + \beta_{t} \delta_t \bphi(s_t) \big)$\label{algline:critic_update}
\STATE $\btheta_{t+1} = \btheta_{t} + \alpha_t \delta_t \nabla_{\btheta} \log \pi_{\btheta_{t}}(a_t|s_t)$\label{algline:actor_update}
\ENDFOR 
\end{algorithmic} 
\end{algorithm}
 
In Line 6 of Algorithm \ref{alg:2ts_ac}, the temporal difference error $\delta_t$ can be calculated based on the critic's estimation of the value function $\bphi(\cdot)^{\top} \bomega_t$, where $\bomega_t\in\RR^d$ and $\phi(\cdot):\cS\rightarrow\RR^d$ is a known feature mapping.
Then the critic will be updated using the semi-gradient from TD(0) method. Line 8 in Algorithm \ref{alg:2ts_ac} also contains a projection operator. This is required to control the algorithm's convergence which also appears in some other literature \citep{bhandari2018finite,xu2019two}.
The actor uses the advantage $\delta_t$ (estimated by critic) and the samples to get an estimation of the policy gradient.

Algorithm \ref{alg:2ts_ac} is more general and practical than the algorithms analyzed in many previous work \citep{qiu2019finite,kumar2019sample}. In our algorithm, there is no need for independent samples or samples from the stationary distribution. There is only one naturally generated sample path. Also, the critic inherits from last iteration and continuously updates its parameter, without requiring a restarted sample path (or a new episode).

\section{Main theory} \label{sec:theory}
In this section, we first discuss on some standard assumptions used in the literature for deriving the convergence of reinforcement learning algorithms and then present our theoretical results for two time-scale actor-critic methods.

\subsection{Assumptions and propositions}
We consider the setting where the critic uses TD \citep{sutton2018reinforcement} with linear function approximation to estimate the state-value function, namely $\hat{V}(\cdot; \bomega) = \bphi^{\top}(\cdot) \bomega$. We assume that the feature mapping has bounded norm $\norm{\bphi(\cdot)} \le 1$. Denote by $\bomega^*(\btheta)$ the limiting point of TD(0) algorithms under the behavior policy $\pi_{\btheta}$, and define $\Ab$ and $\bbb$ as:
\begin{align*}
    \Ab &:= \EE_{s,a,s'} \big[ \bphi(s) \big( \bphi(s') - \bphi(s)\big)^{\top} \big], \\
    \bbb &:= \EE_{s,a,s'} [(r(s,a)- r(\btheta)) \bphi(s)],
\end{align*}
where $s \sim \mu_{\btheta}(\cdot), a \sim \pi_{\btheta}(\cdot | s), s' \sim \cP(\cdot | s, a)$.
It is known that the TD limiting point satisfies:
\begin{align*}
    \Ab \bomega^*(\btheta) + \bbb
    & = 
    \mathbf{0}.
\end{align*}
In the sequel, when there is no confusion, we will use a shorthand notation $\bomega^*$ to denote $\bomega^*(\btheta)$.
Based on the complexity of the feature mapping, the approximation error of this function class can vary. 
The approximation error of the linear function class is defined as follows:
\begin{align*}
    \epsilon_{\text{app}}(\btheta) := 
    \sqrt{
    \EE_{s \sim \mu_{\btheta}} \big( \bphi(s)^{\top} \bomega^*(\btheta) - V^{\pi_{\btheta}}(s) \big)^2
    }.
\end{align*} 
Throughout this paper, we assume the approximation error for all potential policies is uniformly bounded, 
\begin{align*}
     \forall \btheta, \epsilon_{\text{app}}(\btheta) \le \epsilon_{\text{app}},
\end{align*}
for some constant $\epsilon_{\text{app}}\geq0$.
 
In the analysis of TD learning, the following assumption is often made to ensure the uniqueness of the limiting point of TD and the problem's solvability. 
\begin{assumption} \label{assum:negative-definite}
    For all potential policy parameters $\btheta$, the matrix $\Ab$ defined above is negative definite and has the maximum eigenvalues as $- \lambda$.
    % \begin{align*}
    %     \Ab \preceq - \lambda \Ib.
    % \end{align*}
\end{assumption}
Assumption \ref{assum:negative-definite} is often made to guarantee the problem's solvability \citep{bhandari2018finite,zou2019finite, xu2019two}. Note that 
Algorithm~\ref{alg:2ts_ac} contains a projection step at Line 8. To guarantee convergence it is required all $\bomega^*$ lie within this projection radius $R_{\bomega}$.  Assumption~\ref{assum:negative-definite} indicates that a sufficient condition is to set $R_{\omega} = 2U_r / \lambda$ because $\norm{\bbb} \le 2 U_r$ and $\norm{\Ab^{-1}} \le \lambda^{-1}$.

The next assumption, first adopted by \citet{bhandari2018finite} in TD learning, addresses the issue of Markovian noise. 
\begin{assumption}[Uniform ergodicity] \label{assum:ergodicity}
     For a fixed $\btheta$, denote $\mu_{\btheta}(\cdot)$ as the stationary distribution induced by the policy $\pi_{\btheta}(\cdot|s)$ and the transition probability measure $\cP(\cdot|s,a)$. Consider a Markov chain generated by the rule $a_t \sim \pi_{\btheta}(\cdot | s_t), s_{t+1} \sim \cP(\cdot | s_t, a_t)$. Then there exists $m > 0$ and $\rho \in (0,1)$ such that:
    \begin{align*}
        d_{TV}\big(\PP(s_{\tau} \in \cdot | s_0 = s), \mu_{\btheta}(\cdot)\big) \le m \rho^{\tau}, \forall \tau \ge 0, \forall s \in \cS.
    \end{align*}
\end{assumption}
%With this assumption, we can remove the unrealistic assumption that each tuple is drawn from the stationary-state distribution, at the cost of only logarithm factors in the convergence rate.  

We also need some regularity assumptions on the policy.
\begin{assumption} \label{assum:policy-lipschitz-bounded}
Let $\pi_{\btheta}(a|s)$ be a policy parameterized by $\btheta$. There exist constants $L,B,L_l>0$ such that for all given state $s$ and action $a$ it holds 
\begin{enumerate}
\item[(a)] $\big\|\nabla \log \pi_{\btheta}(a|s) \big\| \le B$, $\forall \btheta \in \RR^d$,
\item[(b)] $\big\|\nabla \log \pi_{\btheta_1}(a|s) - \nabla \log \pi_{\btheta_2}(a|s) \big\| \le L_{l} \norm{\btheta_1 - \btheta_2}$, $\forall \btheta_1,\btheta_2 \in \RR^d$, 
\item[(c)] $\big|\pi_{\btheta_1}(a|s) - \pi_{\btheta_2}(a|s) \big| \le L \norm{\btheta_1 - \btheta_2}$, $\forall \btheta_1,\btheta_2 \in \RR^d$.
\end{enumerate}
\end{assumption}
The first two inequalities are regularity conditions to guarantee actor's convergence in the literature of policy gradient \citep{papini2018stochastic,zhang2019global, kumar2019sample,xu2019improved,xu2020sample}. The last inequality in Assumption \ref{assum:policy-lipschitz-bounded} is also adopted by \citet{zou2019finite} when analyzing SARSA. %Note that this assumption is less restrictive in our setting than in \citet{zou2019finite}, because here the Lipschitzness depends on the function class of the policy $\pi_{\btheta}$. For example, a ReLU neural network with bounded parameters, followed by softmax mapping will suffice. Meanwhile, for SARSA, the policy will choose the action with highest Q-value with high probability, and the optimal action can change even after a small change of the parameter. 

An important fact arises from our assumptions is that the limiting point $\bomega^*$ of TD(0) , which can be viewed as a mapping of the policy's parameter $\btheta$, is Lipschitz.
\begin{proposition} \label{prop:optimal-lipschitz}
    Under Assumptions \ref{assum:negative-definite} and \ref{assum:ergodicity}, there exists a constant $L_*>0$ such that
\begin{align*}
    \big \|
    \bomega^{*}(\btheta_{1}) - \bomega^{*}(\btheta_{2})
    \big\|
    \le L_{*} \norm{\btheta_{1} - \btheta_{2}}, \forall \btheta_{1}, \btheta_{2}\in\RR^d. 
\end{align*}
\end{proposition}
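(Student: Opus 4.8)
The plan is to start from the closed-form characterization of the TD(0) fixed point and reduce the Lipschitz claim to Lipschitz continuity of the problem data $\Ab(\btheta)$ and $\bbb(\btheta)$. By Assumption~\ref{assum:negative-definite} the matrix $\Ab=\Ab(\btheta)$ is invertible with $\norm{\Ab(\btheta)^{-1}}\le\lambda^{-1}$ for every $\btheta$, so the fixed-point relation $\Ab(\btheta)\bomega^*(\btheta)+\bbb(\btheta)=\zero$ gives $\bomega^*(\btheta)=-\Ab(\btheta)^{-1}\bbb(\btheta)$; moreover $\norm{\bbb(\btheta)}\le 2U_r$ since $\norm{\bphi(\cdot)}\le 1$ and $|r(s,a)-r(\btheta)|\le 2U_r$. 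Writing $\Ab_i=\Ab(\btheta_i)$, $\bbb_i=\bbb(\btheta_i)$ and using the resolvent identity $\Ab_1^{-1}-\Ab_2^{-1}=\Ab_1^{-1}(\Ab_2-\Ab_1)\Ab_2^{-1}$,
\begin{align*}
\bomega^*(\btheta_1)-\bomega^*(\btheta_2)
= -\Ab_1^{-1}(\bbb_1-\bbb_2)-\Ab_1^{-1}(\Ab_2-\Ab_1)\Ab_2^{-1}\bbb_2 ,
\end{align*}
so $\norm{\bomega^*(\btheta_1)-\bomega^*(\btheta_2)}\le \lambda^{-1}\norm{\bbb_1-\bbb_2}+2U_r\lambda^{-2}\norm{\Ab_1-\Ab_2}$, and it suffices to bound $\norm{\Ab_1-\Ab_2}$ and $\norm{\bbb_1-\bbb_2}$ by a constant times $\norm{\btheta_1-\btheta_2}$.

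The heart of the argument is the Lipschitz continuity of the visitation law. Both $\Ab(\btheta)$ and $\bbb(\btheta)$ are expectations of uniformly bounded quantities under the joint distribution $\nu_\btheta$ of $(s,a,s')$ with $s\sim\mu_\btheta$, $a\sim\pi_\btheta(\cdot\mid s)$, $s'\sim\cP(\cdot\mid s,a)$, so I would first prove the lemma $d_{TV}(\nu_{\btheta_1},\nu_{\btheta_2})\le C_\nu\norm{\btheta_1-\btheta_2}$. For this, observe that the one-step state kernels $T_\btheta(s'\mid s)=\sum_a\pi_\btheta(a\mid s)\cP(s'\mid s,a)$ satisfy $d_{TV}\big(T_{\btheta_1}(\cdot\mid s),T_{\btheta_2}(\cdot\mid s)\big)\le \frac{|\cA|L}{2}\norm{\btheta_1-\btheta_2}$ by Assumption~\ref{assum:policy-lipschitz-bounded}(c) and $|\cA|<\infty$; then telescope $\mu_{\btheta_1}-\mu_{\btheta_2}$ over these two kernels and use the uniform geometric mixing of Assumption~\ref{assum:ergodicity} to sum the resulting geometric series, which yields a $\btheta$-independent constant $C_\nu$ depending only on $m,\rho,L,|\cA|$; a direct corollary is $|r(\btheta_1)-r(\btheta_2)|\le 2U_r\,d_{TV}(\nu_{\btheta_1},\nu_{\btheta_2})\le 2U_rC_\nu\norm{\btheta_1-\btheta_2}$.

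With this in hand I would finish using the elementary bound $|\EE_{\nu_{\btheta_1}}f-\EE_{\nu_{\btheta_2}}f|\le 2\norm{f}_\infty\,d_{TV}(\nu_{\btheta_1},\nu_{\btheta_2})$. The integrand defining $\Ab$, namely $\bphi(s)(\bphi(s')-\bphi(s))^\top$, does not depend on $\btheta$ and has operator norm at most $2$, so $\norm{\Ab_1-\Ab_2}\le 4\,d_{TV}(\nu_{\btheta_1},\nu_{\btheta_2})\le 4C_\nu\norm{\btheta_1-\btheta_2}$. For $\bbb$, split $\bbb_1-\bbb_2$ into the change-of-integrand term $\EE_{\nu_{\btheta_1}}[(r(\btheta_2)-r(\btheta_1))\bphi(s)]$, bounded by $2U_rC_\nu\norm{\btheta_1-\btheta_2}$, and the change-of-measure term $\EE_{\nu_{\btheta_1}}[(r-r(\btheta_2))\bphi(s)]-\EE_{\nu_{\btheta_2}}[(r-r(\btheta_2))\bphi(s)]$, whose integrand has norm at most $2U_r$ and is therefore bounded by $4U_r\,d_{TV}(\nu_{\btheta_1},\nu_{\btheta_2})\le 4U_rC_\nu\norm{\btheta_1-\btheta_2}$; hence $\norm{\bbb_1-\bbb_2}\le 6U_rC_\nu\norm{\btheta_1-\btheta_2}$. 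Substituting the two bounds into the estimate of the first paragraph produces an explicit $L_*=6U_rC_\nu/\lambda+8U_r^2C_\nu/\lambda^2$.

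The main obstacle is the visitation-law lemma: establishing that the stationary distribution $\mu_\btheta$ (and hence $\nu_\btheta$) is Lipschitz in $\btheta$ with a constant uniform over all policies that can arise. Assumption~\ref{assum:ergodicity} is precisely what makes the infinite-horizon telescoping sum summable — without uniform mixing a small policy perturbation could accumulate indefinitely over the horizon — and some care is needed to check that the same $(m,\rho)$ can be invoked along the interpolation between $\btheta_1$ and $\btheta_2$, so that $C_\nu$ does not depend on the particular parameters. Once that is in place, the remaining steps are routine manipulations with bounded integrands.
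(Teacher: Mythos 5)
Your proposal is correct and follows essentially the same route as the paper: write $\bomega^*(\btheta)=-\Ab(\btheta)^{-1}\bbb(\btheta)$, use the resolvent identity together with $\norm{\Ab^{-1}}\le\lambda^{-1}$, and reduce everything to a bound of the form $d_{TV}(\nu_{\btheta_1},\nu_{\btheta_2})\le C_\nu\norm{\btheta_1-\btheta_2}$ for the stationary $(s,a,s')$ law, which controls $\norm{\Ab_1-\Ab_2}$ and $\norm{\bbb_1-\bbb_2}$ exactly as in the paper's proof. The only difference is that you sketch a direct telescoping-plus-uniform-mixing proof of that total-variation Lipschitz bound (also relying, as the paper does, on Assumption \ref{assum:policy-lipschitz-bounded}(c)), whereas the paper invokes it as Lemma \ref{lemma:prob-mixing}, citing \citet{zou2019finite} and Mitrophanov's perturbation theorem, which is proved by the very argument you outline.
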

Proposition \ref{prop:optimal-lipschitz} states that the target point $\bomega^*$ moves slowly compared with the actor's update on $\btheta$. This is an observation pivotal to the two time-scale analysis. Specifically, the two time-scale analysis can be informally described as ``the actor moves slowly while the critic chases the slowly moving target determined by the actor''.

%\section{Main Theory}
Now we are ready to present the convergence result of two time-scale actor-critic methods. We first define an integer that depends on the learning rates $\alpha_t$ and $\beta_t$.
\begin{align}\label{eq:def_mixing_time}
    \tau_t & := 
    \min 
    \big \{
    i \ge 0 |
    m \rho^{i-1} \le
    \min \{ \alpha_t, \beta_t \}
    \big \},
\end{align}
where $m,\rho$ are defined as in Assumption \ref{assum:ergodicity}. By definition, $\tau_t$ is a mixing time of an ergodic Markov chain. We will use $\tau_t$ to control the Markovian noise encountered in the training process.

\subsection{Convergence of the actor}
At the $k$-th iteration of the actor's update, $\bomega_k$ is the critic parameter estimated by Line 7 of Algorithm \ref{alg:2ts_ac} and $\bomega_k^*$ is the unknown parameter of value function $V^{\pi_{\btheta_k}}(\cdot)$ defined in Assumption \ref{assum:negative-definite}. The following theorem gives the convergence rate of the actor %\ref{thm:actor} isolates the influence on the actor from the critic, by 
when the averaged mean squared error between $\bomega_k$ and $\bomega_k^*$ and the error between $\eta_k$ and $r(\btheta_k)$ from $k=\tau_t$ to $k=t$ are small.
\begin{theorem} \label{thm:actor}
Suppose Assumptions \ref{assum:negative-definite}-\ref{assum:policy-lipschitz-bounded} hold and we choose $\alpha_{t} = c_{\alpha}/(1+t)^{\sigma}$ in Algorithm \ref{alg:2ts_ac}, where $\sigma\in(0,1)$ and $c_\alpha>0$ are  constants. If we assume at the $t$-th iteration, the critic satisfies
\begin{align}\label{eq:def_average_critic}
    \frac{8}{t} 
    \sum_{k=1}^{t}
    \EE \|\bomega_k - \bomega^*_k\|^2 
    +
    \frac{2}{t} 
    \sum_{k=1}^{t}
    \EE \big(\eta_k - r(\btheta_k)\big)^2 &=
    \cE(t),
\end{align}
where $\cE(t)$ is a bounded sequence,
then we have
\begin{align*}
    \min_{0\leq k \le t} \EE \big\|\nabla J(\btheta_{k})\big\|^2 
    &=
    \cO(\epsilon_{\text{app}})
    +
    \cO\bigg(\frac{1}{t^{1- \sigma}}\bigg)
    +
    \cO\bigg(\frac{\log^2 t}{t^{\sigma}}\bigg)
    +
    \cO \big(\cE(t) \big),
\end{align*}
where $\cO(\cdot)$ hides constants, whose exact forms can be found in the detailed proof in Appendix \ref{subsec:proof-actor}. 
\end{theorem}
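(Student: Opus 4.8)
The plan is to treat the actor update in Algorithm~\ref{alg:2ts_ac} as biased stochastic gradient ascent on the smooth, non-concave function $J$, and to run the standard descent-lemma bookkeeping while tracking three sources of bias in the update direction $h_t := \delta_t\nabla\log\pi_{\btheta_t}(a_t|s_t)$: the critic estimation error (fed to us through $\cE(t)$), the linear function-approximation error (controlled by $\epsilon_{\text{app}}$), and the Markovian sampling bias (controlled by the mixing time $\tau_t$ of \eqref{eq:def_mixing_time}). The min over $k$ on the left is the usual way to state a first-order stationary guarantee for a non-concave objective.

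First I would collect the static facts. Since $J(\btheta)=r(\btheta)$ it is bounded by $U_r$; using Assumption~\ref{assum:policy-lipschitz-bounded}, $\|\bphi(\cdot)\|\le 1$, the projection $\|\bomega_t\|\le R_{\bomega}$, and $|\eta_t|,|r_t|\le U_r$, the TD error $\delta_t$ and hence $h_t$ are uniformly bounded, $\|h_t\|\le C_h$; from Lemma~\ref{lemma:policy-gradient} and Assumption~\ref{assum:policy-lipschitz-bounded}, $\nabla J$ is bounded and $L_J$-Lipschitz; and from Proposition~\ref{prop:optimal-lipschitz} together with routine estimates, $\bomega^*(\btheta)$, $V^{\pi_\btheta}(\cdot)$ and $r(\btheta)$ are Lipschitz in $\btheta$. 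Smoothness of $J$ then gives
\begin{align*}
J(\btheta_{t+1}) \ge J(\btheta_t) + \alpha_t\langle\nabla J(\btheta_t),h_t\rangle - \tfrac{L_JC_h^2}{2}\alpha_t^2,
\end{align*}
and writing $\langle\nabla J(\btheta_t),h_t\rangle = \|\nabla J(\btheta_t)\|^2 + \langle\nabla J(\btheta_t),h_t-\nabla J(\btheta_t)\rangle$, everything reduces to bounding $\EE\langle\nabla J(\btheta_t),h_t-\nabla J(\btheta_t)\rangle$. I would decompose this along the chain $\delta_t\to\hat\delta_t\to\delta_t^*$, where $\hat\delta_t := r_t - r(\btheta_t) + (\bphi(s_{t+1})-\bphi(s_t))^\top\bomega_t^*$ and $\delta_t^* := r_t - r(\btheta_t) + V^{\pi_{\btheta_t}}(s_{t+1}) - V^{\pi_{\btheta_t}}(s_t)$. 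The gap $\delta_t-\hat\delta_t = -(\eta_t-r(\btheta_t)) + (\bphi(s_{t+1})-\bphi(s_t))^\top(\bomega_t-\bomega_t^*)$, so Cauchy--Schwarz plus Young's inequality turns its contribution into a $\tfrac14\|\nabla J(\btheta_t)\|^2$ term that is absorbed on the left, plus a fixed multiple of $\EE\|\bomega_t-\bomega_t^*\|^2 + \EE(\eta_t-r(\btheta_t))^2$ — the constants $8$ and $2$ in \eqref{eq:def_average_critic} are chosen precisely so that this reproduces $\cE(t)$ after summation. The gap $\hat\delta_t-\delta_t^*$ is the realized approximation error and, once the mixing argument below puts the state marginal close to $\mu_{\btheta_t}$ so that the definition of $\epsilon_{\text{app}}$ applies, contributes $\cO(\epsilon_{\text{app}})$. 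Finally $\delta_t^*\nabla\log\pi_{\btheta_t}(a_t|s_t) - \nabla J(\btheta_t)$ has zero mean under $s_t\sim\mu_{\btheta_t}, a_t\sim\pi_{\btheta_t}, s_{t+1}\sim\cP$ by the advantage form of the policy gradient theorem, so it is pure Markovian bias.

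Bounding this last term is the main obstacle, because the behavior policy $\pi_{\btheta_t}$ itself drifts every step. I would use the ``look back $\tau_t$ steps'' technique. Set $\Lambda(\btheta,O):=\delta^*(O,\btheta)\nabla\log\pi_\btheta(a|s)-\nabla J(\btheta)$ with $\delta^*((s,a,s'),\btheta):=r(s,a)-r(\btheta)+V^{\pi_\btheta}(s')-V^{\pi_\btheta}(s)$, which is bounded and Lipschitz in $\btheta$ by Step~1 and has zero mean under the stationary law. For $t\ge\tau_t$, insert $\btheta_{t-\tau_t}$ and split $\langle\nabla J(\btheta_t),\Lambda(\btheta_t,O_t)\rangle$ into $\langle\nabla J(\btheta_t),\Lambda(\btheta_t,O_t)-\Lambda(\btheta_{t-\tau_t},O_t)\rangle + \langle\nabla J(\btheta_t)-\nabla J(\btheta_{t-\tau_t}),\Lambda(\btheta_{t-\tau_t},O_t)\rangle + \langle\nabla J(\btheta_{t-\tau_t}),\Lambda(\btheta_{t-\tau_t},O_t)\rangle$. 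The first two terms are $\cO(\tau_t\alpha_{t-\tau_t})$ using the Lipschitz bounds and $\|\btheta_t-\btheta_{t-\tau_t}\|\le C_h\sum_{j=t-\tau_t}^{t-1}\alpha_j$. For the third, condition on $\cF_{t-\tau_t}$ and compare the conditional law of $O_t$ with $\mu_{\btheta_{t-\tau_t}}\times\pi_{\btheta_{t-\tau_t}}\times\cP$ (under which $\Lambda$ is mean zero): the discrepancy splits into a mixing part $\le m\rho^{\tau_t}\le\alpha_t$ for an auxiliary chain run with the frozen policy $\pi_{\btheta_{t-\tau_t}}$ — this is exactly the purpose of \eqref{eq:def_mixing_time} — plus a policy-drift part $\sum_{j=t-\tau_t}^{t}L\|\btheta_j-\btheta_{t-\tau_t}\| = \cO(\tau_t^2\alpha_{t-\tau_t})$ bounded via Assumption~\ref{assum:policy-lipschitz-bounded}(c). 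Hence $\EE\langle\nabla J(\btheta_t),\Lambda(\btheta_t,O_t)\rangle = \cO(\alpha_t+\tau_t^2\alpha_{t-\tau_t})$, and the finitely many indices with $t<\tau_t$ contribute only $\cO(\log t)$ in total and are handled crudely.

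To finish, I would assemble the per-step inequality
\begin{align*}
\tfrac12\alpha_t\EE\|\nabla J(\btheta_t)\|^2 \le \EE[J(\btheta_{t+1})-J(\btheta_t)] + \cO(\alpha_t^2) + \cO(\alpha_t\tau_t^2\alpha_{t-\tau_t}) + \cO(\alpha_t\epsilon_{\text{app}}) + c\,\alpha_t\big(\EE\|\bomega_t-\bomega_t^*\|^2 + \EE(\eta_t-r(\btheta_t))^2\big),
\end{align*}
sum over $t$, telescope $J$ (using $|J|\le U_r$), and divide by $\sum_t\alpha_t = \Theta(t^{1-\sigma})$, which upper bounds $\min_{0\le k\le t}\EE\|\nabla J(\btheta_k)\|^2$. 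The telescoped term yields $\cO(t^{-(1-\sigma)})$; the $\sum\alpha_t^2$ and $\sum\alpha_t\tau_t^2\alpha_{t-\tau_t}$ terms yield $\cO(\log^2 t/t^\sigma)$ using $\tau_t=\cO(\log t)$ and $\alpha_{t-\tau_t}=\cO(\alpha_t)$; the approximation term yields $\cO(\epsilon_{\text{app}})$; and the weighted critic-error sum $\big(\sum_k\alpha_k\big)^{-1}\sum_k\alpha_k(\EE\|\bomega_k-\bomega_k^*\|^2+\EE(\eta_k-r(\btheta_k))^2)$ is related to $\cE(t)$ by an Abel-summation argument exploiting that $\alpha_t$ is non-increasing and that $\cE(\cdot)$ is a bounded sequence, giving the $\cO(\cE(t))$ term. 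Summing the four contributions gives exactly the claimed bound. The genuinely delicate step is the changing-policy mixing argument of the previous paragraph; the quantitative control of $\cE(t)$ itself — the heart of the two time-scale coupling — is deferred to the separate critic analysis.
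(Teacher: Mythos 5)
Your overall architecture is the same as the paper's: the descent lemma for the $L_J$-smooth $J$, the three-way split of $\delta_t\nabla\log\pi_{\btheta_t}(a_t|s_t)$ into critic error, linear-approximation error and Markovian bias, and the frozen-policy auxiliary chain with look-back $\tau_t$ correspond exactly to Lemmas \ref{lemma:J-smooth}, \ref{lemma:critic-bias} and \ref{lemma:actor-markovian}; absorbing the critic cross term by Young's inequality instead of the paper's device of solving the quadratic inequality $(\sqrt{F(t)}-B\sqrt{Z(t)})^2\le A(t)+B^2Z(t)$ is a harmless simplification, and your handling of the Markovian term is the same drift-plus-mixing argument.

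The gap is in your final normalization. After summing your per-step inequality and dividing by $\sum_k\alpha_k$, the critic contribution is the \emph{weighted} average $\big(\sum_{k}\alpha_k\big)^{-1}\sum_k\alpha_k\big(\EE\|\bomega_k-\bomega_k^*\|^2+\EE(\eta_k-r(\btheta_k))^2\big)$, and you assert that Abel summation, monotonicity of $\alpha_k$, and boundedness of $\cE$ turn this into $\cO(\cE(t))$. That implication is false: decreasing weights over-weight early iterates relative to the uniform average in \eqref{eq:def_average_critic}. Concretely, if the (uniformly bounded) errors equal a constant $A$ for $k\le t_0$ and vanish afterwards, then $\cE(t)=At_0/t$ while the $\alpha$-weighted average is of order $A(t_0/t)^{1-\sigma}$; with $t=t_0^2$ and $\sigma=3/5$ this is of order $t_0^{-2/5}$, which is not $\cO(\cE(t))+\cO(t^{-(1-\sigma)})+\cO(\log^2 t\, t^{-\sigma})$ with constants independent of the error sequence. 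So your route proves a strictly weaker statement than the theorem, whose only assumption on the critic is that $\cE$ is bounded. The repair is the paper's normalization: divide the per-step inequality by $\alpha_t$ \emph{before} summing, so that $\EE\|\nabla J(\btheta_k)\|^2$ and the critic errors enter with unit weights; the price is the term $\sum_k\alpha_k^{-1}\big(\EE[J(\btheta_{k+1})]-\EE[J(\btheta_k)]\big)$, which Abel summation together with $|J|\le U_r$ bounds by $2U_r/\alpha_t$, and after dividing by $1+t-\tau_t$ this is exactly the $\cO(1/t^{1-\sigma})$ term; the resulting unweighted critic average over $[\tau_t,t]$ is then compared to $\cE(t)$ (an average over $[1,t]$) via Lemma \ref{lemma:equi-asym}. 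A minor further remark: the approximation term is bounded directly as $\EE\la\nabla J(\btheta_t),\Delta h'\ra\ge -G_{\btheta}\sqrt{\EE\|\Delta h'\|^2}\ge-2BG_{\btheta}\epsilon_{\text{app}}$, without routing it through the mixing argument as you suggest; this does not affect correctness but keeps the Markovian bookkeeping confined to the $\Gamma$ term.
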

Note that $\cE(t)$ in Theorem~\ref{thm:actor} is the averaged estimation error made by the critic throughout the learning process, which will be bounded in the next Theorem~\ref{thm:critic}. %Later in Theorem~\ref{thm:critic}, we will bound $\cE(t)$ in \eqref{eq:critic_converge_a} and \eqref{eq:critic_converge_b}. 
%\begin{remark}
%In \eqref{eq:def_average_critic}, we assume the averaged mean squared error of critic from time steps $\tau_t$ to $t$ is bounded, which we will prove in the next section. The choice of the starting point as $\tau_t$ instead of $0$ might seem artificial compared to a natural choice of $0$.This is purely for a neat presentation of the proof. Compared to the scale of $t$, $\tau_t = \cO(\log t)$ is insignificant. Actually it is shown (in Supplementary \ref{subsec:proof-equi-asym}) that for any bounded sequence $\{ a_k \}$, the two means are equivalent.
%\end{remark}

\begin{remark}
Theorem \ref{thm:actor} recovers the results for the decoupled case \citep{qiu2019finite,kumar2019sample} by setting $\sigma =1/2$. Nevertheless, we are considering a much more practical and challenging case where the actor and critic are simultaneously updated under Markovian noises. It is worth noting that the non-i.i.d. data assumption leads to an additional logarithm term, which is also observed in \cite{bhandari2018finite,zou2019finite,srikant2019finite,chen2019performance}. 
% In the decoupled case, the actor-critic algorithms in \citet{kumar2019sample,qiu2019finite} need to use extra samples at each iteration of the actor's update in order to ensure a small estimation error of the critic, which will essentially significantly increases the sample complexity. More details will be discussed in Section \ref{sec:theory_complexity}. 
\end{remark}

\subsection{Convergence of the critic}
The condition in \eqref{eq:def_average_critic} is guaranteed by the following theorem that  characterizes the convergence of the critic.
\begin{theorem} \label{thm:critic}
Suppose Assumptions \ref{assum:negative-definite}-\ref{assum:policy-lipschitz-bounded} hold and we choose $\alpha_{t} = c_{\alpha}/(1+t)^{\sigma}$ and $\beta_{t} = c_{\beta}/(1+t)^{\nu}$ in Algorithm \ref{alg:2ts_ac}, where $0 < \nu < \sigma < 1$, $c_{\alpha}$ and $c_{\beta} \le \lambda^{-1}$ are positive constants. Then we have 
\begin{align}
    \frac{1}{1+t-\tau_t} \sum_{k=\tau_t}^{t} \EE \norm{\bomega_k - \bomega^*_k}^2
    & = 
    \cO
    \bigg(\frac{1}{t^{1-\nu}}
    \bigg)
    +
    \cO
    \bigg(\frac{\log t}{t^{\nu}}
    \bigg)
    +
    \cO
    \bigg(\frac{1}{t^{2(\sigma - \nu)}}
    \bigg), \label{eq:critic_converge_a}\\
    \frac{1}{1+t-\tau_t} \sum_{k=\tau_t}^{t} \EE \big(\eta_k - r(\btheta_k)\big)^2 
    & = 
    \cO
    \bigg(\frac{1}{t^{1-\nu}}
    \bigg)
    +
    \cO
    \bigg(\frac{\log t}{t^{\nu}}
    \bigg)
    +
    \cO
    \bigg(\frac{1}{t^{2(\sigma - \nu)}}
    \bigg),\label{eq:critic_converge_b}
\end{align}
where $\cO(\cdot)$ hides constants, whose exact forms can be found in the detailed proof in Appendix \ref{subsec:proof-eta} and \ref{subsec:proof-critic}.
\end{theorem}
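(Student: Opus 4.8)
The plan is to derive a one-step drift inequality for a Lyapunov function that combines the critic error $z_t:=\bomega_t-\bomega_t^*$ and the average-reward error $y_t:=\eta_t-r(\btheta_t)$, to control the Markovian bias through the mixing time $\tau_t$ defined in \eqref{eq:def_mixing_time}, and then to telescope the drift inequality before substituting the prescribed step sizes. As preliminaries I would establish uniform boundedness of all iterates: $\|\bomega_t\|\le R_{\bomega}$ by the projection, $|\eta_t|\le U_r$ by writing $\eta_{t+1}=(1-\gamma_t)\eta_t+\gamma_t r_t$ as a convex combination, hence $|\delta_t|\le U_\delta$ for an absolute constant, $\|z_t\|\le 2R_{\bomega}$ and $|y_t|\le 2U_r$; I would also record that $\|\btheta_{t+1}-\btheta_t\|\le C\alpha_t$, so by Proposition~\ref{prop:optimal-lipschitz} $\|\bomega_{t+1}^*-\bomega_t^*\|\le L_* C\alpha_t$, and more generally $\|\btheta_t-\btheta_{t-\tau_t}\|$ and $\|z_t-z_{t-\tau_t}\|$ are $\cO(\tau_t\beta_{t-\tau_t})$ over one mixing window (using $\beta_t\ge\alpha_t$).

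Using non-expansiveness of the projection — valid since both $\bomega_t$ and $\bomega_t^*$ lie in the ball of radius $R_{\bomega}$ — and the triangle inequality to absorb the target shift $\bomega_{t+1}^*-\bomega_t^*$, I would expand $\EE\|z_{t+1}\|^2$ and identify the mean-path update $\EE_{\mu_{\btheta_t}}[\delta_t\bphi(s_t)]=\Ab z_t-y_t\bar{\bphi}_t$, where $\bar{\bphi}_t=\EE_{\mu_{\btheta_t}}[\bphi(s_t)]$, using $\Ab\bomega_t^*+\bbb=\zero$. Assumption~\ref{assum:negative-definite} then supplies the negative drift $-2\lambda\beta_t\|z_t\|^2$; the coupling term $-2\beta_t y_t z_t^\top\bar{\bphi}_t$ is split by Young's inequality into a piece absorbed into the drift and a piece $\cO(\beta_t)\EE y_t^2$; the variance term is $\cO(\beta_t^2)$; and the target-shift cross term contributes $\cO(\alpha_t^2/\beta_t)$ after Young. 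The analogous, simpler computation for $y_{t+1}=(1-\gamma_t)y_t+\gamma_t(r_t-r(\btheta_t))+(r(\btheta_t)-r(\btheta_{t+1}))$ — noting $r(\btheta_t)-r(\btheta_{t+1})=\cO(\alpha_t)$ since $J=r$ has bounded gradient — yields a negative drift $-\Theta(\gamma_t)y_t^2$ plus $\cO(\gamma_t^2)$ and $\cO(\alpha_t^2/\gamma_t)$. Taking $\gamma_t$ on the same fast time-scale as $\beta_t$ and setting $V_t=\|z_t\|^2+c\,y_t^2$ for a suitable constant $c$, these combine into
\[
\EE V_{t+1}\le(1-c'\beta_t)\EE V_t+\mathrm{bias}_t+\cO(\beta_t^2\tau_t)+\cO(\alpha_t^2/\beta_t).
\]

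The main obstacle — and where I expect to spend the most care — is bounding $\mathrm{bias}_t$, the gap between the true conditional update $\EE[\delta_t\bphi(s_t)\mid\cF_{t-\tau_t}]$ (along the single trajectory generated by the time-varying policy) and the mean-path update at the current parameters, paired against $z_t$. After replacing $z_t$ by $z_{t-\tau_t}$ at cost $\cO(\tau_t\beta_{t-\tau_t})$, I would decompose the remaining bias into (i) mixing of the \emph{fixed} chain for policy $\pi_{\btheta_{t-\tau_t}}$, bounded by $m\rho^{\tau_t}\le\min\{\alpha_t,\beta_t\}$ via Assumption~\ref{assum:ergodicity} and the definition of $\tau_t$; (ii) the drift of the policy over the window, $\cO(\tau_t\alpha_{t-\tau_t})$, via Lipschitz continuity of $\btheta\mapsto\mu_{\btheta}$ in total variation (a consequence of Assumptions~\ref{assum:ergodicity} and \ref{assum:policy-lipschitz-bounded}(c), as already used for Proposition~\ref{prop:optimal-lipschitz}) together with a coupling argument; and (iii) the drift of $(\btheta_t,\bomega_t,\eta_t)$ from $(\btheta_{t-\tau_t},\bomega_{t-\tau_t},\eta_{t-\tau_t})$, which is $\cO(\tau_t\beta_{t-\tau_t})$ since $\Ab$, $\bbb$ and $\bar{\bphi}$ are Lipschitz in $\btheta$. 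Because $\|z_t\|$ is \emph{already} uniformly bounded, I can pair the bias directly against $z_t$ without any iterative-refinement bootstrap — which is precisely how the spurious $\cO(t^\xi)$ factor of \citet{xu2019two} is avoided — giving $\mathrm{bias}_t=\cO(\beta_t\tau_t\beta_{t-\tau_t})=\cO(\beta_t^2\tau_t)$ since $\beta_t$ is slowly varying; the same argument bounds the Markovian bias in the $y$-recursion.

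Finally, rearranging the drift inequality into $c'\beta_t\EE V_t\le\EE V_t-\EE V_{t+1}+b_t$ with $b_t=\cO(\beta_t^2\tau_t)+\cO(\alpha_t^2/\beta_t)$, summing from $\tau_t$ to $t$, and using monotonicity of $\beta_k$ yields $\frac{1}{1+t-\tau_t}\sum_{k=\tau_t}^t\EE V_k\le\big(\EE V_{\tau_t}+\sum_{k=\tau_t}^t b_k\big)/\big(c'\beta_t(1+t-\tau_t)\big)$. Substituting $\alpha_t=c_\alpha/(1+t)^\sigma$ and $\beta_t\asymp\gamma_t=c_\beta/(1+t)^\nu$, using $\EE V_{\tau_t}=\cO(1)$, $\tau_t=\cO(\log t)$, and the standard estimates $\sum_{k\le t}\beta_k^2\tau_k=\cO(\max\{1,t^{1-2\nu}\}\log t)$ and $\sum_{k\le t}\alpha_k^2/\beta_k=\cO(\max\{1,t^{1-2\sigma+\nu}\})$, and dividing by $\beta_t(1+t-\tau_t)\asymp t^{1-\nu}$, produces exactly the three claimed terms $\cO(t^{-(1-\nu)})+\cO(t^{-\nu}\log t)+\cO(t^{-2(\sigma-\nu)})$. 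Since $\|z_k\|^2\le V_k$ and $y_k^2\le c^{-1}V_k$, this establishes both \eqref{eq:critic_converge_a} and \eqref{eq:critic_converge_b} simultaneously.
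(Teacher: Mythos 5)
Your proposal is correct in substance and shares the paper's skeleton — expand the squared error, extract the $-2\lambda\beta_t\|\zb_t\|^2$ drift from Assumption \ref{assum:negative-definite}, control the Markovian bias by conditioning $\tau_t$ steps back and comparing against the frozen-policy auxiliary chain via Assumption \ref{assum:ergodicity}, handle the moving target $\bomega_t^*$ through Proposition \ref{prop:optimal-lipschitz}, then telescope and substitute the step sizes (your identity $\EE_{\mu_{\btheta_t}}[\delta_t\bphi(s_t)]=\Ab\zb_t-y_t\bar\bphi_t$ is exactly the paper's split into $\bar g$ and $\Delta g$). Where you genuinely diverge is in how the coupled recursions are closed. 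The paper treats $y_t$ and $\zb_t$ sequentially: it first bounds $\sum_k\EE[y_k^2]$, then in the $\zb$-recursion it keeps the coupling term and the target-drift term linear in $\sqrt{\EE\|\zb_k\|^2}$, applies Cauchy--Schwarz across the sum ($I_3$, $I_4$ in \eqref{eqn:critic-bone}), and solves a quadratic inequality in $\sqrt{Z(t)}$ (the same ``squaring'' device as in Theorem \ref{thm:actor}). You instead form a single Lyapunov function $V_t=\|\zb_t\|^2+c\,y_t^2$, absorb both the $y_t$-coupling and the $\cO(\alpha_t)$ target shift into the negative drift via Young's inequality (producing the per-step $\cO(\alpha_t^2/\beta_t)$ term), and close with a plain linear drift-plus-telescope argument, dividing by $\beta_t(1+t-\tau_t)$ at the end rather than by $\beta_k$ per step. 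Your route avoids the quadratic-inequality step and the two-stage structure, at the mild cost of having to verify that the weight $c$ and the Young parameters can be chosen — which works here precisely because $\gamma_t$ and $\beta_t$ are on the same $\cO(t^{-\nu})$ scale (as the paper also requires) and $c_\beta\le\lambda^{-1}$; both routes exploit the same key observation that the projection makes $\|\zb_t\|$ uniformly bounded, so the bias can be paired directly against $\zb_t$ without the iterative refinement of \citet{xu2019two}. The only bookkeeping caveats are cosmetic: your per-step bias is more precisely $\cO(\beta_t^2\tau_t^2)$ once the $\tau_t^2\alpha_{t-\tau_t}$ contribution from the drifting policy is included (the paper's appendix likewise carries $(\log t)^2$ where the theorem statement writes $\log t$), and when $\nu\ge1/2$ your bias sum contributes $\cO(\log t/t^{1-\nu})$ rather than $\cO(\log t/t^{\nu})$, again only a logarithmic-factor discrepancy against the first stated term; neither affects the theorem or the resulting sample complexity.
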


%\begin{remark}
\begin{remark} 
The first term $\cO(t^{\nu-1})$ on the right hand side of \eqref{eq:critic_converge_a} and \eqref{eq:critic_converge_b} comes from loosely bounding the error's norm, and can be removed by applying the ``iterative refinement'' technique used in \citet{xu2019two}. Using this technique, we can obtain a bound (also holds for $\eta_t$) $\EE \norm{\bomega_t - \bomega^*_t}^2 = \cO(\log t/t^{\nu}) + \cO(1/t^{2(\sigma-\nu)-\xi})$, where $\xi > 0$ is an arbitrarily small constant. The constant $\xi$ is an artifact due to the the ``iterative refinement'' technique. Similar simplification can be done for \eqref{eq:critic_converge_b}.
Nevertheless, if we plug \eqref{eq:critic_converge_a} and \eqref{eq:critic_converge_b} (after some transformation) into the result of Theorem \ref{thm:actor}, it is easy to see that the term $\cO(1/t^{1-\nu})$ is actually dominated by the term $\cO(1/t^{1-\sigma})$. Thus this term makes no difference in the total sample complexity of Algorithm \ref{alg:2ts_ac} and we choose not to complicate the proof or introduce the extra artificial parameter $\xi$ in the result of Theorem \ref{thm:critic}. 

%Based on the technique in \citet{xu2019two}, we can also get a bound on $\EE \norm{\bomega_t - \bomega^*_t}^2 = \cO(\frac{\log t}{t^{\nu}}) + \cO(\frac{1}{t^{2(\sigma-\nu)-\xi}})$, where $\xi > 0$ is an arbitrarily small constant. This suggests the term $\cO(t^{-\nu-1})$ is indeed removable. The only drawback of their technique is the small constant $\xi$ in their bound, which is unnatural but also unavoidable due to the `iterative refinement' technique used by \citet{xu2019two}. Our proof can remove the unnatural small constant $\xi$, by taking a different framework.
%\end{remark}

The second term in both \eqref{eq:critic_converge_a} and \eqref{eq:critic_converge_b} comes from the Markovian noise and the variance of the semi-gradient. The third term in these two equations comes from the slow drift of the actor. These two terms together can be interpreted as follows: if the actor moves much slower than the critic (i.e., $\sigma - \nu \gg \nu$), then the error is dominated by the Markovian noise and gradient variance; if the actor moves not too slowly compared with the critic (i.e. $\sigma - \nu \ll \nu$), then the critic's error is dominated by the slowly drifting effect of the actor.
\end{remark}

\subsection{Convergence rate and sample complexity}\label{sec:theory_complexity}

Combining Theorems \ref{thm:actor} and \ref{thm:critic} leads to the following convergence rate and sample complexity for Algorithm \ref{alg:2ts_ac}. The detailed proof is in Appendix \ref{subsec:proof-samp-comp}.
\begin{corollary} \label{col:sample-complexity}
Under the same assumptions of Theorems \ref{thm:actor} and \ref{thm:critic}, we have 
\begin{align*} 
    \min_{0 \le k \le t} \EE \norm{\nabla J(\btheta_{k})}^2 
    &=
    \cO(\epsilon_{\text{app}})
    +
    \cO\bigg(\frac{1}{t^{1- \sigma}}\bigg)
    +
    \cO
    \bigg(\frac{\log t}{t^{\nu}}
    \bigg)
    +
    \cO
    \bigg(\frac{1}{t^{2(\sigma - \nu)}}
    \bigg).
\end{align*}
If we set $\sigma = 3/5, \nu = 2/5$, leading to the actor step size $\alpha_t=O(1/t^{3/5})$ and the critic step size $\beta_t=O(1/t^{2/5})$, %we can achieve the convergence rate $\cO\big(\frac{\log t}{t^{2/5}}\big)$, and 
Algorithm \ref{alg:2ts_ac} can find an $\epsilon$-approximate stationary point of $J(\cdot)$ within $T$ steps, namely,
\begin{align*}
    \min_{0 \le k \le T} \EE \big\|\nabla J(\btheta_{k})\big\|^2
    & \le 
    \cO(\epsilon_{\text{app}})
    +
    \epsilon, 
\end{align*}
where $T =\tilde{\cO}(\epsilon^{-2.5})$ is the total iteration number.
\end{corollary}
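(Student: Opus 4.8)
The plan is to combine the two preceding theorems: I would substitute the critic error bound of Theorem~\ref{thm:critic} into the quantity $\cE(t)$ that appears in the actor bound of Theorem~\ref{thm:actor}, and then choose the step-size exponents $\sigma$ and $\nu$ so as to balance the resulting error terms.

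The first point to take care of is that the two theorems average over different windows: $\cE(t)$ in \eqref{eq:def_average_critic} sums from $k=1$, whereas Theorem~\ref{thm:critic} controls the tail averages $\frac{1}{1+t-\tau_t}\sum_{k=\tau_t}^{t}(\cdot)$. I would split each sum in $\cE(t)$ as $\sum_{k=1}^{\tau_t-1}+\sum_{k=\tau_t}^{t}$. Since the critic iterate $\bomega_k$ is projected onto the ball of radius $R_{\bomega}$ (Line~\ref{algline:critic_update}) and the rewards are bounded by $U_r$, every summand $\EE\|\bomega_k-\bomega_k^*\|^2$ and $\EE(\eta_k-r(\btheta_k))^2$ is at most an absolute constant, so the head contributes $\cO(\tau_t/t)$; and because $\min\{\alpha_t,\beta_t\}$ decays only polynomially in $t$, the definition \eqref{eq:def_mixing_time} gives $\tau_t=\cO(\log t)$, whence the head is $\cO(\log t/t)$. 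For the tail, $\frac{1}{t}\sum_{k=\tau_t}^{t}(\cdot)\le\frac{1}{1+t-\tau_t}\sum_{k=\tau_t}^{t}(\cdot)$, so Theorem~\ref{thm:critic} yields
\[
    \cE(t)=\cO\!\left(\frac{1}{t^{1-\nu}}\right)+\cO\!\left(\frac{\log t}{t^{\nu}}\right)+\cO\!\left(\frac{1}{t^{2(\sigma-\nu)}}\right)+\cO\!\left(\frac{\log t}{t}\right).
\]
Plugging this into the conclusion of Theorem~\ref{thm:actor} and using $0<\nu<\sigma<1$ to absorb the dominated terms ($t^{-(1-\nu)}$ into $t^{-(1-\sigma)}$, and $\log^2 t\,t^{-\sigma}$ and $\log t\,t^{-1}$ into $\log t\,t^{-\nu}$) reproduces the first displayed bound of the corollary.

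It then remains to optimize the three decay exponents $1-\sigma$, $\nu$, and $2(\sigma-\nu)$ subject to $0<\nu<\sigma<1$. Equalizing them, $1-\sigma=\nu=2(\sigma-\nu)$, the first equality gives $\sigma=1-\nu$ and the second gives $2\sigma=3\nu$; solving yields $\nu=2/5$ and $\sigma=3/5$, which satisfy the constraints and make all three exponents equal to $2/5$. Hence $\min_{0\le k\le t}\EE\|\nabla J(\btheta_k)\|^2=\cO(\epsilon_{\text{app}})+\tilde{\cO}(t^{-2/5})$. Imposing $\tilde{\cO}(t^{-2/5})\le\epsilon$ forces $t\ge\tilde{\cO}(\epsilon^{-5/2})$; since Algorithm~\ref{alg:2ts_ac} consumes $\cO(1)$ fresh samples per iteration, the total iteration count---equivalently the sample complexity---needed to reach an $\epsilon$-approximate stationary point of $J$ is $T=\tilde{\cO}(\epsilon^{-2.5})$.

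I expect the one genuinely delicate step to be the window reconciliation: one must verify that the per-iteration errors of both the critic and the average-reward tracker are uniformly bounded (this relies on the projection in Line~\ref{algline:critic_update} and the boundedness of $r$) and that $\tau_t$ grows only logarithmically in $t$; once these are in hand, the rest is a routine comparison of polynomial rates followed by inverting $t^{-2/5}\le\epsilon$.
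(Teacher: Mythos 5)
Your proposal is correct and follows essentially the same route as the paper: plug the critic bound of Theorem~\ref{thm:critic} into $\cE(t)$, reconcile the averaging windows, absorb dominated terms, and equalize the exponents $1-\sigma=\nu=2(\sigma-\nu)$ to get $\sigma=3/5$, $\nu=2/5$ and $T=\tilde{\cO}(\epsilon^{-2.5})$. The window-reconciliation step you work out by hand (uniform boundedness of the per-iteration errors plus $\tau_t=\cO(\log t)$, giving an extra $\cO(\log t/t)$) is exactly the content of the paper's Lemma~\ref{lemma:equi-asym}, which the paper's proof simply invokes.
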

Corollary \ref{col:sample-complexity} combines the results of Theorems \ref{thm:actor} and \ref{thm:critic} and shows that the convergence rate of Algorithm \ref{alg:2ts_ac} is $\tilde{\cO}(t^{-2/5})$. Since the per iteration sample is $1$, the sample complexity of two time-scale actor-critic is $\tilde{\cO}(\epsilon^{-2.5})$.

\begin{remark}
We compare our results with existing results on the sample complexity of actor-critic methods in the literature. \citet{kumar2019sample} provided a general result that after $T = \cO(\epsilon^{-2})$ updates for the actor, the algorithm can achieve $\min_{0 \le k \le T} \EE \norm{\nabla J(\btheta_{k})}^2 \le \epsilon$ , as long as the estimation error of the critic can be bounded by $\cO(t^{-1/2})$ at the $t$-th actor's update. However, to ensure such a condition on the critic, they need to draw $t$ samples to estimate the critic at the $t$-th actor's update. Therefore, the total number of samples drawn from the whole training process by the actor-critic algorithm in \cite{kumar2019sample} is $\cO(T^2)$, 
%in terms of sample complexity, a reasonable measure should be the number of samples drawn throughout the whole training process, rather than the number of actor's update. Since \citet{kumar2019sample} require $t$ samples for the critic at the $t$-th actor's update (which means $\cO(T^2)$ samples altogether), 
yielding a $\cO(\epsilon^{-4})$ sample complexity. 
%Moreover, When the critic is specified to use TD(0), the sample complexity goes to $\cO(\epsilon^{-8})$ because $T=\cO(\epsilon^{-4})$ updates for the actor are required.
Under the similar setting, \citet{qiu2019finite} proved the same sample complexity %provided a similar analysis of actor-critic methods. When the critic uses TD(0), they only require $\tilde{\cO}(\epsilon^{-2})$ updates for the actor, but they also require the critic perform $T$ times sampling at the $T$-th update for the actor. Again, the sample complexity is 
$\tilde{\cO}(\epsilon^{-4})$ when TD(0) is used for estimating the critic. Thus Corollary \ref{col:sample-complexity} suggests that the sample complexity of Algorithm \ref{alg:2ts_ac} is significantly better than the sample complexity presented in \cite{kumar2019sample,qiu2019finite} by a factor of $\cO(\epsilon^{-1.5})$. 
\end{remark}
\begin{remark}
The gap between the ``decoupled'' actor-critic and the two time-scale actor-critic seems huge. Intuitively, this is due to the inefficient usage of the samples. At each iteration, the critic in the ``decoupled'' algorithm starts over to evaluate the policy's value function and discards the history information, regardless of the fact that the policy might only changed slightly. The two time-scale actor-critic keeps the critic's parameter and thus takes full advantage of each samples in the trajectory. 

%The ``decoupled'' setting has certain appeal because it circumvents the complicated scenario where the policy to be evaluated is constantly changing, and the actor and critic both are affected by Markovian noises.
%One of this work's contribution lies in building a clear framework for analysing two time-scale setting with Markovian noise and dynamically changing transition probability.
\end{remark}
\begin{remark}
According to \cite{papini2018stochastic}, the sample complexity of policy gradient methods such as REINFORCE is $\cO(\epsilon^{-2})$. As a comparison, if the critic converges faster than $\cO(t^{-1/2})$, namely $\cE(t)=\cO(t^{-1/2})$, then Theorem \ref{thm:actor} combined with Corollary \ref{col:sample-complexity} implies that the complexity of two time-scale actor-critic is $\tilde{\cO}(\epsilon^{-2})$, which matches the result of policy gradient methods \citep{papini2018stochastic} up to logarithmic factors.
%the exact value function with $\cO(1)$ cost, which means the estimation error for critic is $\cE(t)=0$. Then our analysis in \eqref{Eq:exact-actor} implies that the complexity of two time-scale actor-critic is $\tilde{\cO}(\epsilon^{-2})$, which matches the result in \citet{papini2018stochastic} up to log factors. However, in general, 
Nevertheless, as we have discussed in the previous remarks, a smaller estimation error for critic often comes at the cost of more samples needed for the critic update \citep{qiu2019finite,kumar2019sample}, which eventually increases the total sample complexity. Therefore, the $\tilde{\cO}(\epsilon^{-2.5})$ sample complexity in Corollary \ref{col:sample-complexity} is indeed the lowest we can achieve so far for classic two time-scale actor-critic methods. However, it is possible to further improve the sample complexity by using policy evaluation algorithms better than vanilla TD(0), such as GTD and TDC methods.
\end{remark}

\section{Conclusion and discussion} \label{sec:conclusion}
In this paper, we provided the first finite-time analysis of the two time-scale actor-critic methods, with non-i.i.d. Markovian samples and linear function approximation. The algorithm we analyzed is an on-line, one-step actor-critic algorithm which is practical and efficient. We proved its non-asymptotic convergence rate as well as its sample complexity. Our proof technique can be potentially extended to analyze other two time-scale reinforcement learning algorithms.

As one of the anonymous reviewers suggested, the compatible features are useful tools to address the function approximation error of the critic \citep{konda2000actor}. This can leads to finite-time analysis for the natural actor-critic algorithm \citep{xu2020non}, which also relates to the more general natural policy gradient methods \citep{cen2020fast}.
Another possible improvement is to use regularization( e.g. ridge) for the critic to ensure the boundedness of the critic and remove the assumption on the maximum eigenvalue.
The analysis can also be applied to the infinite-horizon discounted MDP, where the framework of analysis essentially remains the same.

\section*{Broader impact}
This work could positively impact the industrial application of actor-critic algorithms and other reinforcement learning algorithms.
The theorem exhibits the sample complexity of actor-critic algorithms, which could be used to estimate required training time of reinforcement learning models. Another direct application of our result is to set the learning rate according to the finite-time bound, by optimizing the constant factors of the dominant terms.
In this sense, the result could potentially reduce the overhead of hyper-parameter tuning, thus saving both human and computational resources. Moreover, the new analysis in this paper can potentially help people in different fields to understand the broader class of two-time scale algorithms, in addition to actor-critic methods.
To our knowledge, this algorithm and theory studied in our paper do not have any ethical issues.

\section*{Acknowledgement}
We would like to thank the anonymous reviewers for
their helpful comments. We also thank Xuyang Chen and Lin Zhao for pointing out a bug caused by the notation inconsistency in the proof of Theorem~\ref{thm:actor} and Lemma~\ref{lemma:Gamma-term1} in the previous version. This research was sponsored
in part by the National Science Foundation IIS-1904183 and Adobe Data Science Research Award. The views and conclusions contained
in this paper are those of the authors and should not be
interpreted as representing any funding agencies.

\bibliographystyle{plainnat}
\bibliography{RL}

\begin{thebibliography}{44}
\providecommand{\natexlab}[1]{#1}
\providecommand{\url}[1]{\texttt{#1}}
\expandafter\ifx\csname urlstyle\endcsname\relax
  \providecommand{\doi}[1]{doi: #1}\else
  \providecommand{\doi}{doi: \begingroup \urlstyle{rm}\Url}\fi

\bibitem[Bahdanau et~al.(2016)Bahdanau, Brakel, Xu, Goyal, Lowe, Pineau,
  Courville, and Bengio]{bahdanau2016actor}
Dzmitry Bahdanau, Philemon Brakel, Kelvin Xu, Anirudh Goyal, Ryan Lowe, Joelle
  Pineau, Aaron Courville, and Yoshua Bengio.
\newblock An actor-critic algorithm for sequence prediction.
\newblock \emph{arXiv preprint arXiv:1607.07086}, 2016.

\bibitem[{Barto} et~al.(1983){Barto}, {Sutton}, and
  {Anderson}]{barto1983neuronlike}
A.~G. {Barto}, R.~S. {Sutton}, and C.~W. {Anderson}.
\newblock Neuronlike adaptive elements that can solve difficult learning
  control problems.
\newblock \emph{IEEE Transactions on Systems, Man, and Cybernetics},
  SMC-13\penalty0 (5):\penalty0 834--846, 1983.

\bibitem[Bhandari et~al.(2018)Bhandari, Russo, and Singal]{bhandari2018finite}
Jalaj Bhandari, Daniel Russo, and Raghav Singal.
\newblock A finite time analysis of temporal difference learning with linear
  function approximation.
\newblock \emph{arXiv preprint arXiv:1806.02450}, 2018.

\bibitem[Bhatnagar et~al.(2009)Bhatnagar, Sutton, Ghavamzadeh, and
  Lee]{bhatnagar2009natural}
Shalabh Bhatnagar, Richard~S Sutton, Mohammad Ghavamzadeh, and Mark Lee.
\newblock Natural actor--critic algorithms.
\newblock \emph{Automatica}, 45\penalty0 (11):\penalty0 2471--2482, 2009.

\bibitem[Borkar(1997)]{borkar1997stochastic}
Vivek~S Borkar.
\newblock Stochastic approximation with two time scales.
\newblock \emph{Systems \& Control Letters}, 29\penalty0 (5):\penalty0
  291--294, 1997.

\bibitem[Borkar and Konda(1997)]{borkar1997actor}
Vivek~S Borkar and Vijaymohan~R Konda.
\newblock The actor-critic algorithm as multi-time-scale stochastic
  approximation.
\newblock \emph{Sadhana}, 22\penalty0 (4):\penalty0 525--543, 1997.

\bibitem[Cai et~al.(2019)Cai, Yang, Jin, and Wang]{cai2019provably}
Qi~Cai, Zhuoran Yang, Chi Jin, and Zhaoran Wang.
\newblock Provably efficient exploration in policy optimization.
\newblock \emph{arXiv preprint arXiv:1912.05830}, 2019.

\bibitem[Castro and Meir(2010)]{castro2010convergent}
Dotan~Di Castro and Ron Meir.
\newblock A convergent online single time scale actor critic algorithm.
\newblock \emph{Journal of Machine Learning Research}, 11\penalty0
  (Jan):\penalty0 367--410, 2010.

\bibitem[Cen et~al.(2020)Cen, Cheng, Chen, Wei, and Chi]{cen2020fast}
Shicong Cen, Chen Cheng, Yuxin Chen, Yuting Wei, and Yuejie Chi.
\newblock Fast global convergence of natural policy gradient methods with
  entropy regularization.
\newblock \emph{arXiv preprint arXiv:2007.06558}, 2020.

\bibitem[Chen et~al.(2019)Chen, Zhang, Doan, Maguluri, and
  Clarke]{chen2019performance}
Zaiwei Chen, Sheng Zhang, Thinh~T Doan, Siva~Theja Maguluri, and John-Paul
  Clarke.
\newblock Performance of q-learning with linear function approximation:
  Stability and finite-time analysis.
\newblock \emph{arXiv preprint arXiv: 1905.11425}, 2019.

\bibitem[Dalal et~al.(2017)Dalal, Szorenyi, Thoppe, and
  Mannor]{dalal2017finite}
Gal Dalal, Balazs Szorenyi, Gugan Thoppe, and Shie Mannor.
\newblock Finite sample analysis of two-timescale stochastic approximation with
  applications to reinforcement learning.
\newblock \emph{arXiv preprint arXiv:1703.05376}, 2017.

\bibitem[Gupta et~al.(2019)Gupta, Srikant, and Ying]{gupta2019finite}
Harsh Gupta, R~Srikant, and Lei Ying.
\newblock Finite-time performance bounds and adaptive learning rate selection
  for two time-scale reinforcement learning.
\newblock In \emph{Advances in Neural Information Processing Systems}, pages
  4706--4715, 2019.

\bibitem[Hu and Syed(2019)]{hu2019characterizing}
Bin Hu and Usman Syed.
\newblock Characterizing the exact behaviors of temporal difference learning
  algorithms using markov jump linear system theory.
\newblock In \emph{Advances in Neural Information Processing Systems}, pages
  8477--8488, 2019.

\bibitem[Jin et~al.(2018)Jin, Allen-Zhu, Bubeck, and Jordan]{jin2018q}
Chi Jin, Zeyuan Allen-Zhu, Sebastien Bubeck, and Michael~I Jordan.
\newblock Is q-learning provably efficient?
\newblock In \emph{Advances in Neural Information Processing Systems}, pages
  4863--4873, 2018.

\bibitem[Kaledin et~al.(2020)Kaledin, Moulines, Naumov, Tadic, and
  Wai]{kaledin2020finite}
Maxim Kaledin, Eric Moulines, Alexey Naumov, Vladislav Tadic, and Hoi-To Wai.
\newblock Finite time analysis of linear two-timescale stochastic approximation
  with markovian noise.
\newblock \emph{arXiv preprint arXiv:2002.01268}, 2020.

\bibitem[Konda and Tsitsiklis(2000)]{konda2000actor}
Vijay~R Konda and John~N Tsitsiklis.
\newblock Actor-critic algorithms.
\newblock In \emph{Advances in Neural Information Processing Systems}, pages
  1008--1014, 2000.

\bibitem[Konda et~al.(2004)Konda, Tsitsiklis, et~al.]{konda2004convergence}
Vijay~R Konda, John~N Tsitsiklis, et~al.
\newblock Convergence rate of linear two-time-scale stochastic approximation.
\newblock \emph{The Annals of Applied Probability}, 14\penalty0 (2):\penalty0
  796--819, 2004.

\bibitem[Kumar et~al.(2019)Kumar, Koppel, and Ribeiro]{kumar2019sample}
Harshat Kumar, Alec Koppel, and Alejandro Ribeiro.
\newblock On the sample complexity of actor-critic method for reinforcement
  learning with function approximation.
\newblock \emph{arXiv preprint arXiv:1910.08412}, 2019.

\bibitem[Mitrophanov(2005)]{mitrophanov2005sensitivity}
A~Yu Mitrophanov.
\newblock Sensitivity and convergence of uniformly ergodic markov chains.
\newblock \emph{Journal of Applied Probability}, 42\penalty0 (4):\penalty0
  1003--1014, 2005.

\bibitem[Mnih et~al.(2016)Mnih, Badia, Mirza, Graves, Lillicrap, Harley,
  Silver, and Kavukcuoglu]{mnih2016asynchronous}
Volodymyr Mnih, Adria~Puigdomenech Badia, Mehdi Mirza, Alex Graves, Timothy
  Lillicrap, Tim Harley, David Silver, and Koray Kavukcuoglu.
\newblock Asynchronous methods for deep reinforcement learning.
\newblock In \emph{International conference on machine learning}, pages
  1928--1937, 2016.

\bibitem[Nesterov(2018)]{nesterov2018lectures}
Yurii Nesterov.
\newblock \emph{Lectures on convex optimization}, volume 137.
\newblock Springer, 2018.

\bibitem[Papini et~al.(2018)Papini, Binaghi, Canonaco, Pirotta, and
  Restelli]{papini2018stochastic}
Matteo Papini, Damiano Binaghi, Giuseppe Canonaco, Matteo Pirotta, and Marcello
  Restelli.
\newblock Stochastic variance-reduced policy gradient.
\newblock In \emph{International Conference on Machine Learning}, pages
  4023--4032, 2018.

\bibitem[Qiu et~al.(2019)Qiu, Yang, Ye, and Wang]{qiu2019finite}
Shuang Qiu, Zhuoran Yang, Jieping Ye, and Zhaoran Wang.
\newblock On the finite-time convergence of actor-critic algorithm.
\newblock \emph{NeurIPS 2019 Optimization Foundations of Reinforcement Learning
  Workshop}, 2019.

\bibitem[Schulman et~al.(2015)Schulman, Levine, Abbeel, Jordan, and
  Moritz]{schulman2015trust}
John Schulman, Sergey Levine, Pieter Abbeel, Michael~I Jordan, and Philipp
  Moritz.
\newblock Trust region policy optimization.
\newblock In \emph{International Conference on Machine Learning}, volume~37,
  pages 1889--1897, 2015.

\bibitem[Srikant and Ying(2019)]{srikant2019finite}
R~Srikant and Lei Ying.
\newblock Finite-time error bounds for linear stochastic approximation andtd
  learning.
\newblock In \emph{Conference on Learning Theory}, pages 2803--2830, 2019.

\bibitem[Sutton(1988)]{sutton1988learning}
Richard~S Sutton.
\newblock Learning to predict by the methods of temporal differences.
\newblock \emph{Machine learning}, 3\penalty0 (1):\penalty0 9--44, 1988.

\bibitem[Sutton and Barto(2018)]{sutton2018reinforcement}
Richard~S Sutton and Andrew~G Barto.
\newblock \emph{Reinforcement learning: An introduction}.
\newblock MIT press, 2018.

\bibitem[Sutton et~al.(2000)Sutton, McAllester, Singh, and
  Mansour]{sutton2000policy}
Richard~S Sutton, David~A McAllester, Satinder~P Singh, and Yishay Mansour.
\newblock Policy gradient methods for reinforcement learning with function
  approximation.
\newblock In \emph{Advances in Neural Information Processing Systems}, pages
  1057--1063, 2000.

\bibitem[Tadic and Meyn(2003)]{tadic2003asymptotic}
Vladislav~B Tadic and Sean~P Meyn.
\newblock Asymptotic properties of two time-scale stochastic approximation
  algorithms with constant step sizes.
\newblock In \emph{Proceedings of the 2003 American Control Conference, 2003.},
  volume~5, pages 4426--4431. IEEE, 2003.

\bibitem[Wang et~al.(2020)Wang, Cai, Yang, and Wang]{wang2020neural}
Lingxiao Wang, Qi~Cai, Zhuoran Yang, and Zhaoran Wang.
\newblock Neural policy gradient methods: Global optimality and rates of
  convergence.
\newblock In \emph{International Conference on Learning Representations}, 2020.

\bibitem[Wang et~al.(2016)Wang, Bapst, Heess, Mnih, Munos, Kavukcuoglu, and
  de~Freitas]{wang2016sample}
Ziyu Wang, Victor Bapst, Nicolas Heess, Volodymyr Mnih, Remi Munos, Koray
  Kavukcuoglu, and Nando de~Freitas.
\newblock Sample efficient actor-critic with experience replay.
\newblock \emph{arXiv preprint arXiv:1611.01224}, 2016.

\bibitem[Watkins and Dayan(1992)]{watkins1992q}
Christopher~JCH Watkins and Peter Dayan.
\newblock Q-learning.
\newblock \emph{Machine learning}, 8\penalty0 (3-4):\penalty0 279--292, 1992.

\bibitem[Wiering(2004)]{wiering2004convergence}
Marco~A Wiering.
\newblock Convergence and divergence in standard and averaging reinforcement
  learning.
\newblock In \emph{European Conference on Machine Learning}, pages 477--488.
  Springer, 2004.

\bibitem[Williams(1992)]{williams1992simple}
Ronald~J Williams.
\newblock Simple statistical gradient-following algorithms for connectionist
  reinforcement learning.
\newblock \emph{Machine Learning}, 8\penalty0 (3-4):\penalty0 229--256, 1992.

\bibitem[Xu and Gu(2019)]{xu2019finite}
Pan Xu and Quanquan Gu.
\newblock A finite-time analysis of q-learning with neural network function
  approximation.
\newblock \emph{arXiv preprint arXiv:1912.04511}, 2019.

\bibitem[Xu et~al.(2019{\natexlab{a}})Xu, Gao, and Gu]{xu2019improved}
Pan Xu, Felicia Gao, and Quanquan Gu.
\newblock An improved convergence analysis of stochastic variance-reduced
  policy gradient.
\newblock In \emph{International Conference on Uncertainty in Artificial
  Intelligence}, 2019{\natexlab{a}}.

\bibitem[Xu et~al.(2020{\natexlab{a}})Xu, Gao, and Gu]{xu2020sample}
Pan Xu, Felicia Gao, and Quanquan Gu.
\newblock Sample efficient policy gradient methods with recursive variance
  reduction.
\newblock In \emph{International Conference on Learning Representations},
  2020{\natexlab{a}}.

\bibitem[Xu et~al.(2019{\natexlab{b}})Xu, Zou, and Liang]{xu2019two}
Tengyu Xu, Shaofeng Zou, and Yingbin Liang.
\newblock Two time-scale off-policy td learning: Non-asymptotic analysis over
  markovian samples.
\newblock In \emph{Advances in Neural Information Processing Systems}, pages
  10633--10643, 2019{\natexlab{b}}.

\bibitem[Xu et~al.(2020{\natexlab{b}})Xu, Wang, and Liang]{xu2020non}
Tengyu Xu, Zhe Wang, and Yingbin Liang.
\newblock Non-asymptotic convergence analysis of two time-scale (natural)
  actor-critic algorithms.
\newblock \emph{arXiv preprint arXiv:2005.03557}, 2020{\natexlab{b}}.

\bibitem[Yang et~al.(2018)Yang, Zhang, Hong, and Ba{\c{s}}ar]{yang2018finite}
Zhuoran Yang, Kaiqing Zhang, Mingyi Hong, and Tamer Ba{\c{s}}ar.
\newblock A finite sample analysis of the actor-critic algorithm.
\newblock In \emph{2018 IEEE Conference on Decision and Control (CDC)}, pages
  2759--2764. IEEE, 2018.

\bibitem[Yang et~al.(2019)Yang, Chen, Hong, and Wang]{yang2019global}
Zhuoran Yang, Yongxin Chen, Mingyi Hong, and Zhaoran Wang.
\newblock On the global convergence of actor-critic: A case for linear
  quadratic regulator with ergodic cost.
\newblock In \emph{Advances in Neural Information Processing Systems}, 2019.

\bibitem[Zhang et~al.(2019{\natexlab{a}})Zhang, Koppel, Zhu, and
  Ba{\c{s}}ar]{zhang2019global}
Kaiqing Zhang, Alec Koppel, Hao Zhu, and Tamer Ba{\c{s}}ar.
\newblock Global convergence of policy gradient methods to (almost) locally
  optimal policies.
\newblock \emph{arXiv preprint arXiv:1906.08383}, 2019{\natexlab{a}}.

\bibitem[Zhang et~al.(2019{\natexlab{b}})Zhang, Liu, Yao, and
  Whiteson]{zhang2019provably}
Shangtong Zhang, Bo~Liu, Hengshuai Yao, and Shimon Whiteson.
\newblock Provably convergent two-timescale off-policy actor-critic with
  function approximation.
\newblock \emph{arXiv}, pages arXiv--1911, 2019{\natexlab{b}}.

\bibitem[Zou et~al.(2019)Zou, Xu, and Liang]{zou2019finite}
Shaofeng Zou, Tengyu Xu, and Yingbin Liang.
\newblock Finite-sample analysis for sarsa with linear function approximation.
\newblock In \emph{Advances in Neural Information Processing Systems}, pages
  8665--8675, 2019.

\end{thebibliography}

\newpage
\appendix
\section{Proof Sketch} \label{sec:proof}
In this section, we provide the proof roadmap of the main theory. Detailed proofs  can be found in Appendix \ref{sec:proof_of_maintheory}.
\subsection{Proof Sketch of Theorem \ref{thm:actor}} 
%\textbf{Step 1: XXX} 
The following lemma is important in that it enables the analysis of policy gradient method:
\begin{lemma}[\cite{zhang2019global}]
For the performance function defined in \eqref{Eq:J-function}, there exists a constant $L_J>0$ such that for all $\btheta_1,\btheta_2\in\RR^d$, it holds that
\begin{align*}
    \big\| 
    \nabla {J(\btheta_1)} - \nabla {J(\btheta_2)}
    \big \|
    \le L_{J} \norm{\btheta_1 - \btheta_2},
\end{align*}
which by the definition of smoothness \citep{nesterov2018lectures} is also equivalent to 
\begin{align*}
    J(\btheta_2) \ge 
    J(\btheta_1) + \big\la \nabla J(\btheta_1),\btheta_2 - \btheta_1\big\ra - \frac{L_{J}}{2} \norm{\btheta_1 - \btheta_2}^2.
\end{align*}
\end{lemma}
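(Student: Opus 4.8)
The plan is to establish the gradient-Lipschitz bound $\|\nabla J(\btheta_1) - \nabla J(\btheta_2)\|\le L_J\|\btheta_1-\btheta_2\|$; the quadratic estimate is then the standard descent lemma, obtained from the identity $J(\btheta_2) - J(\btheta_1) - \la\nabla J(\btheta_1),\btheta_2-\btheta_1\ra = \int_0^1\la\nabla J(\btheta_1 + s(\btheta_2-\btheta_1)) - \nabla J(\btheta_1),\,\btheta_2-\btheta_1\ra\,\ud s$ and bounding the integrand by $L_J s\|\btheta_1-\btheta_2\|^2$.

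By Lemma \ref{lemma:policy-gradient}, $\nabla J(\btheta) = \sum_s\mu_{\btheta}(s)\sum_{a\in\cA}Q^{\pi_{\btheta}}(s,a)\,\nabla\pi_{\btheta}(a|s)$, so $\btheta$ enters through three factors: the stationary distribution $\mu_{\btheta}$, the state-action value $Q^{\pi_{\btheta}}$, and the score-weighted policy $\nabla\pi_{\btheta}(a|s) = \pi_{\btheta}(a|s)\nabla\log\pi_{\btheta}(a|s)$. First I would record a priori bounds uniform in $\btheta$: $|r(\btheta)|\le U_r$ is immediate; uniform ergodicity (Assumption \ref{assum:ergodicity}) gives $|V^{\pi_{\btheta}}(s)|,|Q^{\pi_{\btheta}}(s,a)|\le 2U_rm/(1-\rho)=:U_Q$ by summing the geometric tail $|\EE[r(s_t,a_t)\mid s_0=s] - r(\btheta)|\le 2U_rm\rho^t$; and Assumption \ref{assum:policy-lipschitz-bounded}(a) gives $\|\nabla\pi_{\btheta}(a|s)\|\le B$, so in particular $\|\nabla J(\btheta)\|\le |\cA|U_QB=:C_J$.

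The heart of the argument is three perturbation estimates, each linear in $\|\btheta_1-\btheta_2\|$. (i) Assumption \ref{assum:policy-lipschitz-bounded}(c) and $|\cA|<\infty$ give $\sup_s d_{TV}(\pi_{\btheta_1}(\cdot|s),\pi_{\btheta_2}(\cdot|s))\le \tfrac{|\cA|L}{2}\|\btheta_1-\btheta_2\|$; feeding this one-step kernel perturbation into the standard perturbation bound for uniformly ergodic chains (telescoping $\|P_{\btheta_1}^n(s,\cdot)-P_{\btheta_2}^n(s,\cdot)\|_{TV}$ along the geometric contraction from Assumption \ref{assum:ergodicity}, then letting $n\to\infty$, where $P_{\btheta}$ is the state-transition kernel induced by $\pi_{\btheta}$ and $\cP$) yields $d_{TV}(\mu_{\btheta_1},\mu_{\btheta_2})\le L_\mu\|\btheta_1-\btheta_2\|$ with $L_\mu = L_\mu(m,\rho,|\cA|,L)$. (ii) Since $r(\btheta) = \EE_{s\sim\mu_{\btheta},a\sim\pi_{\btheta}}[r(s,a)]$, estimate (i) gives $|r(\btheta_1)-r(\btheta_2)|\le L_r\|\btheta_1-\btheta_2\|$; plugging this and (i) into the Bellman relation $Q^{\pi_{\btheta}} = r - r(\btheta)\one + P^{\pi_{\btheta}}Q^{\pi_{\btheta}}$ and again using the ergodic contraction of $P^{\pi_{\btheta}}$ (acting on functions orthogonal to the constants) gives $\sup_{s,a}|Q^{\pi_{\btheta_1}}(s,a)-Q^{\pi_{\btheta_2}}(s,a)|\le L_Q\|\btheta_1-\btheta_2\|$. (iii) Expanding $\nabla\pi_{\btheta_1} - \nabla\pi_{\btheta_2} = (\pi_{\btheta_1}-\pi_{\btheta_2})\nabla\log\pi_{\btheta_1} + \pi_{\btheta_2}(\nabla\log\pi_{\btheta_1}-\nabla\log\pi_{\btheta_2})$ and using parts (a)--(c) of Assumption \ref{assum:policy-lipschitz-bounded} with $\pi_{\btheta}(a|s)\le 1$ gives $\|\nabla\pi_{\btheta_1}(a|s)-\nabla\pi_{\btheta_2}(a|s)\|\le (LB+L_l)\|\btheta_1-\btheta_2\|$.

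Finally I would write $\nabla J(\btheta_1)-\nabla J(\btheta_2)$ as the sum of three hybrid differences --- perturb $\mu$ only (bounded by $2L_\mu\|\btheta_1-\btheta_2\|\cdot |\cA|U_QB$, using $\|\mu_{\btheta_1}-\mu_{\btheta_2}\|_1 = 2d_{TV}$), then $Q$ only (bounded by $|\cA|L_Q\|\btheta_1-\btheta_2\|\cdot B$), then $\nabla\pi$ only (bounded by $|\cA|U_Q\cdot(LB+L_l)\|\btheta_1-\btheta_2\|$) --- and take $L_J$ to be the sum of these three constants, an explicit function of $U_r,m,\rho,|\cA|,B,L,L_l$. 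The main obstacle is establishing the two Markov-chain perturbation bounds in (i) and (ii): this is precisely where Assumptions \ref{assum:ergodicity} and \ref{assum:policy-lipschitz-bounded}(c) are used, and it is delicate because the constants must be uniform over all $\btheta$ and must correctly track the dependence on the mixing rate $\rho$. Everything else is product-rule bookkeeping with the a priori bounds.
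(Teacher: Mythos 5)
Your proposal is correct in outline but takes a genuinely different route from the paper: the paper does not prove the gradient-Lipschitz property at all, it simply invokes Lemma 3.2 of \citet{zhang2019global} for the first inequality and the standard descent lemma for the second, whereas you reconstruct a self-contained proof from the paper's own Assumptions \ref{assum:ergodicity} and \ref{assum:policy-lipschitz-bounded}. Your ingredients are in fact the same machinery the paper deploys elsewhere in its appendix --- the Mitrophanov-type perturbation bound $d_{TV}(\mu_{\btheta_1},\mu_{\btheta_2})\le L_\mu\norm{\btheta_1-\btheta_2}$ is exactly Lemma \ref{lemma:prob-mixing}, and your estimate $|r(\btheta_1)-r(\btheta_2)|\le L_r\norm{\btheta_1-\btheta_2}$ is Lemma \ref{lemma:J-Lipschitz} --- so what your argument buys is an explicit $L_J$ in terms of $U_r,m,\rho,|\cA|,B,L,L_l$ and independence from the external reference, at the cost of length. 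The only step stated too loosely is (ii): since uniform ergodicity only gives $d_{TV}(P^t_{\btheta}(s,\cdot),\mu_{\btheta})\le m\rho^t$ with possibly $m\ge 1$, the transition kernel $P^{\pi_{\btheta}}$ need not contract in one step on the span seminorm, so "using the ergodic contraction" must be implemented with the $n$-step Poisson/Bellman identity $V^{\pi_\btheta}=\sum_{t<n}P_\btheta^t(\bar r_\btheta-r(\btheta))+P_\btheta^n V^{\pi_\btheta}$ for a fixed $n$ with $2m\rho^n\le 1/2$, tracking the $O(n^2\norm{\btheta_1-\btheta_2})$ perturbation of the finite sum and of $P_\btheta^n$; moreover, because the Poisson equation determines $V^{\pi_\btheta}$ only up to an additive constant, you must pin the normalization (here $\mu_\btheta$-mean zero, consistent with the series definition of $V^{\pi_\btheta}$) and control the constant shift via $d_{TV}(\mu_{\btheta_1},\mu_{\btheta_2})$ to upgrade the span-seminorm bound to a sup-norm bound on $Q^{\pi_{\btheta_1}}-Q^{\pi_{\btheta_2}}$. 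With that refinement your three-term hybrid decomposition of $\nabla J(\btheta_1)-\nabla J(\btheta_2)$ goes through and yields the claimed $L_J$; the quadratic inequality then follows from your integral identity as in the paper.
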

This lemma enables us to perform a gradient ascent style analysis on the non-concave function $J(\btheta)$:
% \begin{align}\label{eq:sketch_decomp}
%     J(\btheta_{t+1}) 
%     & \ge
%     J(\btheta_{t}) 
%     + \alpha_{t} \big \la \nabla J (\btheta_{t}),\delta_t \nabla \log \pi_{\btheta_{t}}(a_t | s_t)\big \ra
%     - L_{J} \alpha_{t}^{2} 
%     \big\|\delta_{t} \nabla \log {\pi_{\btheta_t}}(a_t | s_t) \big\|^2 
%     \notag \\ 
%     & =
%     J(\btheta_{t}) 
%     + \alpha_{t} \big \la \nabla J (\btheta_{t}), \Delta h(O_t, \eta_t, \bomega_t, \btheta_t) \big \ra
%     + \alpha_{t} \big\la \nabla J (\btheta_{t}), \Delta h'(O_t, \btheta_t) \big\ra
%     \notag\\
%     & \qquad
%     + \alpha_{t}  \Gamma(O_t,  \btheta_t)
%     + \alpha_{t} \big \|\nabla J (\btheta_{t}) \big\|^2 
%     - L_{J} \alpha_{t}^{2} \big\| \delta_{t} \nabla \log {\pi_{\btheta_t}}(a_t | s_t) \big\|^2,
% \end{align}
\begin{align}\label{eq:sketch_decomp}
    J(\btheta_{t+1}) 
    & \ge
    J(\btheta_{t}) 
    + \alpha_{t} \big \la \nabla J (\btheta_{t}),\delta_t \nabla \log \pi_{\btheta_{t}}(a_t | s_t)\big \ra
    - L_{J} \alpha_{t}^{2} 
    \big\|\delta_{t} \nabla \log {\pi_{\btheta_t}}(a_t | s_t) \big\|^2 
    \notag \\
    & =
    J(\btheta_{t}) 
    + \alpha_{t} \big \la \nabla J (\btheta_{t}), \Delta h(O_t, \eta_t, \bomega_t, \btheta_t) \big \ra
    + \alpha_{t} \big\la \nabla J (\btheta_{t}),  \EE_{O'} [\Delta h'(O', \btheta_t)] \big\ra
    \notag\\
    & \qquad
    + \alpha_{t}  \Gamma(O_t,  \btheta_t)
    + \alpha_{t} \big \|\nabla J (\btheta_{t}) \big\|^2 
    - L_{J} \alpha_{t}^{2} \big\| \delta_{t} \nabla \log {\pi_{\btheta_t}}(a_t | s_t) \big\|^2,
\end{align}
where $O_t = (s_t, a_t, s_{t+1})$ is a tuple of observations. The second term $\Delta h(O_t, \eta_t, \bomega_t, \btheta_t)$ on the right hand side of \eqref{eq:sketch_decomp} is the bias introduced by the critic $\bomega_t$ and the reward estimate $\eta_t$.
The third term $\Delta h'(O_t, \btheta_t)$ is from the linear approximation error.
The fourth term $\Gamma(O_t, \btheta_t)$ is due to the Markovian noise.
The last term can be viewed as the variance of the stochastic gradient update. Please refer to \eqref{def:actor-term} for the definition of each notation.

Now we bound each term's expectation in \eqref{eq:sketch_decomp} respectively.

First, we have
\begin{align*}
    \EE \big\la \nabla J (\btheta_{t}), \Delta h(O_t, \eta_t, \bomega_t, \btheta_t) \big\ra
    & \ge 
    - B \sqrt{\EE \big\| \nabla J (\btheta_{t})\big\|^2}  \sqrt{8\EE \norm{\zb_t}^2 + 2\EE[y_t^2]},
\end{align*}
where $\zb_t := \bomega_t - \bomega^*_t$ and $y_t := \eta_t - \eta^*_t$, and the inequality is due to Cauchy inequality and Lemma~\ref{lemma:critic-bias}.

Second, taking expectation over the approximation error term containing $\Delta h'$, we have
\begin{align*}
    \EE \big\la \nabla J (\btheta_{t}), \Delta h'(O_t, \btheta_t) \big\ra
    & \ge 
    - G_{\btheta}  \sqrt{
    \EE \big\| \Delta h'(O_t, \btheta_t) \big\|^2
    }
    \\
    & \ge
    - G_{\btheta} \cdot 2B \sqrt{
    \EE \big( \phi(s)^{\top} \bomega^*_t - V^{\pi_{\btheta_t}}(s) \big)^2
    }
    \\
    & \ge 
    -2B G_{\btheta} \epsilon_{\text{app}},
\end{align*}

Third, we have
\begin{align*}
    \EE [\Gamma(O_t,  \btheta_t)]
    & \ge 
    -G_{\btheta} \bigg(D_{1} (\tau + 1) \sum_{k=t-\tau+1}^t \EE \norm{\btheta_k - \btheta_{k-1}}
    +D_{2} m \rho^{\tau - 1}\bigg), \\
    & \ge 
    - G_{\btheta}
    \bigg(
    D_1 (\tau + 1) G_{\btheta} \sum_{k=t-\tau+1}^{t-1} \alpha_k 
    + 
    D_2 m \rho^{\tau - 1}
    \bigg),
\end{align*}
where the first inequality is due to Lemma~\ref{lemma:actor-markovian}, and the second inequality is due to %the fact that $\|\btheta_t-\btheta_{t-\tau}\|\leq\sum_{k=t-\tau}^{t-1}\|\btheta_{k+1}-\btheta_{k}\|$ and that
$\big\| \delta_{t} \nabla \log {\pi_{\btheta_t}}(a_t | s_t) \big\| \le G_{\btheta}$ by Lemma~\ref{lemma:actor-markovian}.

Taking the expectation of \eqref{eqn:actor-all-term}, plugging the above terms back into it and rearranging give
\begin{align*}
    \EE \big\| \nabla J(\btheta_t) \big\|^2
    & \le 
    \frac{1}{\alpha_t}
    \big(
    \EE [J(\btheta_{t+1})] - \EE [J(\btheta_{t})]
    \big)
    + 
    B \sqrt{\EE \big\| \nabla J(\btheta_t) \big\|^2} \sqrt{8\EE \norm{\zb_t}^2 + 2 \EE[y_t^2]} \\
    & \qquad + 
    D_1 G_{\btheta}^2 (\tau + 1) \sum_{k=t-\tau}^{t-1} \alpha_k  
    +
    D_2 G_{\btheta} m \rho^{\tau - 1} 
    +
    L_{J} G_{\btheta}^2 \alpha_t.
\end{align*}
Setting $\tau = \tau_{t}$ and summing over each term, and further
dividing $(1 + t - \tau_{t})$ at both sides and assuming $t > 2\tau_t-1$, we can express the result as 
\begin{align} \label{eqn:sketch-actor-bone}
    \frac{1}{1+t-\tau_{t}}\sum_{k=\tau_{t}}^{t} \EE \big\| \nabla J(\btheta_t) \big\|^2 &\le 
    \cO
    \bigg( 
    \frac{1}{t^{1-\sigma}} 
    \bigg)
    + 
    \cO
    \bigg(
    \frac{(\log t)^2}{t^{\sigma}} 
    \bigg)
    +
    \cO(\epsilon_{\text{app}})
    \notag \\
    & \qquad +
    \frac{2B}{1+t-\tau_{t}}
     \sum_{k=\tau_{t}}^{t}
    \sqrt{\EE \big\| \nabla J(\btheta_t) \big\|^2} \sqrt{8\EE \norm{\zb_t}^2 + 2 \EE[y_t^2]}
\end{align}
By Cauchy-Schwartz inequality, we have
\begin{align*}
    &\frac{1}{1+t-\tau_{t}}
    \sum_{k=\tau_{t}}^{t}
    \sqrt{\EE \big\| \nabla J(\btheta_t) \big\|^2} \sqrt{8\EE \norm{\zb_t}^2 + 2 \EE[y_t^2]}\\
    & \qquad \le 
    \bigg(
    \frac{1}{1+t-\tau_{t}}\sum_{k=\tau_{t}}^{t}
    \EE \big\| \nabla J(\btheta_t) \big\|^2
    \bigg)^{\frac{1}{2}}
    \bigg(
    \frac{1}{1+t-\tau_{t}}
    \sum_{k=\tau_{t}}^{t}
    \big(
    8\EE \norm{\zb_t}^2 + 2 \EE[y_t^2]
    \big)
    \bigg)^{\frac{1}{2}}.
\end{align*}
Now, denote $F(t) := 1/(1+t-\tau_{t})\sum_{k=\tau_{t}}^{t} \EE \norm{\nabla J(\btheta_k)}^2$ and $Z(t) := 1/(1+t-\tau_{t}) \sum_{k=\tau_{t}}^{t} \big( 8\EE \norm{\zb_t}^2 + 2 \EE[y_t^2] \big)$, and putting them back to~\eqref{eqn:sketch-actor-bone}  ($\cO$-notation for simplicity):
\begin{align*}
    F(t)
    & \le 
    \cO
    \bigg( 
    \frac{1}{t^{1-\sigma}} 
    \bigg)
    + 
    \cO
    \bigg(
    \frac{(\log t)^2}{t^{\sigma}} 
    \bigg)
    +
    \cO(\epsilon_{\text{app}})
    +
    2B \sqrt{F(t)} \cdot \sqrt{Z(t)}
    ,
\end{align*}
which further gives
\begin{align*}
    \big(
    \sqrt{F(t)} - B \sqrt{Z(t)}
    \big)^2 
    & \le 
    \cO
    \bigg( 
    \frac{1}{t^{1-\sigma}} 
    \bigg)
    + 
    \cO
    \bigg(
    \frac{(\log t)^2}{t^{\sigma}} 
    \bigg)
    +
    \cO(\epsilon_{\text{app}})
    +
    B^2 Z(t)
    .
\end{align*}
Note that for a general function $H(t) =  A(t) + B(t)$(with each positive), we have 
\begin{align*}
    H^2(t) & =
    \cO \big(A^2(t) \big)
    +
    \cO \big(B^2(t) \big), \\ 
    \sqrt{H(t)} 
    & =
    \cO \big(\sqrt{A(t)} \big)
    +
    \cO \big(\sqrt{B(t)} \big).
\end{align*}
This means
\begin{align*}
    \min_{0 \le k \le t} 
    \EE \big\|\nabla J(\btheta_k)\big\|^2 
    & \le 
    \frac{1}{1+t-\tau_{t}}\sum_{k=\tau_{t}}^{t} \EE \big\|\nabla J(\btheta_k)\big\|^2 
    \\
    & = 
    \cO
    \bigg( 
    \frac{1}{t^{1-\sigma}}
    \bigg)
    +
    \cO
    \bigg( 
    \frac{1}{t^{\sigma}}
    \bigg)
    +
    \cO
    (\epsilon_{\text{app}})
    +
    \cO
    \big( 
    \cE(t)
    \big).
\end{align*}

\subsection{Proof Sketch of Theorem \ref{thm:critic}}
The proof of Theorem \ref{thm:critic} can be divided into the following two parts.
\subsubsection{Estimating the Average Reward $\eta_k$}
We denote $y_k := \eta_k - r(\btheta_k)$.
First, we shall mention that many components in this step is uses the same framework and partial result as the proof regarding $\bomega_t$ in the next part. 
Also, part of the proof is intriguingly similar with the proof of Theorem \ref{thm:actor}. 
For simplicity, here we only present the final result regarding $\eta_k$. Please refer to Section \ref{subsec:proof-eta} for the detailed proof. By setting $\gamma_k = (1+t)^{-\nu}$, we have that
\begin{align*}
    \sum_{k=\tau_t}^{t} \EE[y_k^2 ]
    & = 
    \cO(t^{\nu})
    +
    \cO(\log t \cdot t^{1-\nu})
    +
    \cO(t^{1- 2(\sigma - \nu)}).
\end{align*}

\subsubsection{Approximating the TD Fixed Point}
\textbf{Step 1: decomposition of the estimation error.}
For simplicity, we denote $\zb_t := \bomega_t - \bomega^*_t$, where the $\bomega^*_t$ denotes the exact parameter under policy $\pi_{\btheta_t}$. By the critic update in Line 7 of Algorithm \ref{alg:2ts_ac}, we have 
\begin{align}\label{Eq:error-decomposition}
    \norm{\zb_{t+1}}^2 
    & =  
    \norm{\zb_t}^2 
    + 2\beta_t \big \la \zb_t,\bar{g}( \bomega_t, \btheta_t) \big \ra 
    + 2\beta_t \Lambda(O_t, \bomega_t, \btheta_t)
    + 2\beta_t \big \la \zb_t, \Delta g(O_t, \eta_t, \btheta_t) \big \ra
    \notag \\
    & \qquad
    + 2 \dotp{\zb_t}{\bomega_{t}^{*} - \bomega_{t+1}^{*}} 
    + \big \| \beta_t (g(O_t, \bomega_t, \btheta_t)+ \Delta g(O_t, \eta_t, \btheta_t)) 
    + (\bomega_{t}^{*} - \bomega_{t+1}^{*}) \big \|^2.
\end{align}
where $O_t:= (s_t, a_t, s_{t+1})$ is a tuple of observations, 
$g(O_t, \bomega_t)$ and $\bar{g}(\btheta_t, \bomega_t)$
are the estimated gradient and the true gradient respectively. $\Lambda(O_t, \bomega_t, \btheta_t) : = \dotp{\bomega_t - \bomega^*_t}{g(O_t, \bomega_t) - \bar{g}(\btheta_t, \bomega_t)}$ can be seen as the error induced by the Markovian noise. Please refer to \eqref{def:critic-term} for formal definition of each notation.

The second term on the right hand side of~\eqref{Eq:error-decomposition} can be bounded by $-2 \lambda \beta_t \norm{\zb_t}^2$ due to Assumption \ref{assum:negative-definite}. The third term is a bias term caused by the Markovian noise. The fourth term $\Delta g(O_t, \eta_t, \btheta_t)$ is another bias term caused by inaccurate average reward estimator $\eta_t$. The fifth term is caused by the slowly drifting policy parameter $\btheta_t$. And the last term can be considered as the variance term. 

Rewriting \eqref{Eq:error-decomposition} and telescoping from $\tau = \tau_t$ to $t$, we have
\begin{align} \label{Eq:critic-sum}
    2 \lambda \sum_{k=\tau_{t}}^{t} \EE \norm{\zb_k}^2
     &\le 
    \underbrace{\sum_{k=\tau_{t}}^{t} \frac{1}{\beta_k} \big(\EE \norm{\zb_k}^2 - \EE \norm{\zb_{k+1}}^2  \big)}_{I_1}
    + 
    2 \underbrace{\sum_{k=\tau_{t}}^{t}  \EE \Lambda(\btheta_k, \bomega_k, O_k)}_{I_2}
    \notag \\ 
    & \qquad 
    + 
    2 L_{*} G_{\btheta} 
    \underbrace{\sum_{k=\tau_{t}}^{t}   \frac{\alpha_k}{\beta_k} \sqrt{\EE \norm{\zb_k}}}_{I_3} 
    +
    \underbrace{\sum_{k=\tau_{t}}^{t}   \sqrt{\EE [y_k^2]} \cdot \sqrt{\EE \norm{\zb_k}}}_{I_4}
    + 
    C_{q} \underbrace{\sum_{k=\tau_{t}}^{t} \beta_{k}}_{I_5} .
\end{align}
We will see that the Markovian noise $I_2$, the ``slowly drifting policy" term $I_3$ and the estimation bias $I_4$ from $\eta_t$ are significant, and bounding the Markovian term is another challenge.

\noindent\textbf{Step 2: bounding the Markovian bias.}  We first decompose $\Lambda(\btheta_t, \bomega_t, O_t)$ as follows.
\begin{align}\label{Eq:Markov-decomposition}
    \Lambda(\btheta_t, \bomega_t, O_t)
    &=
    \big(\Lambda(\btheta_t, \bomega_t, O_t) - \Lambda(\btheta_{t-\tau}, \bomega_t, O_t) \big) 
    + \big(\Lambda(\btheta_{t-\tau}, \bomega_t, O_t) - \Lambda(\btheta_{t-\tau}, \bomega_{t-\tau}, O_t) \big) 
    \notag \\
    &\qquad+
    \big(\Lambda(\btheta_{t-\tau}, \bomega_{t-\tau}, O_t) - \Lambda(\btheta_{t-\tau}, \bomega_{t-\tau}, \tilde{O}_t) \big)
    + 
    \Lambda(\btheta_{t-\tau}, \bomega_{t-\tau}, \tilde{O}_t). 
\end{align}
The motivation is to employ the uniform ergodicity defined by Assumption \ref{assum:ergodicity}. This technique was first introduced by \citet{bhandari2018finite} to address the Markovian noise in policy evaluation. \citet{zou2019finite} extended to the Q-learning setting where the parameter itself both keeps updated and determines the behavior policy. 
In this work we take one step further to consider that the policy parameter $\btheta_t$ is changing, and the evaluation parameter $\bomega_t$ is updated. The analysis relies on the auxiliary Markov chain constructed by \citet{zou2019finite}, which is obtained by repeatedly applying policy $\pi_{\btheta_{t-\tau}}$:
\begin{align*}
    s_{t-\tau} \xrightarrow{\btheta_{t-\tau}} 
    a_{t-\tau} \xrightarrow{\cP} 
    s_{t-\tau+1} \xrightarrow{\btheta_{t-\tau}} 
    \tilde{a}_{t-\tau+1} \xrightarrow{\cP} 
    \tilde{s}_{t-\tau+2}
    \xrightarrow{\btheta_{t-\tau}} 
    \tilde{a}_{t-\tau+2} \xrightarrow{\cP} 
    \cdots 
    \xrightarrow{\cP}
    \tilde{s}_{t} \xrightarrow{\btheta_{t-\tau}} 
    \tilde{a}_{t} \xrightarrow{\cP} 
    \tilde{s}_{t+1}.
\end{align*}
For reference, recall that the original Markov chain is given by:
\begin{align*}
    s_{t-\tau} \xrightarrow{\btheta_{t-\tau}} 
    a_{t-\tau} \xrightarrow{\cP} 
    s_{t-\tau+1} \xrightarrow{\btheta_{t-\tau+1}} 
    {a}_{t-\tau+1} \xrightarrow{\cP} 
    {s}_{t-\tau+2} \xrightarrow{\btheta_{t-\tau+2}} 
    {a}_{t-\tau+2} \xrightarrow{\cP} 
    \cdots 
    \xrightarrow{\cP}
    {s}_{t} \xrightarrow{\btheta_{t}} 
    {a}_{t} \xrightarrow{\cP} 
    {s}_{t+1}.
\end{align*}
By Lipschitz conditions, we can bound the first two terms in \eqref{Eq:Markov-decomposition}. The third term will be bounded by the total variation between $s_k$ and $\tilde{s}_k$, which is achieved by recursively bounding total variation between $s_{k-1}$ and $\tilde{s}_{k-1}$. 

In fact, the Markovian noise $\Gamma(O_t, \btheta_t)$ in Section \ref{subsec:proof-actor} is obtained in a similar way. Due to the space limit, we only present how to bound the more complicated $\Lambda(\btheta_t, \bomega_t, O_t)$. \\
We have the final form as:
\begin{align}
    \Lambda(\btheta_t, \bomega_t, O_t)
    & \le 
    C_{1}(\tau + 1) \norm{\btheta_{t} - \btheta_{t-\tau}} + C_{2} m \rho^{\tau - 1} 
    + C_{3} \norm{\bomega_t - \bomega_{t-\tau}},
    \label{Eq:Lambda-bound}
\end{align}
where $C_{1} = 2 U_{\delta}^2 |\cA| L (1 + \lceil \log_{\rho} m^{-1}\rceil + 1/(1- \rho) ) + 2 U_{\delta} L_{*}, C_{2} = 2 U_{\delta}^2, C_{3} = 4 U_{\delta}$ are constants. 

\noindent\textbf{Step 3: integrating the results.}
By some calculation, terms $I_1$, $I_2$ and $I_4$ can be respectively bounded as follows (set $\tau = \tau_t$ defined in \eqref{eq:def_mixing_time}). The detailed derivation can be found in Appendix~\ref{subsec:proof-critic},
\begin{align*}
    I_1 
    & = 
    4R_{\bomega}^2 \frac{1}{\beta_t}
    = \cO(t^{\nu}), \\
    I_2 
    & \le 
    C_1 G_{\btheta} (\tau_{t} + 1)^2 \sum_{k=0}^{t-\tau_{t}}\alpha_{k}
    +
    C_2 (t-\tau_{t}+1) \alpha_t
    +
    C_3 U_{\delta} \tau_{t} \sum_{k=0}^{t-\tau_{t}} \beta_k 
    \\ 
    & = 
    \cO
    \big(
    (\log t)^2 t^{1-\sigma}
    \big)
    +
    \cO(t^{1-\sigma})
    +
    \cO
    \big(
    (\log t) t^{1-\nu}
    \big) \\ 
    & = 
    \cO\big((\log t) t^{1-\nu}\big), \\
    I_5 
    & = 
    \sum_{k=0}^{t-\tau_{t}} \beta_k = \cO(t^{1-\nu}).
\end{align*}
The $\log t$ comes from $\tau_t = \cO(\log t)$. Performing the same technique on $I_3$ as in Step 3 in the proof sketch of Theorem \ref{thm:actor}, we have 
\begin{align*}
    I_3 
    & \le 
    \bigg(
    \sum_{k=0}^{t-\tau_{t}}   \frac{\alpha_k^2}{\beta_k^2}
    \bigg)^{\frac{1}{2}}
    \bigg(
    \sum_{k=\tau_{t}}^{t}  
    \EE \norm{\zb_k}^2
    \bigg)^{\frac{1}{2}}, \\ 
    I_4
    & \le 
    \bigg(
    \sum_{k=\tau_{t}}^{t}  
    \EE [y_k^2]
    \bigg)^{\frac{1}{2}}
    \bigg(
    \sum_{k=\tau_{t}}^{t}  
    \EE \norm{\zb_k}^2
    \bigg)^{\frac{1}{2}}.
\end{align*}
After plugging each term into \eqref{Eq:critic-sum}, we have that 
\begin{align*}
    2 \lambda \sum_{k=\tau_{t}}^{t} \EE \norm{\zb_k}^2
    & \le
    \cO(t^{\nu})
    +
    \cO \big((\log t) t^{1-\nu} \big)
    \\
    & \qquad +
    2 L_* G_{\btheta}
    \bigg(
    \sum_{k=0}^{t-\tau_{t}}   \frac{\alpha_k^2}{\beta_k^2}
    \bigg)^{\frac{1}{2}}
    \bigg(
    \sum_{k=\tau_{t}}^{t}  
    \EE \norm{\zb_k}^2
    \bigg)^{\frac{1}{2}}
    +
    \bigg(
    \sum_{k=0}^{t-\tau_{t}}   \EE [y_k^2]
    \bigg)^{\frac{1}{2}}
    \bigg(
    \sum_{k=\tau_{t}}^{t}  
    \EE \norm{\zb_k}^2
    \bigg)^{\frac{1}{2}}.
\end{align*}
This inequality actually resembles \eqref{eqn:sketch-actor-bone}. Following the same procedure as the proof of Theorem~\ref{thm:actor}, starting from~\eqref{eqn:sketch-actor-bone}, we can finally get
\begin{align*}
    \frac{1}{1+t-\tau_t} \sum_{k=\tau_t}^{t} \EE \norm{\zb_k}^2 = 
    \cO
    \bigg(\frac{1}{t^{1-\nu}}
    \bigg)
    +
    \cO
    \bigg(\frac{\log t}{t^{\nu}}
    \bigg)
    +
    \cO
    \bigg(\frac{1}{t^{2(\sigma - \nu)}}
    \bigg).
\end{align*}
Note that this requires the step sizes $\gamma_t$ and $\beta_t$ should be of the same order $\cO(t^{-\nu})$.

\section{Preliminary Lemmas} \label{sec:prel-lemma}
These useful lemmas are frequently applied throughout the proof.
\subsection{Probabilistic Lemmas}
The first two statements in the following lemma come from \citet{zou2019finite}.
\begin{lemma} %[\cite{zou2019finite}] 
\label{lemma:prob-mixing}
For any $\btheta_1$ and $\btheta_2$, it holds that
\begin{align*}
d_{TV}(\mu_{\btheta_1}, \mu_{\btheta_2}) 
& \le 
|\cA|L \bigg( \lceil \log_{\rho}m^{-1} \rceil+ \frac{1}{1-\rho} \bigg) \norm{\btheta_1 - \btheta_2}, \\
d_{TV}(\mu_{\btheta_1} \otimes \pi_{\btheta_1}, \mu_{\btheta_2}\otimes \pi_{\btheta_2}) 
& \le 
|\cA|L \bigg(1 + \lceil \log_{\rho}m^{-1} \rceil + \frac{1}{1-\rho}\bigg) \norm{\btheta_1 - \btheta_2}, \\
d_{TV}(\mu_{\btheta_1} \otimes \pi_{\btheta_1} \otimes \cP, \mu_{\btheta_2}\otimes \pi_{\btheta_2} \otimes \cP) 
& \le 
|\cA|L \bigg( 1 + \lceil \log_{\rho}m^{-1} \rceil + \frac{1}{1-\rho} \bigg) \norm{\btheta_1 - \btheta_2}.
\end{align*}

\begin{proof}
The proof of the first two inequality is exactly the same as  Lemma A.3 in \citet{zou2019finite}, which mainly depends on Theorem 3.1 in \citet{mitrophanov2005sensitivity}. Here we provide the proof of  the third inequality. Note that 
\begin{align}
    & d_{TV}(\mu_{\btheta_1} \otimes \pi_{\btheta_1} \otimes \cP, \mu_{\btheta_2}\otimes \pi_{\btheta_2} \otimes \cP) 
    \notag \\
    & = 
    \frac{1}{2}
    \int_{\cS} \sum_{\cA} \int_{\cS} 
    \big|
    \mu_{\btheta_1}(ds) \pi_{\btheta_1}(a|s) \cP(ds'|s,a)
    -
    \mu_{\btheta_2}(ds) \pi_{\btheta_2}(a|s) \cP(ds'|s,a)
    \big| 
    \notag \\
    & =
    \frac{1}{2}
    \int_{\cS} \sum_{\cA} \int_{\cS} 
    \cP(ds'|s,a)
    \big|
    \mu_{\btheta_1}(ds) \pi_{\btheta_1}(a|s) 
    -
    \mu_{\btheta_2}(ds) \pi_{\btheta_2}(a|s) 
    \big| 
    \notag \\
    & =
    \frac{1}{2}
    \int_{\cS} \sum_{\cA}
    \big|
    \mu_{\btheta_1}(ds) \pi_{\btheta_1}(a|s) 
    -
    \mu_{\btheta_2}(ds) \pi_{\btheta_2}(a|s) 
    \big| 
    \notag \\
    & = d_{TV}(\mu_{\btheta_1} \otimes \pi_{\btheta_1}, \mu_{\btheta_2}\otimes \pi_{\btheta_2}),
    \label{eqn:peeling}
\end{align}
so it has the same upper bound as the second inequality.
\end{proof}
\end{lemma}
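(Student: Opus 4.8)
The plan is to treat the three inequalities separately. The first two are essentially a restatement of Lemma~A.3 in \citet{zou2019finite}, which rests on the Markov-chain perturbation bound of \citet{mitrophanov2005sensitivity}; the third follows from a short marginalization (``peeling'') argument once the second is available.

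For the first inequality I would regard $\mu_\btheta$ as the unique stationary distribution of the kernel $P_\btheta(s,\cdot):=\sum_{a\in\cA}\pi_\btheta(a|s)\,\cP(\cdot|s,a)$, and combine two facts: (i) by Assumption~\ref{assum:policy-lipschitz-bounded}(c) together with $|\cA|<\infty$, for every state $s$ we have $d_{TV}\big(P_{\btheta_1}(s,\cdot),P_{\btheta_2}(s,\cdot)\big)\le\tfrac{1}{2}\sum_{a\in\cA}\big|\pi_{\btheta_1}(a|s)-\pi_{\btheta_2}(a|s)\big|\le\tfrac{1}{2}|\cA|L\norm{\btheta_1-\btheta_2}$; and (ii) by Assumption~\ref{assum:ergodicity} the chain driven by $\pi_{\btheta_1}$ is uniformly ergodic with parameters $(m,\rho)$. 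Mitrophanov's theorem then controls $d_{TV}(\mu_{\btheta_1},\mu_{\btheta_2})$ by the kernel perturbation in (i) times an effective mixing-time factor $\lceil\log_\rho m^{-1}\rceil+(1-\rho)^{-1}$, which is the geometric-tail sum $\sum_{\tau\ge0}\min\{1,m\rho^\tau\}$ up to rounding; carrying the constants through as in \citet{zou2019finite} gives the stated bound.

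For the second inequality I would insert an intermediate measure and use the triangle inequality for $d_{TV}$:
\begin{align*}
d_{TV}(\mu_{\btheta_1}\otimes\pi_{\btheta_1},\mu_{\btheta_2}\otimes\pi_{\btheta_2})
&\le d_{TV}(\mu_{\btheta_1}\otimes\pi_{\btheta_1},\mu_{\btheta_2}\otimes\pi_{\btheta_1}) \\
&\quad + d_{TV}(\mu_{\btheta_2}\otimes\pi_{\btheta_1},\mu_{\btheta_2}\otimes\pi_{\btheta_2}).
\end{align*}
The first term equals $d_{TV}(\mu_{\btheta_1},\mu_{\btheta_2})$ since the conditional $\pi_{\btheta_1}(\cdot|s)$ is shared, so the first inequality applies; the second is at most $\EE_{s\sim\mu_{\btheta_2}}\big[d_{TV}(\pi_{\btheta_1}(\cdot|s),\pi_{\btheta_2}(\cdot|s))\big]\le\tfrac{1}{2}|\cA|L\norm{\btheta_1-\btheta_2}$, again by Assumption~\ref{assum:policy-lipschitz-bounded}(c). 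Adding the two contributions produces the additional $1$ in the prefactor.

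For the third inequality I would expand the definition of total variation on the joint law of $(s,a,s')$ and use that the kernel $\cP(ds'|s,a)$ is identical in both measures: being nonnegative, it factors out of the absolute value, and then $\int_{\cS}\cP(ds'|s,a)=1$ integrates it away, leaving exactly $d_{TV}(\mu_{\btheta_1}\otimes\pi_{\btheta_1},\mu_{\btheta_2}\otimes\pi_{\btheta_2})$, to which the second inequality applies verbatim. I expect this last step to be purely formal; the real work — and the main obstacle — is the Lipschitz dependence of the stationary distribution on $\btheta$ in the first two parts, which is where the perturbation theory of \citet{mitrophanov2005sensitivity} is essential and where the effective mixing-time constant enters.
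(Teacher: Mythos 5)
Your proposal is correct and takes essentially the same route as the paper: the first two inequalities rest on the Mitrophanov perturbation bound exactly as in Lemma A.3 of \citet{zou2019finite} (which the paper simply cites), and your third-inequality argument --- factoring the common kernel $\cP(ds'|s,a)$ out of the absolute value and integrating it away --- is precisely the paper's peeling step \eqref{eqn:peeling}. Your triangle-inequality reconstruction of the second inequality is also sound, and in fact gives a slightly sharper additive term ($\tfrac{1}{2}|\cA|L\norm{\btheta_1-\btheta_2}$ instead of $|\cA|L\norm{\btheta_1-\btheta_2}$), which is consistent with the stated bound.
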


\begin{lemma} \label{lemma:auxiliary-chain}
Given time indexes $t$ and $\tau$ such that $t \ge \tau > 0$, consider the auxiliary Markov chain starting from $s_{t-\tau}$. Conditioning on $s_{t-\tau+1}$ and $\btheta_{t-\tau}$, the Markov chain is obtained by repeatedly applying policy $\pi_{\btheta_{t-\tau}}$.
\begin{align*}
    s_{t-\tau} \xrightarrow{\btheta_{t-\tau}} 
    a_{t-\tau} \xrightarrow{\cP} 
    s_{t-\tau+1} \xrightarrow{\btheta_{t-\tau}} 
    \tilde{a}_{t-\tau+1} \xrightarrow{\cP} 
    \tilde{s}_{t-\tau+2} \xrightarrow{\btheta_{t-\tau}} 
    \tilde{a}_{t-\tau+2} \xrightarrow{\cP} 
    \cdots 
    \xrightarrow{\cP}
    \tilde{s}_{t} \xrightarrow{\btheta_{t-\tau}} 
    \tilde{a}_{t} \xrightarrow{\cP} 
    \tilde{s}_{t+1}.
\end{align*}
For reference, recall that the original Markov chain is given as:
\begin{align*}
    s_{t-\tau} \xrightarrow{\btheta_{t-\tau}} 
    a_{t-\tau} \xrightarrow{\cP} 
    s_{t-\tau+1} \xrightarrow{\btheta_{t-\tau+1}} 
    {a}_{t-\tau+1} \xrightarrow{\cP} 
    {s}_{t-\tau+2} \xrightarrow{\btheta_{t-\tau+2}} 
    {a}_{t-\tau+2} \xrightarrow{\cP} 
    \cdots 
    \xrightarrow{\cP}
    {s}_{t} \xrightarrow{\btheta_{t}} 
    {a}_{t} \xrightarrow{\cP} 
    {s}_{t+1}.
\end{align*}
Throughout this lemma, we always condition the expectation on $s_{t-\tau+1}$ and $\btheta_{t-\tau}$ and omit this in order to simplify the presentation. Under the setting introduced above, we have:
\begin{align}
    d_{TV} \big( \PP(s_{t+1} \in \cdot), \PP(\tilde{s}_{t+1} \in \cdot) \big) 
    & \le 
    d_{TV} \big( \PP(O_{t} \in \cdot), \PP(\tilde{O}_{t} \in \cdot) \big), \label{eqn:aux1}\\
    d_{TV} \big( \PP(O_{t} \in \cdot), \PP(\tilde{O}_{t} \in \cdot) \big) 
    & = 
    d_{TV} \big( \PP((s_t, a_t) \in \cdot), \PP((\tilde{s}_t, \tilde{a}_t) \in \cdot) \big), 
    \label{eqn:aux2}\\
    d_{TV} \big( \PP((s_t, a_t) \in \cdot), \PP((\tilde{s}_t, \tilde{a}_t) \in \cdot) \big) 
    & \le 
    d_{TV} \big( \PP(s_t \in \cdot), \PP((\tilde{s}_t \in \cdot) \big)
    +
    \frac{1}{2} |\cA| L \EE \big[\norm{\btheta_{t} - \btheta_{t-\tau}}\big]. \label{eqn:aux3}
\end{align}
\begin{proof} [Proof of \eqref{eqn:aux1}]
By the Law of Total Probability,
\begin{align*}
    \PP(s_{t+1} \in \cdot)
    & = 
    \int_{\cS} \sum_{\cA} 
    \PP(s_{t} = ds , a_{t} = a, s_{t+1} \in \cdot),
\end{align*}
and a similar argument also holds for $\tilde{O}_t$. Then we have 
\begin{align*}
    & 2d_{TV} \big( \PP(s_{t+1} \in \cdot), \PP(\tilde{s}_{t+1} \in \cdot) \big) \\
    & = 
    \int_{\cS} \bigg| \int_{\cS} \sum_{\cA} 
    \PP(s_{t} = ds , a_{t} = a, s_{t+1} = ds')
    -
    \int_{\cS} \sum_{\cA} 
    \PP(s_{t} = ds , a_{t} = a, s_{t+1} = ds') 
    \bigg| \\
    & \le 
    \int_{\cS} \int_{\cS} \sum_{\cA} 
    \big|  
    \PP(s_{t} = ds , a_{t} = a, s_{t+1} = ds')
    -
    \PP(s_{t} = ds , a_{t} = a, s_{t+1} = ds') \big| \\
    & = 
    \int_{\cS} \int_{\cS} \sum_{\cA} 
    \big|  
    \PP(O_t = (ds , a, ds'))
    -
    \PP(\tilde{O}_t = (ds , a, ds')) \big| \\ 
    & = 
    2d_{TV} \big( \PP(O_{t} \in \cdot), \PP(\tilde{O}_{t} \in \cdot) \big).
\end{align*}
The last equality requires exchange of integral, which should be guaranteed by the regularity. 
\end{proof}
\begin{proof} [Proof of \eqref{eqn:aux2}]
\begin{align*}
    & 2d_{TV} \big( \PP(O_{t} \in \cdot), \PP(\tilde{O}_{t} \in \cdot) \big) \\
    &=  
    \int_{\cS} \sum_{\cA} \int_{\cS}
    \big|  
    \PP(O_t  = (ds , a, ds'))
    -
    \PP(\tilde{O}_t  = (ds , a, ds')) \big| \\
    &=  
    \int_{\cS} \sum_{\cA} \int_{\cS}
    \big|  
    \cP(ds' | s, a) \PP((s_t, a_t)  = (ds, a))
    -
    \cP(ds' | s, a) \PP((\tilde{s}_t, \tilde{a}_t)  = (ds, a)) \big| \\
    &=  
    \int_{\cS} \sum_{\cA} \int_{\cS}
    \cP(ds' | s, a) \big|  
    \PP((s_t, a_t)  = (ds, a))
    -
    \PP((\tilde{s}_t, \tilde{a}_t)  = (ds, a)) \big| \\
    &=  
    \int_{\cS} \sum_{\cA} 
    \big|  
    \PP((s_t, a_t)  = (ds, a))
    -
    \PP((\tilde{s}_t, \tilde{a}_t)  = (ds, a)) \big| \\
   & =  
    2d_{TV} \big( \PP((s_t, a_t) \in \cdot), \PP((\tilde{s}_t, \tilde{a}_t) \in \cdot) \big).
\end{align*}
\end{proof}
\begin{proof}[Proof of \eqref{eqn:aux3}]
Because $\btheta_{t}$ is also dependent on $s_{t}$, we make it clear here that
\begin{align*}
    \PP\big((s_t, a_t) = (ds, a)\big)
    &=
    \int_{\btheta\in\RR^d}
    \PP(s_t = ds) \PP(\btheta_t = d\btheta | s_t = ds) \PP(a_t = a | s_t = ds, \btheta_t = d\btheta) \\
    & =
    \int_{\btheta\in\RR^d}
    \PP(s_t = ds) \PP(\btheta_t = d\btheta | s_t = ds) \pi_{\btheta_t}(a|ds) \\
    & = 
    \PP(s_t = ds)
    \int_{\btheta\in\RR^d}
    \PP(\btheta_t = d\btheta | s_t = ds) \pi_{\btheta_t}(a|ds) \\
    & = 
    \PP(s_t = ds) \EE\big[\pi_{\btheta_t}(a|ds) | s_t = ds\big].
\end{align*}
Therefore, the total variance can be bounded as
\begin{align*}
    & 2d_{TV} \big( \PP((s_t, a_t) \in \cdot), \PP((\tilde{s}_t, \tilde{a}_t) \in \cdot) \big) \\
   & =  
    \int_{\cS} \sum_{\cA} 
    \big |
    \PP(s_t = ds) \EE[ \pi_{\btheta_t}(a|ds) | s_t = ds]
    -
    \PP(\tilde{s}_t = ds) \pi_{\btheta_{t-\tau}}(a|ds)
    \big | \\
   & =  
    \int_{\cS} \sum_{\cA} 
    \big |
    \PP(s_t = ds) \EE[ \pi_{\btheta_t}(a|ds) | s_t = ds]
    -
    \PP(s_t = ds) \pi_{\btheta_{t-\tau}}(a|ds)
    \big | \\
    & \qquad+
    \int_{\cS} \sum_{\cA} 
    \big |
    \PP(s_t = ds) \pi_{\btheta_{t-\tau}}(a|ds)
    -
    \PP(\tilde{s}_t = ds) \pi_{\btheta_{t-\tau}}(a|ds)
    \big |\\
   & =  
    \int_{\cS} 
    \PP(s_t = ds) \sum_{\cA} 
    \big |
     \EE[ \pi_{\btheta_t}(a|ds) | s_t = ds]
    -
     \pi_{\btheta_{t-\tau}}(a|ds)
    \big | \\
    & \qquad +2d_{TV} \big( \PP(s_t \in \cdot), \PP((\tilde{s}_t \in \cdot) \big) \\
   & \le  
    |\cA| L \EE \big[\norm{\btheta_{t} - \btheta_{t-\tau}} \big] 
    +2d_{TV} \big( \PP(s_t \in \cdot), \PP((\tilde{s}_t \in \cdot) \big),
\end{align*}
where the inequality holds due to the Lipschitz continuity of the policy as in Assumption \ref{assum:policy-lipschitz-bounded}.
\end{proof}
\end{lemma}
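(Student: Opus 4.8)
The plan is to establish the three inequalities \eqref{eqn:aux1}--\eqref{eqn:aux3} one at a time by short, self-contained computations, keeping in mind that every expectation and probability below is implicitly conditioned on $s_{t-\tau+1}$ and $\btheta_{t-\tau}$. The unifying principle is the pair of elementary facts about total variation: (i) pushing two laws forward through a \emph{common} Markov kernel is a non-expansion for $d_{TV}$, and (ii) taking a marginal is a non-expansion; when a kernel-append step can be undone by the corresponding marginalization, one gets equality rather than just an inequality.

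For \eqref{eqn:aux1}, I would write $\PP(s_{t+1}\in\cdot)$ as the marginal of $\PP(O_t\in\cdot)$ over the first two coordinates via the law of total probability, do the same for $\tilde O_t$, expand $2\,d_{TV}$ as an integral of a difference, and move the absolute value inside the $ds\,da$ integration by the triangle inequality; what remains is exactly $2\,d_{TV}\big(\PP(O_t\in\cdot),\PP(\tilde O_t\in\cdot)\big)$. The only thing to check is that Fubini applies to exchange the integrals, which is guaranteed by the standing regularity on the kernels. For \eqref{eqn:aux2}, I would use that both chains take the last transition through the \emph{same} kernel, so $\PP\big(O_t=(ds,a,ds')\big)=\PP\big((s_t,a_t)=(ds,a)\big)\,\cP(ds'\mid s,a)$ and similarly for $\tilde O_t$; pulling the nonnegative factor $\cP(ds'\mid s,a)$ outside the absolute value and integrating it out over $ds'$ (it integrates to $1$) collapses $2\,d_{TV}\big(\PP(O_t\in\cdot),\PP(\tilde O_t\in\cdot)\big)$ exactly to $2\,d_{TV}\big(\PP((s_t,a_t)\in\cdot),\PP((\tilde s_t,\tilde a_t)\in\cdot)\big)$, which is the asserted equality.

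The crux is \eqref{eqn:aux3}, and I expect it to be the main obstacle, because here the two chains do \emph{not} share the action kernel: in the original chain $a_t\sim\pi_{\btheta_t}(\cdot\mid s_t)$ with $\btheta_t$ itself a random function of $s_t$ and the past, whereas in the auxiliary chain $\tilde a_t\sim\pi_{\btheta_{t-\tau}}(\cdot\mid\tilde s_t)$. I would first make the dependence explicit by writing $\PP\big((s_t,a_t)=(ds,a)\big)=\PP(s_t=ds)\,\EE\big[\pi_{\btheta_t}(a\mid ds)\mid s_t=ds\big]$, i.e.\ integrating out the conditional law of $\btheta_t$ given $s_t$. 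Then, inside $2\,d_{TV}$, I add and subtract $\PP(s_t=ds)\,\pi_{\btheta_{t-\tau}}(a\mid ds)$ and apply the triangle inequality. The first resulting term, $\int_{\cS}\PP(s_t=ds)\sum_{a}\big|\EE[\pi_{\btheta_t}(a\mid ds)\mid s_t=ds]-\pi_{\btheta_{t-\tau}}(a\mid ds)\big|$, is bounded by $|\cA|L\,\EE\norm{\btheta_t-\btheta_{t-\tau}}$ using the policy-Lipschitz property Assumption \ref{assum:policy-lipschitz-bounded}(c) for each fixed $a$, Jensen's inequality to pull the conditional expectation through $|\cdot|$, and $\sum_a\le|\cA|\max_a$. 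The second term carries the common action kernel $\pi_{\btheta_{t-\tau}}$ applied to $s_t$ versus $\tilde s_t$, so pulling that kernel out exactly as in \eqref{eqn:aux2} reduces it to $2\,d_{TV}\big(\PP(s_t\in\cdot),\PP(\tilde s_t\in\cdot)\big)$; dividing by $2$ gives the claim.

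This final decomposition is the genuinely new ingredient relative to the auxiliary-chain construction of \citet{zou2019finite}, where the behaviour policy is held fixed across the whole window so that the $(s_t,a_t)$ versus $(\tilde s_t,\tilde a_t)$ comparison is a pure common-kernel step. Accommodating the drifting actor parameter $\btheta_t$ is precisely what introduces the extra additive term $\tfrac12|\cA|L\,\EE\norm{\btheta_t-\btheta_{t-\tau}}$, and, combined downstream with a telescoping bound on $\sum_k\alpha_k$, this is what will later make the Markovian-bias term $\Lambda(\btheta_t,\bomega_t,O_t)$ controllable; the recursion in $k$ used to turn \eqref{eqn:aux1}--\eqref{eqn:aux3} into a bound on $d_{TV}(\PP(s_t\in\cdot),\PP(\tilde s_t\in\cdot))$ itself (via the one-step versions applied repeatedly from $s_{t-\tau+1}$) is routine once these three inequalities are in hand.
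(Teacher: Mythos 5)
Your proposal is correct and follows essentially the same route as the paper's proof: marginalization plus the triangle inequality for \eqref{eqn:aux1}, factoring out the common transition kernel $\cP$ for the equality \eqref{eqn:aux2}, and for \eqref{eqn:aux3} writing $\PP((s_t,a_t)=(ds,a))=\PP(s_t=ds)\,\EE[\pi_{\btheta_t}(a|ds)\mid s_t=ds]$, adding and subtracting $\PP(s_t=ds)\,\pi_{\btheta_{t-\tau}}(a|ds)$, and bounding the first piece via Assumption \ref{assum:policy-lipschitz-bounded}(c). The minor explicit mentions of Jensen's inequality and Fubini only make precise steps the paper leaves implicit.
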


\subsection{Lipschitzness of the Optimal Parameter} \label{subsec:proof-Lipschitz}
This section is used to present the proof of Proposition \ref{prop:optimal-lipschitz}.

\begin{proof}[Proof of Proposition \ref{prop:optimal-lipschitz}]
\citet{sutton2018reinforcement} has proved in Chapter 9 the fact that the linear TD(0) will converge to the optimal point (w.r.t. Mean Square Projected Bellman Error) which satisfies
\begin{align*}
    \Ab_i \bomega^*(\btheta_i) = \bbb_i,
\end{align*}
where $\Ab_i := \EE [\bphi(s)(\bphi(s) - \bphi(s'))^{\top}]$ and $\bbb_i := \EE [(r(s,a)-r(\btheta_i)) \bphi(s)]$. The expectation is taken over the stationary distribution $s \sim \mu_{\btheta_i}$, the action $a \sim \pi_{\btheta_i}(\cdot|s)$ and the transition probability kernel $s' \sim \cP(\cdot|s,a)$.

Now we denote $\bomega^*_1, \bomega^*_2, \hat{\bomega}_1$ as the unique solutions of the following equations respectively:
\begin{align*}
     \Ab_1 \bomega^*_1 & = \bbb_1, \\
     \Ab_2 \hat{\bomega}_1 & = \bbb_1, \\
     \Ab_2 \bomega^*_2 & = \bbb_2.
\end{align*}
First we bound $\norm{\bomega^*_1 - \hat{\bomega}_1}$. By definition, we have
\begin{align*}
    \norm{\bomega^*_1 - \hat{\bomega}_1} 
    & \le
    \norm{\Ab_{1}^{-1} - \Ab_{2}^{-1}}
    \norm{\bbb_1}.
\end{align*}
It can be easily shown that
\begin{align*}
    \Ab_{1}^{-1} - \Ab_{2}^{-1}
    & = 
    \Ab_{1}^{-1}(\Ab_{2} - \Ab_{1}) \Ab_{2}^{-1},
\end{align*}
which further gives
\begin{align*}
    \norm{\bomega^*_1 - \hat{\bomega}_1} 
    & \le
    \norm{\Ab_{1}^{-1}}
    \norm{\Ab_{1} - \Ab_{2}}
    \norm{\Ab_{2}^{-1}}
    \norm{\bbb_1}.
\end{align*}
Then we bound $\norm{\hat{\bomega}_1 - \bomega^*_2}$,
\begin{align*}
    \norm{\hat{\bomega}_1 - \bomega^*_2}
    & \le 
    \norm{\Ab_{2}^{-1}}
    \norm{\bbb_1 - \bbb_2}.
\end{align*}
By Assumption \ref{assum:negative-definite},
the eigenvalues of $\Ab_i$ are bounded from below by $\lambda > 0$, therefore $\norm{\Ab_i^{-1}} \le \lambda^{-1}$. Also $\norm{\bbb_1} \le U_r$ due to the assumption that $|r(s,a)| \le U_r$ and $\norm{\bphi(s)} \le 1$. To bound $\norm{\Ab_{1} - \Ab_{2}}$ and $\norm{\bbb_1 - \bbb_2}$, we first note that 
\begin{align*}
    \norm{\Ab_{1} - \Ab_{2}}_2
    & \le 
    \sup_{s,s'\in\cS} \big \|\bphi(s)(\bphi(s) -  \bphi(s'))^{\top}
    \big \|_2
    \cdot 
    2 d_{TV} \big(
    \PP(O^1 \in \cdot), \PP(O^2 \in \cdot) \big),
    \\
    & \le 
    4 d_{TV} \big(
    \PP(O^1 \in \cdot), \PP(O^2 \in \cdot) \big) \\
    \norm{\bbb_1 - \bbb_2}
    & \le
    \big\|
    \EE[r(s^{1},a^{1})\bphi(s^1)]
    -
    \EE[r(s^{2},a^{2})\bphi(s^2)]
    \big\|
    +
    \big\|
    r(\btheta_1)\EE[\bphi(s^1)]
    -
    r(\btheta_2)\EE[\bphi(s^2)]
    \big\|
    \\
    & \le 
    6 U_r d_{TV} \big(
    \PP(O^1 \in \cdot), \PP(O^2 \in \cdot) \big)
    ,
\end{align*}
where $O^i$ is the tuple obtained by $s^i \sim \mu_{\btheta_i}(\cdot)$, $a^i \sim \pi_{\btheta_i}(\cdot|s^i)$ and $(s')^i \sim \cP(\cdot| s^i, a^i)$. And the total variation norm can be bounded by Lemma \ref{lemma:prob-mixing} as:
\begin{align*}
    d_{TV} \big(
    \PP(O^1 \in \cdot), \PP(O^2 \in \cdot) \big)
    & \le 
    |\cA|L \bigg(1 + \lceil \log_{\rho}m^{-1} \rceil + \frac{1}{1-\rho} \bigg) \norm{\btheta_1 - \btheta_2}.
\end{align*}
Collecting the results above gives
\begin{align*}
    \norm{\bomega^*_1 - \bomega^*_2}
    & \le 
    \norm{\bomega^*_1 - \hat{\bomega}_1}
    +
    \norm{\hat{\bomega}_1 - \bomega^*_2} \\ 
    & \le 
    (2\lambda^{-2} U_r  + 3\lambda^{-1} U_r) |\cA|L \bigg(1 + \lceil \log_{\rho}m^{-1} \rceil + \frac{1}{1-\rho} \bigg) \norm{\btheta_1 - \btheta_2},
\end{align*}
and we set $L_* := (2\lambda^{-2} U_r  + 3\lambda^{-1} U_r) |\cA|L (1 + \lceil \log_{\rho}m^{-1} \rceil + 1/(1-\rho) )$ to obtain the final result.
\end{proof}

\subsection{Asymptotic Equivalence} \label{subsec:proof-equi-asym}
\begin{lemma} \label{lemma:equi-asym}
Suppose $\{ a_i \}$ is a non-negative, bounded sequence, $\tau := C_1 + C_2 \log t (C_2>0)$, then for any large enough $t$ such that $t \ge \tau > 0$, we have:
\begin{align*}
    \frac{1}{1+t-\tau} \sum_{k=\tau}^{t} a_i
    & = 
    \cO
    \bigg(
    \frac{1}{t} \sum_{k=1}^{t} a_i
    \bigg), \\
    \frac{1}{t} \sum_{k=1}^{t} a_i
    & = 
    \cO\bigg(
    \frac{\log t}{t}
    \bigg)
    +
    \cO
    \bigg(
    \frac{1}{1+t-\tau} \sum_{k=\tau}^{t} a_i
    \bigg).
\end{align*}
\end{lemma}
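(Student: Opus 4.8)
The plan is to derive both asymptotic identities directly from the definition of $\cO(\cdot)$, relying on only two elementary observations: first, that $\{a_i\}$ is uniformly bounded, say $a_i \le M$ for all $i$; and second, that since $\tau = C_1 + C_2 \log t = o(t)$, there is a threshold $t_0$ such that for all $t \ge t_0$ we have $1 \le \tau \le t/2$, and hence $t/2 \le 1 + t - \tau \le t$. The bulk of the argument is then just splitting a sum and applying these two bounds.

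For the first identity, I would note that all the $a_i$ are non-negative, so $\sum_{k=\tau}^{t} a_k \le \sum_{k=1}^{t} a_k$. Combining this with $1 + t - \tau \ge t/2$ gives
\[
\frac{1}{1+t-\tau} \sum_{k=\tau}^{t} a_k \;\le\; \frac{2}{t} \sum_{k=1}^{t} a_k ,
\]
which is exactly the claimed $\cO\big(\tfrac1t\sum_{k=1}^{t} a_k\big)$ bound.

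For the second identity, I would split $\sum_{k=1}^{t} a_k = \sum_{k=1}^{\tau-1} a_k + \sum_{k=\tau}^{t} a_k$. The head is controlled by the uniform bound: $\sum_{k=1}^{\tau-1} a_k \le M(\tau - 1) = M(C_1 + C_2\log t - 1)$, so $\tfrac1t\sum_{k=1}^{\tau-1} a_k = \cO(\log t / t)$. For the tail, the inequality $1 + t - \tau \le t$ (which holds because $\tau \ge 1$) gives $\tfrac1t \sum_{k=\tau}^{t} a_k \le \tfrac{1}{1+t-\tau}\sum_{k=\tau}^{t} a_k$. Adding the two pieces yields the second identity.

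I do not expect a genuine obstacle here; the lemma is a bookkeeping device that lets later proofs freely pass between the ``full average from $1$'' and the ``tail average from $\tau_t$'' forms of the error sums. The only points requiring mild care are making the threshold $t_0$ explicit — it exists because $\log t = o(t)$, so eventually $C_1 + C_2\log t \le t/2$ — and fixing the convention that the summation index actually runs from $\lceil \tau \rceil$, which affects none of the estimates since it changes $\tau$ by at most $1$.
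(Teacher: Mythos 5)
Your proof is correct and follows essentially the same route as the paper's: the first bound comes from non-negativity plus $1+t-\tau \ge t/2$ for large $t$ (the paper phrases this as $t \ge 2\tau-2$), and the second comes from splitting off the first $\tau-1$ terms, bounding them by the uniform bound to get the $\cO(\log t/t)$ term, and using $1+t-\tau \le t$ for the tail. No gap; your remarks about the threshold $t_0$ and the integer rounding of $\tau$ are harmless refinements of the same argument.
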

\begin{proof}
We know that $\tau = \cO(\log t)$ and the sequence is bounded: $0 < a_i < B$. For the first equation, we have
\begin{align*}
    \frac{1}{1+t-\tau} \sum_{k=\tau}^{t} a_i
    & \le 
    \frac{1}{1+t-\tau} \sum_{k=1}^{t} a_i
    \le 
    \frac{t}{1+t-\tau}
    \cdot
    \frac{1}{t}
    \sum_{k=1}^{t} a_i
    \le
    \cO
    \bigg(
    \frac{1}{t}
    \sum_{k=1}^{t} a_i
    \bigg),
\end{align*}
and further assuming $t \ge 2 \tau - 2$ gives a constant $2$.
For the second equation, we have
\begin{align*}
    \frac{1}{t} \sum_{k=1}^{t} a_i
    & \le 
    \frac{1}{t} 
    \bigg(
    (\tau-1) B +
    \sum_{k=\tau}^{t} a_i
    \bigg)
    = 
    \frac{\tau-1}{t}
    B
    +
    \frac{1}{t}
    \sum_{k=\tau}^{t} a_i
    =
    \cO \bigg(
    \frac{\log t}{t}
    \bigg)
    +
    \cO \bigg( 
    \frac{1}{1+t-\tau} \sum_{k=\tau}^{t} a_i
    \bigg).
\end{align*}
\end{proof}

\section{Proof of Main Theorems and Propositions}\label{sec:proof_of_maintheory}

\subsection{Proof of Theorem \ref{thm:actor}} \label{subsec:proof-actor}
We first define several notations to clarify the dependence:
\begin{align} \label{def:actor-term}
    O_t 
    : & = 
    (s_t, a_t, s_{t+1}), 
    \notag \\
    \eta^*
    : & = 
    r(\btheta)
    = \EE_{s\sim \mu_{\btheta}, a\sim \pi_{\btheta}(\cdot|s)}[r(s,a)] 
    \notag \\
    \Delta h(O, \eta, \bomega, \btheta) 
    :& = 
    \Big(
    r(\btheta) - \eta +
    \big(\bphi(s') - \bphi(s) \big)^{\top} (\bomega - \bomega^*(\btheta)) 
    \Big)
    \nabla \log \pi_{\btheta}(a|s),
    \notag \\
    \Delta h'(O, \btheta)
    : & = 
    \Big(
    \big( \bphi(s')^{\top} \bomega^*(\btheta) - V^{\pi_{\btheta}}(s') \big)
    -
    \big( \bphi(s)^{\top} \bomega^*(\btheta) - V^{\pi_{\btheta}}(s) \big)
    \Big)  
    \nabla \log \pi_{\btheta}(a|s),
    \notag \\
    h(O, \btheta) 
    :& = 
    {\big( r(s, a) - r(\btheta) +  \bphi(s')^{\top} \bomega^*(\btheta) - \bphi(s)^{\top} \bomega^*(\btheta) \big) \nabla \log \pi_{\btheta}(a|s)} ,
    \notag \\
    \Gamma(O, \btheta) 
    :& =
    {\big \la \nabla J(\btheta), h(O,\btheta) - \EE_{s \sim \mu_{\btheta}, a \sim \pi_{\btheta}, s' \sim \PP}
    \big[ 
    h(O', \btheta)
    \big]
    \big \ra}.
\end{align}
{In the following proof, we also denote $\eta^*_t = r(\btheta_t)$. When the context is clear, $\bomega^*$ denotes $\bomega^*(\btheta)$.}
Note that $\Delta h$, $\Delta h'$ and $h - \Delta h'$ together give a decomposition of the actor update {($\Delta h + h$)} we use in Algorithm~\ref{alg:2ts_ac}. They respectively correspond to the error caused by the critic $\bomega_t$ and $\eta_t$, the approximation error of the linear class, and the stochastic policy gradient.

$\Gamma(O, \btheta)$ is the Markovian noise for $h(O, \btheta)$. 
Here $O' = (s,a,s')$ is a shorthand for an independent sample from $s \sim \mu_{\btheta}, a \sim \pi_{\btheta}, s' \sim \PP$.
Using a more compact notation $\EE_{O'}[ \cdot]$, it is clear we have 
\begin{align}
    \EE_{O'}[h(O', \btheta) - \Delta h'(O', \btheta) ]
    & =
    \EE_{O'}
    \Big[
    \big(
    r(s,a)
    -
    r(\btheta)
    +
    V^{\pi_{\btheta}}(s')
    -
    V^{\pi_{\btheta}}(s)
    \big)  
    \nabla \log \pi_{\btheta}(a|s)
    \Big]
    \notag \\
    & = \nabla J(\btheta), \label{eqn:diff-h-dhp}
\end{align}
and $\EE_{O'} \| \Delta h'(O, \btheta) \|^2 \le 4B^2 \epsilon_{\text{app}}^2 $ because
\begin{align*}
    \EE_{O'} \| \Delta h'(O, \btheta) \|^2
    & = 
    \EE_{O'} 
    \bigg \|
    \Big(
    \big( \bphi(s')^{\top} \bomega^* - V^{\pi_{\btheta}}(s') \big)
    -
    \big( \bphi(s)^{\top} \bomega^* - V^{\pi_{\btheta}}(s) \big)
    \Big)  
    \nabla \log \pi_{\btheta}(a|s)
    \bigg \|^2
    \\
    &  \le 
    \EE_{O'} 
    \bigg [
    B^2 
    \Big(
    \big( \bphi(s')^{\top} \bomega^* - V^{\pi_{\btheta}}(s') \big)
    -
    \big( \bphi(s)^{\top} \bomega^* - V^{\pi_{\btheta}}(s) \big)
    \Big)^2
    \bigg ]
    \\
    &  
    \le 
    \EE_{O'} 
    \Big [
    2B^2 
    \big( \bphi(s')^{\top} \bomega^* - V^{\pi_{\btheta}}(s') \big)^2
    +
    \big( \bphi(s)^{\top} \bomega^* - V^{\pi_{\btheta}}(s) \big)^2
    \Big ]
    \\
    & =
    4 B^2 
    \EE_{O'} \Big[ 
    \big( \bphi(s)^{\top} \bomega^*(\btheta) - V^{\pi_{\btheta}}(s) \big)^2
    \Big]
    \\
    & =
    4 B^2 \epsilon^2_{\text{app}}.
\end{align*}

There are several lemmas that will be used in the proof.
\begin{lemma}\label{lemma:J-smooth}
For the performance function defined in \eqref{Eq:J-function}, there exists a constant $L_J>0$ such that for all $\btheta_1,\btheta_2\in\RR^d$, it holds that
\begin{align*}
    \big\| \nabla {J(\btheta_1)} - \nabla {J(\btheta_2)} \big\|
    \le L_{J} \norm{\btheta_1 - \btheta_2},
\end{align*}
which by the definition of smoothness \citep{nesterov2018lectures} implies
\begin{align*}
    J(\btheta_2) \ge 
    J(\btheta_1) + \big \la\nabla J(\btheta_1), \btheta_2 - \btheta_1\big \ra - \frac{L_{J}}{2} \norm{\btheta_1 - \btheta_2}^2. 
\end{align*}
\end{lemma}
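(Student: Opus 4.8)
The plan is to prove the gradient–Lipschitz estimate directly from the expectation form of the policy gradient theorem, $\nabla J(\btheta)=\EE_{s\sim\mu_{\btheta},\,a\sim\pi_{\btheta}(\cdot|s)}\big[Q^{\pi_{\btheta}}(s,a)\,\nabla\log\pi_{\btheta}(a|s)\big]$, and then to deduce the displayed quadratic bound from the standard equivalence between Lipschitz gradients and the descent inequality (\citet{nesterov2018lectures}). Writing $\psi_{\btheta}(s,a):=\nabla\log\pi_{\btheta}(a|s)$ and $\nu_{\btheta}:=\mu_{\btheta}\otimes\pi_{\btheta}$, I would split
\[
\nabla J(\btheta_1)-\nabla J(\btheta_2)
=\underbrace{\EE_{\nu_{\btheta_1}}\big[Q^{\pi_{\btheta_1}}\psi_{\btheta_1}\big]-\EE_{\nu_{\btheta_2}}\big[Q^{\pi_{\btheta_1}}\psi_{\btheta_1}\big]}_{T_1}
+\underbrace{\EE_{\nu_{\btheta_2}}\big[Q^{\pi_{\btheta_1}}(\psi_{\btheta_1}-\psi_{\btheta_2})\big]}_{T_2}
+\underbrace{\EE_{\nu_{\btheta_2}}\big[(Q^{\pi_{\btheta_1}}-Q^{\pi_{\btheta_2}})\psi_{\btheta_2}\big]}_{T_3},
\]
with arguments $(s,a)$ suppressed. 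Term $T_1$ is bounded by $2\,d_{TV}(\nu_{\btheta_1},\nu_{\btheta_2})\cdot\sup_{s,a}\norm{Q^{\pi_{\btheta_1}}(s,a)\psi_{\btheta_1}(s,a)}$, hence $\cO(\norm{\btheta_1-\btheta_2})$ by Lemma~\ref{lemma:prob-mixing}, Assumption~\ref{assum:policy-lipschitz-bounded}(a), and the uniform bound on $Q$ below; $T_2$ is $\cO(\norm{\btheta_1-\btheta_2})$ by Assumption~\ref{assum:policy-lipschitz-bounded}(b) and the same $Q$ bound; and $T_3\le B\,\sup_{s,a}\big|Q^{\pi_{\btheta_1}}(s,a)-Q^{\pi_{\btheta_2}}(s,a)\big|$ by Assumption~\ref{assum:policy-lipschitz-bounded}(a). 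Thus the statement reduces to two facts about the $Q$-function: (i) $\big|Q^{\pi_{\btheta}}(s,a)\big|\le C_Q$ uniformly in $\btheta,s,a$, and (ii) $\big|Q^{\pi_{\btheta_1}}(s,a)-Q^{\pi_{\btheta_2}}(s,a)\big|\le L_Q\norm{\btheta_1-\btheta_2}$.

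For (i), I would write $Q^{\pi_{\btheta}}(s,a)=\sum_{t\ge 0}\big(\EE[r(s_t,a_t)\mid s_0=s,a_0=a]-r(\btheta)\big)$, bound the $t$-th summand for $t\ge1$ by $2U_r\,d_{TV}\big(\PP((s_t,a_t)\in\cdot\mid s_0=s,a_0=a),\nu_{\btheta}\big)\le 2U_r m\rho^{t-1}$ using Assumption~\ref{assum:ergodicity}, bound the $t=0$ summand by $2U_r$, and sum the geometric series to get $C_Q=2U_r\big(1+\tfrac{m}{1-\rho}\big)$. The same computation, applied with $\nu_{\btheta_1}$ versus $\nu_{\btheta_2}$, gives $|r(\btheta_1)-r(\btheta_2)|=|J(\btheta_1)-J(\btheta_2)|\le 2U_r\,d_{TV}(\nu_{\btheta_1},\nu_{\btheta_2})=\cO(\norm{\btheta_1-\btheta_2})$, a ``Lipschitz-value'' fact used next.

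The main obstacle is (ii): a naive term-by-term comparison of the two infinite sums only yields a bound of order $\norm{\btheta_1-\btheta_2}\log\big(1/\norm{\btheta_1-\btheta_2}\big)$, since balancing the linear-in-$t$ drift of the $t$-step visitation law against geometric ergodicity forces one to split the horizon at $t\sim\log(1/\norm{\btheta_1-\btheta_2})$. To remove the spurious logarithm I would instead use the Poisson equation for the average-reward value function, $V^{\pi_{\btheta}}=Z_{\btheta}f_{\btheta}$ where $f_{\btheta}(s):=\EE_{a\sim\pi_{\btheta}(\cdot|s)}[r(s,a)]-r(\btheta)$, $P_{\btheta}$ is the state-transition operator under $\pi_{\btheta}$, and $Z_{\btheta}:=\sum_{t\ge0}\big(P_{\btheta}^{\,t}-\mathbf{1}\mu_{\btheta}^{\top}\big)=\big(I-P_{\btheta}+\mathbf{1}\mu_{\btheta}^{\top}\big)^{-1}-\mathbf{1}\mu_{\btheta}^{\top}$ is the deviation (fundamental) operator, which by Assumption~\ref{assum:ergodicity} is bounded on $L_\infty$ with $\norm{Z_{\btheta}}_{\infty\to\infty}\le \tfrac{2m}{1-\rho}$ uniformly in $\btheta$. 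Then $V^{\pi_{\btheta_1}}-V^{\pi_{\btheta_2}}=(Z_{\btheta_1}-Z_{\btheta_2})f_{\btheta_1}+Z_{\btheta_2}(f_{\btheta_1}-f_{\btheta_2})$, where $\norm{f_{\btheta_1}-f_{\btheta_2}}_{\infty}=\cO(\norm{\btheta_1-\btheta_2})$ (bounded reward, Assumption~\ref{assum:policy-lipschitz-bounded}(c), and the Lipschitz-value fact), and for the first term I use the resolvent identity $A_{\btheta_1}^{-1}-A_{\btheta_2}^{-1}=A_{\btheta_1}^{-1}(A_{\btheta_2}-A_{\btheta_1})A_{\btheta_2}^{-1}$ with $A_{\btheta}:=I-P_{\btheta}+\mathbf{1}\mu_{\btheta}^{\top}$, the uniform bound $\norm{A_{\btheta}^{-1}}_{\infty\to\infty}\le 1+\tfrac{2m}{1-\rho}$, and the Lipschitz estimates $\norm{P_{\btheta_1}-P_{\btheta_2}}_{\infty\to\infty}\le|\cA|L\norm{\btheta_1-\btheta_2}$ and $d_{TV}(\mu_{\btheta_1},\mu_{\btheta_2})=\cO(\norm{\btheta_1-\btheta_2})$ (Lemma~\ref{lemma:prob-mixing}), yielding $\norm{Z_{\btheta_1}-Z_{\btheta_2}}_{\infty\to\infty}=\cO(\norm{\btheta_1-\btheta_2})$. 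Hence $\norm{V^{\pi_{\btheta_1}}-V^{\pi_{\btheta_2}}}_\infty=\cO(\norm{\btheta_1-\btheta_2})$, and the identity $Q^{\pi_{\btheta}}(s,a)=r(s,a)-r(\btheta)+\EE_{s'\sim\cP(\cdot|s,a)}[V^{\pi_{\btheta}}(s')]$ transfers this to $Q$, proving (ii). Collecting the constants from $T_1,T_2,T_3$ gives the claimed $L_J$, and the descent inequality then follows from the standard quadratic-upper-bound characterization of $L_J$-smoothness.
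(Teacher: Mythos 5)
Your proposal is correct in substance, but it takes a genuinely different route from the paper: the paper does not prove the gradient-Lipschitz inequality at all, it simply imports it as Lemma 3.2 of \citet{zhang2019global} (and notes the descent inequality is the standard consequence of $L_J$-smoothness), whereas you reprove it from scratch using only the paper's own ingredients. Your three-term decomposition of $\nabla J(\btheta_1)-\nabla J(\btheta_2)$, with $T_1$ controlled by the total-variation bound of Lemma~\ref{lemma:prob-mixing}, $T_2$ by Assumption~\ref{assum:policy-lipschitz-bounded}(b), and $T_3$ by a uniform Lipschitz bound on $Q^{\pi_{\btheta}}$ in $\btheta$, is sound; your observation that a naive term-by-term comparison of the series defining $Q$ only gives an extra $\log(1/\norm{\btheta_1-\btheta_2})$ factor is a real issue, and routing around it through the Poisson equation and the deviation operator $Z_{\btheta}$ (with the resolvent identity and the uniform geometric-ergodicity bound $\norm{P_{\btheta}^{t}-\mathbf{1}\mu_{\btheta}^{\top}}_{\infty\to\infty}\le 2m\rho^{t}$ from Assumption~\ref{assum:ergodicity}) is a clean and correct fix, essentially the argument underlying perturbation bounds for average-reward MDPs. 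What your route buys is a self-contained proof with explicit constants expressed in the paper's quantities ($B$, $L_l$, $L$, $m$, $\rho$, $U_r$), whereas the paper's citation is shorter but leaves the reader to check that the assumptions of \citet{zhang2019global} match the average-reward setting here. Two small points to tighten if you write this out fully: state explicitly that the operator identities ($Z_{\btheta}=(I-P_{\btheta}+\mathbf{1}\mu_{\btheta}^{\top})^{-1}-\mathbf{1}\mu_{\btheta}^{\top}$ and its norm bound) are being used as bounded operators on $L_{\infty}(\cS)$ since $\cS$ need not be finite, and handle the $t=0$ term separately in the bound $\norm{Z_{\btheta}}_{\infty\to\infty}\le 2+2m/(1-\rho)$ rather than folding it into the geometric series, since $m\ge 1$ is not assumed.
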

The following two lemmas characterize the bias introduced by the critic's approximation and the Markovian noise.
\begin{lemma} \label{lemma:critic-bias}
For any $t \ge 0$,
\begin{align*}
    \big\|
    \Delta h(O_t, \eta_t, \bomega_t, \btheta_t) 
    \big\|^2
    \le 
    B^2 
    \big( 
    8\norm{\bomega_t - \bomega^*_t}^2 + 2(\eta_t - \eta^*_t)^2
    \big) .
\end{align*}

\end{lemma}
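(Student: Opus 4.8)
The plan is to prove Lemma~\ref{lemma:critic-bias} by a direct estimate: peel off the norm-bounded score factor, then split the remaining scalar coefficient into an average-reward error and a critic error. Recalling the definition of $\Delta h$ in \eqref{def:actor-term}, and writing $\eta^*_t := \eta(\btheta_t)$, we have
\begin{align*}
    \Delta h(O_t, \eta_t, \bomega_t, \btheta_t)
    = \Big(\eta^*_t - \eta_t + \big(\bphi(s_{t+1}) - \bphi(s_t)\big)^{\top}(\bomega_t - \bomega^*_t)\Big)\nabla \log \pi_{\btheta_t}(a_t|s_t).
\end{align*}
By Assumption~\ref{assum:policy-lipschitz-bounded}(a) we have $\norm{\nabla \log \pi_{\btheta_t}(a_t|s_t)} \le B$, so taking norms and squaring gives
\begin{align*}
    \norm{\Delta h(O_t, \eta_t, \bomega_t, \btheta_t)}^2
    \le B^2 \Big(\eta^*_t - \eta_t + \big(\bphi(s_{t+1}) - \bphi(s_t)\big)^{\top}(\bomega_t - \bomega^*_t)\Big)^2.
\end{align*}

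The next step is to apply the elementary inequality $(x+y)^2 \le 2x^2 + 2y^2$ to separate the two error sources, and then Cauchy--Schwarz on the inner-product term together with the feature bound $\norm{\bphi(\cdot)} \le 1$, which yields $\norm{\bphi(s_{t+1}) - \bphi(s_t)} \le 2$ and hence $\big((\bphi(s_{t+1}) - \bphi(s_t))^{\top}(\bomega_t - \bomega^*_t)\big)^2 \le 4\norm{\bomega_t - \bomega^*_t}^2$. Combining these bounds gives
\begin{align*}
    \norm{\Delta h(O_t, \eta_t, \bomega_t, \btheta_t)}^2
    \le B^2\Big(2(\eta^*_t - \eta_t)^2 + 8\norm{\bomega_t - \bomega^*_t}^2\Big),
\end{align*}
which is exactly the claimed inequality.

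There is essentially no hard step here: the whole argument is the split $(x+y)^2\le 2x^2+2y^2$, Cauchy--Schwarz, and the two standing bounds $\norm{\nabla\log\pi_{\btheta}(a|s)}\le B$ and $\norm{\bphi(\cdot)}\le 1$. The only point needing a little care is the constant bookkeeping---matching the factor $8$ to $2\times 2^2$ (the $(x+y)^2$ split times $\norm{\bphi(s')-\bphi(s)}^2\le 4$) and the factor $2$ on the average-reward term---and identifying the $\eta^*_t$ appearing inside $\Delta h$ with the $\eta^*_t$ in the lemma statement.
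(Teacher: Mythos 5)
Your proof is correct and follows the same route as the paper, whose proof is simply the one-line remark that the definition of $\Delta h$ together with Cauchy--Schwarz yields the result; you have just written out the bookkeeping (the bound $\norm{\nabla\log\pi_{\btheta}}\le B$, the split $(x+y)^2\le 2x^2+2y^2$, and $\norm{\bphi(s')-\bphi(s)}\le 2$) that produces the constants $8$ and $2$.
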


\begin{lemma} \label{lemma:actor-markovian}
For any $\btheta\in\RR^d$, we have $\|\delta \nabla \log \pi_{\btheta}(a|s)\|  \le G_{\btheta} := U_\delta \cdot B$, where $U_{\delta}=2U_r+2R_{\bomega}$. Furthermore, for any $t \ge 0$, it holds that
\begin{align*}
    \EE\big[\Gamma(O_t, \btheta_t)\big]
   & \ge 
    -G_{\btheta} \big(D_{1} (\tau + 1) \sum_{k=t-\tau+1}^t \EE \norm{\btheta_k - \btheta_{k-1}}
    +D_{2} m \rho^{\tau - 1}\big),
\end{align*}
where {$D_1:= \max \{ 2 L_J +3 L_h, 2U_{\delta} B  |\cA| L \}$} and $D_2 = 4 U_{\delta} B$. {Here $L_h = U_{\delta} L_{l}
    +
    (2+ 2 \lambda^{-2} + 3 \lambda^{-1}) B U_r|\cA|L \big(1 + \lceil \log_{\rho}m^{-1} \rceil + 1/(1-\rho)\big)$.}
\end{lemma}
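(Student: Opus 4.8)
I would establish the two assertions separately. \emph{The first inequality} is a direct consequence of the boundedness built into Algorithm~\ref{alg:2ts_ac}: since $|r(s,a)|\le U_r$ and $\eta_{t+1}=(1-\gamma_t)\eta_t+\gamma_t r_t$ with $\gamma_t\in(0,1]$, an easy induction gives $|\eta_t|\le U_r$, while the projection in Line~\ref{algline:critic_update} and $\norm{\bphi(\cdot)}\le1$ give $|\bphi(s)^\top\bomega_t|\le R_{\bomega}$ for every $s$; hence $|\delta_t|\le 2U_r+2R_{\bomega}=U_{\delta}$, and Assumption~\ref{assum:policy-lipschitz-bounded}(a) yields $\norm{\delta_t\nabla\log\pi_{\btheta_t}(a_t|s_t)}\le U_{\delta}B=G_{\btheta}$. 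The same termwise bounds applied to the definition of $h$ in \eqref{def:actor-term} give $\norm{h(O,\btheta)}\le G_{\btheta}$ for all $O,\btheta$, and since $\nabla J(\btheta)=\EE_{O\sim\mu_{\btheta}\otimes\pi_{\btheta}\otimes\cP}[h(O,\btheta)]$ by the baseline form of the policy gradient theorem (Section~\ref{sec:preliminary}, together with the Bellman equation for $Q^{\pi_{\btheta}}$), we also get $\norm{\nabla J(\btheta)}\le G_{\btheta}$.

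\emph{For the second inequality}, the identity just quoted reads $\EE_{O\sim\nu_{\btheta}}[\Gamma(O,\btheta)]=0$ with $\nu_{\btheta}:=\mu_{\btheta}\otimes\pi_{\btheta}\otimes\cP$, so $\EE[\Gamma(O_t,\btheta_t)]$ is purely a Markovian-bias term, nonzero only because $\btheta_t$ is correlated with $O_t$ and the chain is not yet mixed. I would control it with the standard $\tau$-step look-back coupling: fix the lag $\tau$, condition on $s_{t-\tau+1}$ and $\btheta_{t-\tau}$, introduce the auxiliary tuple $\tilde{O}_t=(\tilde{s}_t,\tilde{a}_t,\tilde{s}_{t+1})$ obtained by running the frozen policy $\pi_{\btheta_{t-\tau}}$ from $s_{t-\tau+1}$ as in Lemma~\ref{lemma:auxiliary-chain}, and split
\begin{align*}
\Gamma(O_t,\btheta_t)
&=\big(\Gamma(O_t,\btheta_t)-\Gamma(O_t,\btheta_{t-\tau})\big)\\
&\quad+\big(\Gamma(O_t,\btheta_{t-\tau})-\Gamma(\tilde{O}_t,\btheta_{t-\tau})\big)+\Gamma(\tilde{O}_t,\btheta_{t-\tau}).
\end{align*}

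The first term is bounded by the triangle inequality, using $\norm{\nabla J(\btheta_t)-\nabla J(\btheta_{t-\tau})}\le L_J\norm{\btheta_t-\btheta_{t-\tau}}$ (Lemma~\ref{lemma:J-smooth}) and the Lipschitzness of $\btheta\mapsto h(O,\btheta)$, the latter following from Assumption~\ref{assum:policy-lipschitz-bounded}(a,b), the $G_{\btheta}$-Lipschitzness of $\eta(\btheta)=J(\btheta)$, Proposition~\ref{prop:optimal-lipschitz} for $\bomega^*(\btheta)$, and the auxiliary fact that $V^{\pi_{\btheta}}(\cdot)$ is Lipschitz in $\btheta$ (proved by the geometric-series-plus-mixing estimate underlying Proposition~\ref{prop:optimal-lipschitz} and Lemma~\ref{lemma:prob-mixing}); with $\norm{\btheta_t-\btheta_{t-\tau}}\le\sum_{k=t-\tau+1}^{t}\norm{\btheta_k-\btheta_{k-1}}$, this yields, after taking expectations, an $\cO\big(\sum_{k=t-\tau+1}^t\EE\norm{\btheta_k-\btheta_{k-1}}\big)$ contribution with constant $U_{\delta}L_l+2L_*B+3L_J$. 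The third term is handled by the same conditioning: given $s_{t-\tau+1}$ and $\btheta_{t-\tau}$, the tuple $\tilde{O}_t$ is the $\tau$-step evolution of the chain under the fixed policy $\pi_{\btheta_{t-\tau}}$, so Assumption~\ref{assum:ergodicity} places its law within total variation $m\rho^{\tau-1}$ of $\nu_{\btheta_{t-\tau}}$; since $\EE_{O\sim\nu_{\btheta_{t-\tau}}}[\Gamma(O,\btheta_{t-\tau})]=0$ and $|\Gamma(\cdot,\btheta_{t-\tau})|\le 2G_{\btheta}^2$, its conditional expectation has magnitude at most $4G_{\btheta}^2 m\rho^{\tau-1}=G_{\btheta}\cdot(4U_{\delta}B)\,m\rho^{\tau-1}$, which is the $D_2$ term.

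The second term is where the real work lies, and I expect it to be the main obstacle. Because $\btheta_{t-\tau}$ is fixed under the conditioning, $\Gamma(\cdot,\btheta_{t-\tau})$ is a fixed, $2G_{\btheta}^2$-bounded function of the tuple, so the conditional expectation of the difference is at most $4G_{\btheta}^2\,d_{TV}\big(\PP(O_t\in\cdot),\PP(\tilde{O}_t\in\cdot)\big)$; by Lemma~\ref{lemma:auxiliary-chain} this total variation is at most $d_{TV}\big(\PP(s_t\in\cdot),\PP(\tilde{s}_t\in\cdot)\big)+\tfrac{1}{2}|\cA|L\,\EE\norm{\btheta_t-\btheta_{t-\tau}}$, and $d_{TV}\big(\PP(s_t\in\cdot),\PP(\tilde{s}_t\in\cdot)\big)$ must be estimated by unrolling the true and auxiliary chains step by step from their common state $s_{t-\tau+1}$: at each of the $\tau-1$ intermediate steps the only discrepancy is that the true chain draws actions from $\pi_{\btheta_k}$ while the auxiliary one uses $\pi_{\btheta_{t-\tau}}$, which adds $\tfrac{1}{2}|\cA|L\,\EE\norm{\btheta_k-\btheta_{t-\tau}}$ by Assumption~\ref{assum:policy-lipschitz-bounded}(c) (the transition kernel $\cP$ does not increase total variation). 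Carefully summing these $\tau-1$ contributions, bounding $\EE\norm{\btheta_k-\btheta_{t-\tau}}\le\sum_{j=t-\tau+1}^{t}\EE\norm{\btheta_j-\btheta_{j-1}}$, is exactly what produces the extra $(\tau+1)$ factor and the constant $2U_{\delta}B|\cA|L$. Taking expectations in the three-term split, collecting the bounds, and factoring out $G_{\btheta}$ gives the claim with $D_1=\max\{U_{\delta}L_l+2L_*B+3L_J,\ 2U_{\delta}B|\cA|L\}$ and $D_2=4U_{\delta}B$.
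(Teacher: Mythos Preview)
Your proposal is correct and follows essentially the same route as the paper: the identical three-piece decomposition via the auxiliary chain $\tilde{O}_t$ (the paper writes it as four pieces, the last being the vanishing $\EE[\Gamma(O'_t,\btheta_{t-\tau})]$), with Lemma~\ref{lemma:Gamma-term1} for the $\btheta$-Lipschitz piece, Lemma~\ref{lemma:auxiliary-chain} unrolled step by step for the coupling piece, and Assumption~\ref{assum:ergodicity} for the mixing piece. One small remark: in the paper's proof of Lemma~\ref{lemma:Gamma-term1} the scalar factor in $h$ is tacitly written with $\bphi^{\top}\bomega^*(\btheta)$ rather than the true $V^{\pi_{\btheta}}$, which is exactly where the constant $2L_*B$ comes from---so you do not actually need the separate Lipschitz estimate for $V^{\pi_{\btheta}}$ you mention (though your observation that such an estimate would be required for $h$ as literally defined is well taken).
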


\begin{proof}[Proof of Theorem \ref{thm:actor}]

Under the update rule of Algorithm \ref{alg:2ts_ac}, we have 
\begin{align}
    J(\btheta_{t+1}) 
    & \ge
    J(\btheta_{t}) 
    + \alpha_{t} \big \la \nabla J (\btheta_{t}),\delta_t \nabla \log \pi_{\btheta_{t}}(a_t | s_t)\big \ra
    - L_{J} \alpha_{t}^{2} 
    \big\|\delta_{t} \nabla \log {\pi_{\btheta_t}}(a_t | s_t) \big\|^2 
    \notag \\ 
    & =
    J(\btheta_{t}) 
    + \alpha_{t} \big\la \nabla J (\btheta_{t}), \Delta h(O_t, \eta_t, \bomega_t, \btheta_t) \big\ra
    %+ \alpha_{t} \big\la \nabla J (\btheta_{t}), \Delta h'(O_t, \btheta_t) \big\ra
    \notag\\
    & \qquad
    + \alpha_{t} \big \la \nabla J (\btheta_{t}), h(O_{t}, \btheta_t) \big \ra
    - L_{J} \alpha_{t}^{2} \big \| \delta_{t} \nabla \log {\pi_{\btheta_t}}(a_t | s_t) \big\|^2 
    \notag \\
    & =
    J(\btheta_{t}) 
    + \alpha_{t} \big \la \nabla J (\btheta_{t}), \Delta h(O_t, \eta_t, \bomega_t, \btheta_t) \big \ra
    %+ \alpha_{t} \big\la \nabla J (\btheta_{t}), \EE_{O'} [\Delta h'(O', \btheta_t)] \big\ra
    \notag\\
    & \qquad
    +
    \alpha_{t} \big \la \nabla J (\btheta_{t}), \EE_{O'} [h(O', \btheta_t)] \big \ra
    + \alpha_{t}  \Gamma(O_t,  \btheta_t)
    - L_{J} \alpha_{t}^{2} \big\| \delta_{t} \nabla \log {\pi_{\btheta_t}}(a_t | s_t) \big\|^2
    \notag \\
    & =
    J(\btheta_{t}) 
    + \alpha_{t} \big \la \nabla J (\btheta_{t}), \Delta h(O_t, \eta_t, \bomega_t, \btheta_t) \big \ra
    + \alpha_{t} \big\la \nabla J (\btheta_{t}),  \EE_{O'} [\Delta h'(O', \btheta_t)] \big\ra
    \notag\\
    & \qquad
    + \alpha_{t} \big \|\nabla J (\btheta_{t}) \big\|^2 
    + \alpha_{t}  \Gamma(O_t,  \btheta_t)
    - L_{J} \alpha_{t}^{2} \big\| \delta_{t} \nabla \log {\pi_{\btheta_t}}(a_t | s_t) \big\|^2.
    \label{eqn:actor-all-term}
\end{align}
The first inequality is by Lemma \ref{lemma:J-smooth} (we discard the $1/2$ in front of the square-norm term). The first equality is by the definitions in~\eqref{def:actor-term}; the second equality is by the definition of $\Gamma(O_t, \btheta_t)$ in ~\eqref{def:actor-term}. 
{
The last equality is due to \eqref{eqn:diff-h-dhp}.
Here $O' = (s,a,s')$ is a shorthand for an independent sample from $s \sim \mu_{\btheta_t}, a \sim \pi_{\btheta_t}, s' \sim \cP(\cdot|s,a)$.
}

We will bound the expectation of each term on the right hand side of \eqref{eqn:actor-all-term} as follows. First, we have
\begin{align*}
    \EE \big\la \nabla J (\btheta_{t}), \Delta h(O_t, \eta_t, \bomega_t, \btheta_t) \big\ra
    & \ge 
    - B \sqrt{\EE \big\| \nabla J (\btheta_{t})\big\|^2}  \sqrt{8\EE \norm{\zb_t}^2 + 2\EE[y_t^2]},
\end{align*}
where $\zb_t := \bomega_t - \bomega^*_t$ and $y_t := \eta_t - \eta^*_t$, and the inequality is due to Cauchy inequality and Lemma~\ref{lemma:critic-bias}.

Second, we have
\begin{align*}
    \EE [\Gamma(O_t,  \btheta_t)]
    & \ge 
    -G_{\btheta} \bigg(D_{1} (\tau + 1) \sum_{k=t-\tau+1}^t \EE \norm{\btheta_k - \btheta_{k-1}}
    +D_{2} m \rho^{\tau - 1}\bigg), \\
    & \ge 
    - G_{\btheta}
    \bigg(
    D_1 (\tau + 1) G_{\btheta} \sum_{k=t-\tau+1}^{t-1} \alpha_k 
    + 
    D_2 m \rho^{\tau - 1}
    \bigg),
\end{align*}
where the first inequality is due to Lemma~\ref{lemma:actor-markovian}, and the second inequality is due to %the fact that $\|\btheta_t-\btheta_{t-\tau}\|\leq\sum_{k=t-\tau}^{t-1}\|\btheta_{k+1}-\btheta_{k}\|$ and that
$\big\| \delta_{t} \nabla \log {\pi_{\btheta_t}}(a_t | s_t) \big\| \le G_{\btheta}$ by Lemma~\ref{lemma:actor-markovian}. 

Third, by the remarks under~\eqref{def:actor-term} regarding $\Delta h'$, we have
\begin{align*}
    \big\la \nabla J (\btheta_{t}), \EE_{O'} [\Delta h'(O_t, \btheta_t)] \big\ra
    & \ge 
    - G_{\btheta}  \sqrt{
    \big \|
    \EE_{O'} [\Delta h'(O_t, \btheta_t)]
    \big \|^2
    }
    \\
    & \ge 
    - G_{\btheta}  \sqrt{
    \EE_{O'} 
    \big \| 
    \Delta h'(O_t, \btheta_t)
    \big \|^2
    }
    \\
    & \ge 
    -2B G_{\btheta} \epsilon_{\text{app}},
\end{align*}

Taking the expectation of \eqref{eqn:actor-all-term} and plugging the above terms back into it gives 
\begin{align*}
    \EE [J(\btheta_{t+1})]
    &\ge 
    \EE [J(\btheta_{t}) ]
    - \alpha_{t} 
    B \sqrt{\EE \big\| \nabla J (\btheta_{t}) \big\|^2} \sqrt{8\EE \norm{\zb_t}^2 + 2 \EE [y_t^2]}
    -
    2BG_{\btheta} \epsilon_{\text{app}}
    \alpha_{t}
    \\
    & \qquad
    - \alpha_{t} 
    G_{\btheta} 
    \bigg(
    D_1 (\tau + 1) G_{\btheta} \sum_{k=t-\tau}^{t-1} \alpha_k 
    + 
    D_2 m \rho^{\tau - 1}
    \bigg) 
    + \alpha_{t} \EE \norm{\nabla J (\btheta_{t})}^2 
    - L_{J} G_{\btheta}^2 \alpha_{t}^{2}.
\end{align*}
Rearranging the above inequality gives
\begin{align*}
    \EE \big\| \nabla J(\btheta_t) \big\|^2
    & \le 
    \frac{1}{\alpha_t}
    \big(
    \EE [J(\btheta_{t+1})] - \EE [J(\btheta_{t})]
    \big)
    + 
    B \sqrt{\EE \big\| \nabla J(\btheta_t) \big\|^2} \sqrt{8\EE \norm{\zb_t}^2 + 2 \EE[y_t^2]} \\
    & \qquad 
    +
    2BG_{\btheta} \epsilon_{\text{app}}
    + 
    D_1 G_{\btheta}^2 (\tau + 1) \sum_{k=t-\tau}^{t-1} \alpha_k  
    +
    D_2 G_{\btheta} m \rho^{\tau - 1} 
    +
    L_{J} G_{\btheta}^2 \alpha_t.
\end{align*}
By setting $\tau = \tau_{t}$, we get 
\begin{align*}
    \EE \big\| \nabla J(\btheta_t) \big\|^2
    & \le 
    \frac{1}{\alpha_t}
    \big(
    \EE \big[J(\btheta_{t+1})\big] - \EE\big[ J(\btheta_{t})\big]
    \big)
    + 
    B \sqrt{\EE \big\| \nabla J(\btheta_t) \big\|^2} \sqrt{8\EE \norm{\zb_t}^2 + 2 \EE[y_t^2]}
    \\
    & \qquad + 
    2BG_{\btheta} \epsilon_{\text{app}}
    +
    D_1 G_{\btheta}^2 (\tau_{t} + 1)^2   \alpha_{t - \tau_{t}}
    + 
    D_2 G_{\btheta} \alpha_t 
    + 
    L_{J} G_{\btheta}^2 \alpha_t.
\end{align*}
Summing over $k$ from $\tau_{t}$ to $t$ gives
\begin{align*}
    \sum_{k=\tau_{t}}^{t} \EE \big\| \nabla J(\btheta_t) \big\|^2
    & \le 
    \underbrace{\sum_{k=\tau_{t}}^{t} \frac{1}{\alpha_k}
    \big(
    \EE[ J(\btheta_{k+1})] - \EE [J(\btheta_{k})]
    \big)}_{I_1} 
    \\
    & \qquad + 
    B \sum_{k=\tau_{t}}^{t}
    \sqrt{\EE \big\| \nabla J(\btheta_t) \big\|^2} \sqrt{8\EE \norm{\zb_t}^2 + 2 \EE[y_t^2]}\\
    & \qquad + 
    \underbrace{
    \sum_{k=\tau_{t}}^{t}
    D_1 G_{\btheta}^2 (\tau_{t} + 1)^2   \alpha_{k - \tau_{t}}  + 
    \sum_{k=\tau_{t}}^{t}
    (D_2 G_{\btheta}+L_{J} G_{\btheta}^2/2) \alpha_k}_{I_2}
    \\
    & \qquad + 2BG_{\btheta} \epsilon_{\text{app}}(t- \tau_t + 1)
    .
\end{align*}
For the term $I_1$, we have,
\begin{align*}
    \sum_{k=\tau_{t}}^{t} \frac{1}{\alpha_k}
    \big(
    J(\btheta_{k+1}) - J(\btheta_{k})
    \big)
    & =
    \sum_{k=\tau_{t}}^{t}
    \bigg(
    \frac{1}{\alpha_{k-1}} - \frac{1}{\alpha_{k}}
    \bigg) \EE [J(\btheta_k)]
    - \frac{1}{\alpha_{\tau_{t}-1}} \EE [J(\btheta_{\tau_{t}})]
    + \frac{1}{\alpha_{t}} \EE[J(\btheta_{t+1})] \\ 
    & \le 
    \sum_{k=\tau_{t}}^{t}
    \bigg(\frac{1}{\alpha_{k}} - \frac{1}{\alpha_{k-1}}\bigg) 
    U_r
    + \frac{1}{\alpha_{\tau_{t}-1}}
    U_r
    +
    \frac{1}{\alpha_{t}} U_r
     \\
    & =
    U_r
    \bigg[ 
    \sum_{k=\tau_{t}}^{t}
    \bigg( \frac{1}{\alpha_{k}} - \frac{1}{\alpha_{k-1}} \bigg)
    +
    \frac{1}{\alpha_{\tau_{t}-1}}
    +
    \frac{1}{\alpha_{t}}
    \bigg] \\
    & = 
    2U_r \alpha_{t}^{-1},
\end{align*}
where the inequality holds due to $|\EE [ J(\btheta)] | \le U_r$.

\noindent For the term $I_2$, we have
\begin{align*}
    \sum_{k=\tau_{t}}^{t}
    D_1 G_{\btheta}^2 (\tau_{t} + 1)^2   \alpha_{k - \tau_{t}} 
    & = 
    D_1 G_{\btheta}^2 (\tau_{t} + 1)^2
    \sum_{k=\tau_{t}}^{t}
    \alpha_{k - \tau_{t}}  \\ 
    & = 
    D_1 G_{\btheta}^2 (\tau_{t} + 1)^2
    \sum_{k=0}^{t-\tau_{t}}
    \alpha_{k} \\
    & = 
    D_1 G_{\btheta}^2 (\tau_{t} + 1)^2
    c_{\alpha} \sum_{k=0}^{t-\tau_{t}}
    \frac{1}{(1+k)^{\sigma}} ,
\end{align*}
and
\begin{align*}
    \sum_{k=\tau_{t}}^{t}
    (D_2 G_{\btheta}+L_{J} G_{\btheta}^2) \alpha_{k} 
    & = 
    (D_2 G_{\btheta}+L_{J} G_{\btheta}^2/2)
    \sum_{k=\tau_{t}}^{t}
    \alpha_{k}  \\ 
    & \le 
    (D_2 G_{\btheta}+L_{J} G_{\btheta}^2/2)
    \sum_{k=0}^{t-\tau_{t}}
    \alpha_{k} \\
    & = 
    (D_2 G_{\btheta}+L_{J} G_{\btheta}^2/2)
    c_{\alpha} \sum_{k=0}^{t-\tau_{t}}
    \frac{1}{(1+k)^{\sigma}}.
\end{align*}
Note that both upper bounds rely on the summation $\sum_{k=0}^{t-\tau_{t}}
    1/(1+k)^{\sigma} \le \int_{0}^{t-\tau_{t}+1} x^{-\sigma} dx = 1/(1-\sigma) (t-\tau_{t}+1)^{1-\sigma}$. Combining the results for terms $I_1$ and $I_2$, we have
\begin{align*}
    \sum_{k=\tau_{t}}^{t} \EE \big\| \nabla J(\btheta_t) \big\|^2
    & \le 
    \frac{2U_r}{ c_{\alpha}} (1+t)^{\sigma} \\
    & \qquad + 
    \big(
    D_1 G_{\btheta}^2 (\tau_{t} + 1)^2
    +
    D_2 G_{\btheta}+L_{J} G_{\btheta}^2
    \big) 
   \frac{c_{\alpha}}{1-\sigma} (t-\tau_{t}+1)^{1-\sigma}
    \\
    & \qquad +
    B \sum_{k=\tau_{t}}^{t}
    \sqrt{\EE \big\| \nabla J(\btheta_t) \big\|^2} \sqrt{8\EE \norm{\zb_t}^2 + 2 \EE[y_t^2]}
    \\
    & \qquad +
    2BG_{\btheta} \epsilon_{\text{app}} (t- \tau_t + 1).
\end{align*}
Dividing $(1 + t - \tau_{t})$ at both sides and assuming $t > 2\tau_t-1$, we can express the result as 
\begin{align}
    \frac{1}{1+t-\tau_{t}}\sum_{k=\tau_{t}}^{t} \EE \big\| \nabla J(\btheta_t) \big\|^2 &\le 
    \frac{4U_r}{c_\alpha}
    \frac{1}
    {(t+1)^{1 - \sigma}} 
    \notag\\
    & \qquad 
    + 
    \big(
    D_1 G_{\btheta}^2 (\tau_{t} + 1)^2
    +
    D_2 G_{\btheta}+L_{J} G_{\btheta}^2
    \big) 
   \frac{c_{\alpha}}{1-\sigma} \frac{1}{(t-\tau_{t}+1)^{\sigma}}
    \notag\\
    & \qquad +
    \frac{2B}{1+t-\tau_{t}}
     \sum_{k=\tau_{t}}^{t}
    \sqrt{\EE \big\| \nabla J(\btheta_t) \big\|^2} \sqrt{8\EE \norm{\zb_t}^2 + 2 \EE[y_t^2]}
    \notag\\
    & \qquad +
    2BG_{\btheta} \epsilon_{\text{app}}
    .
    \label{eqn:actor-bone}
\end{align}
By Cauchy-Schwartz inequality, we have
\begin{align*}
    &\frac{1}{1+t-\tau_{t}}
    \sum_{k=\tau_{t}}^{t}
    \sqrt{\EE \big\| \nabla J(\btheta_t) \big\|^2} \sqrt{8\EE \norm{\zb_t}^2 + 2 \EE[y_t^2]}\\
    & \qquad \le 
    \bigg(
    \frac{1}{1+t-\tau_{t}}\sum_{k=\tau_{t}}^{t}
    \EE \big\| \nabla J(\btheta_t) \big\|^2
    \bigg)^{\frac{1}{2}}
    \bigg(
    \frac{1}{1+t-\tau_{t}}
    \sum_{k=\tau_{t}}^{t}
    \big(
    8\EE \norm{\zb_t}^2 + 2 \EE[y_t^2]
    \big)
    \bigg)^{\frac{1}{2}}.
\end{align*}
Now, denote $F(t) := 1/(1+t-\tau_{t})\sum_{k=\tau_{t}}^{t} \EE \norm{\nabla J(\btheta_k)}^2$ and $Z(t) := 1/(1+t-\tau_{t}) \sum_{k=\tau_{t}}^{t} \big( 8\EE \norm{\zb_t}^2 + 2 \EE[y_t^2] \big)$, and putting them back to \eqref{eqn:actor-bone} ($\cO$-notation for simplicity):
\begin{align*}
    F(t)
    & \le 
    \cO
    \bigg( 
    \frac{1}{t^{1-\sigma}} 
    \bigg)
    + 
    \cO
    \bigg(
    \frac{(\log t)^2}{t^{\sigma}} 
    \bigg)
    +
    \cO(\epsilon_{\text{app}})
    +
    2B \sqrt{F(t)} \cdot \sqrt{Z(t)}
    ,
\end{align*}
which further gives
\begin{align} \label{eqn:squared}
    \big(
    \sqrt{F(t)} - B \sqrt{Z(t)}
    \big)^2 
    & \le 
    \cO
    \bigg( 
    \frac{1}{t^{1-\sigma}} 
    \bigg)
    + 
    \cO
    \bigg(
    \frac{(\log t)^2}{t^{\sigma}} 
    \bigg)
    +
    \cO(\epsilon_{\text{app}})
    +
    B^2 Z(t)
    .
\end{align}
Note that for a general function $H(t) \le  A(t) + B(t)$(with each positive), we have 
\begin{align*}
    H^2(t) & \le
    2A^2(t)
    +
    2B^2(t), \\ 
    \sqrt{H(t)} 
    & \le
    \sqrt{A(t)}
    +
    \sqrt{B(t)}.
\end{align*}
This means \eqref{eqn:squared} implies
\begin{align*}
    \sqrt{F(t)} - B \sqrt{Z(t)}
    & \le 
    \sqrt{A(t)}
    +
    B
    \sqrt{Z(t)}
    , \\
    \sqrt{F(t)}
    & \le 
    \sqrt{A(t)}
    +
    2B
    \sqrt{Z(t)}, \\
    F(t)
    & \le 
    2A(t)
    +
    8B^2 Z(t).
\end{align*}
By Lemma \ref{lemma:equi-asym}, assuming $t \ge 2 \tau_t - 1$, it holds that
\begin{align*}
    Z(t)
    & = 
    \frac{1}{1+t-\tau_{t}}\sum_{k=\tau_{t}}^{t} 8 \EE \norm{\zb_k}^2 
    +
    2 \EE [y_t^2]
    \le 
    \frac{2}{t} \sum_{k=1}^{t} 8 \EE \norm{\zb_k}^2
    +
    2 \EE [y_t^2]
    =
    2
    \cE(t).
\end{align*}
And finally, we have
\begin{align*}
    \min_{0 \le k \le t} 
    \EE \big\|\nabla J(\btheta_k)\big\|^2 
    & \le 
    \frac{1}{1+t-\tau_{t}}\sum_{k=\tau_{t}}^{t} \EE \big\|\nabla J(\btheta_k)\big\|^2 \\
    & \le 
    \frac{8U_r}{c_\alpha}
    \frac{1}
    {(t+1)^{1 - \sigma}} 
    \notag\\
    & \qquad 
    + 
    \big(
    D_1 G_{\btheta}^2 (\tau_{t} + 1)^2
    +
    D_2 G_{\btheta}+L_{J} G_{\btheta}^2
    \big) 
   \frac{2c_{\alpha}}{1-\sigma} \frac{1}{(t-\tau_{t}+1)^{\sigma}}
    \\
    & \qquad +
    4BG_{\btheta} \epsilon_{\text{app}}
    \\
    & \qquad +
    16B^2 \cE(t)
    \\
    & = 
    \cO
    \bigg( 
    \frac{1}{t^{1-\sigma}}
    \bigg)
    +
    \cO
    \bigg( 
    \frac{1}{t^{\sigma}}
    \bigg)
    +
    \cO
    (\epsilon_{\text{app}})
    +
    \cO
    \big( 
    \cE(t)
    \big).
\end{align*}
\end{proof}

\subsection{Proof of Theorem \ref{thm:critic}: Estimating the Average Reward} \label{subsec:proof-eta}
We define several notations to clarify the probabilistic dependency. 
\begin{align}\label{eq:def_notation_Ot}
\begin{split}
    O_t 
    : & = 
    (s_t, a_t, s_{t+1}),
    \\
    \eta^*_t 
    : & =
    r(\btheta_t),
    \\
    y_t
    : & = 
    \eta_t - \eta^*_t,
    \\
    \Xi(O_t, \eta_t, \btheta_t) 
    : & = 
    y_t(r(s_t,a_t) - \eta^*_t). 
\end{split}    
\end{align}
We also write $J(\btheta_t)=r(\btheta_t)$ sometimes in the proof.

\begin{lemma} \label{lemma:J-Lipschitz}
For any $\btheta_1, \btheta_2$, we have
\begin{align*}
    \big|J(\btheta_1) - J(\btheta_2) \big|
    & \le 
    C_J \norm{\btheta_1 - \btheta_2},
\end{align*}
where $C_J = 2 U_r |\cA|L (1 + \lceil \log_{\rho}m^{-1} \rceil + 1/(1-\rho))$.
\end{lemma}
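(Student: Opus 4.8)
The plan is to use the representation $J(\btheta) = r(\btheta) = \int_{\cS}\sum_{a\in\cA} r(s,a)\,\mu_{\btheta}(ds)\,\pi_{\btheta}(a\mid s)$, so that $J(\btheta_1) - J(\btheta_2)$ is the integral of the bounded function $r$ against the signed measure $\mu_{\btheta_1}\otimes\pi_{\btheta_1} - \mu_{\btheta_2}\otimes\pi_{\btheta_2}$. Since $r$ takes values in $[-U_r,U_r]$, I would first record the elementary bound
\[
\big|J(\btheta_1) - J(\btheta_2)\big|
=
\Big|\textstyle\int r\,d\big(\mu_{\btheta_1}\otimes\pi_{\btheta_1}-\mu_{\btheta_2}\otimes\pi_{\btheta_2}\big)\Big|
\le
U_r\textstyle\int \big|d\big(\mu_{\btheta_1}\otimes\pi_{\btheta_1}-\mu_{\btheta_2}\otimes\pi_{\btheta_2}\big)\big|
=
2U_r\, d_{TV}\big(\mu_{\btheta_1}\otimes\pi_{\btheta_1},\,\mu_{\btheta_2}\otimes\pi_{\btheta_2}\big),
\]
which uses only $\|r\|_\infty\le U_r$ and the definition $d_{TV}(P,Q)=\tfrac12\int|P(dx)-Q(dx)|$ from the notation section.

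It then remains to invoke the second inequality of Lemma~\ref{lemma:prob-mixing}, namely
\[
d_{TV}\big(\mu_{\btheta_1}\otimes\pi_{\btheta_1},\,\mu_{\btheta_2}\otimes\pi_{\btheta_2}\big)
\le
|\cA|L\Big(1 + \lceil \log_{\rho}m^{-1}\rceil + \tfrac{1}{1-\rho}\Big)\norm{\btheta_1-\btheta_2}.
\]
Multiplying the two displays gives $|J(\btheta_1)-J(\btheta_2)| \le C_J\norm{\btheta_1-\btheta_2}$ with $C_J = 2U_r|\cA|L\big(1+\lceil\log_\rho m^{-1}\rceil+\tfrac{1}{1-\rho}\big)$, which is exactly the claimed constant, so the proof is complete.

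I do not anticipate a genuine obstacle here: the statement is essentially a corollary of Lemma~\ref{lemma:prob-mixing}, whose proof already carries the analytic weight (the Mitrophanov perturbation bound for stationary distributions together with the Lipschitz assumption on $\pi_{\btheta}$ in Assumption~\ref{assum:policy-lipschitz-bounded}, and the ``peeling'' identity \eqref{eqn:peeling}). The only point to state with a little care is the measure-theoretic fact that integrating a function bounded by $U_r$ against the joint signed measure is controlled by $2U_r$ times the total variation of the joint law; once that is recorded, the Lipschitz bound on $J$ follows immediately.
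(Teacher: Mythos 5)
Your proposal is correct and matches the paper's own proof: both write $J(\btheta_1)-J(\btheta_2)$ as the expectation of the bounded reward under the two joint laws $\mu_{\btheta_i}\otimes\pi_{\btheta_i}$, bound the difference by $2U_r$ times their total variation distance, and then invoke the second inequality of Lemma \ref{lemma:prob-mixing} to obtain exactly the constant $C_J$. No gaps.
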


\begin{lemma} \label{lemma:eta-Markovian}
Given the definition of $\Xi(O_t, \eta_t, \btheta_t)$, for any $t > 0$, we have 
\begin{align*}
    \EE [\Xi(O_t, \eta_t, \btheta_t)]
    & \le 
    4U_r C_J \norm{\btheta_t - \btheta_{t-\tau}}
    + 
    2 U_r |\eta_t - \eta_{t-\tau}|
    \\
    & \qquad 
    + 2 U_{r}^2 |\cA| L \sum_{i=t-\tau}^{t} \EE \norm{\btheta_i - \btheta_{t-\tau}}.
    + 4 U_r^2  m \rho^{\tau - 1}
    .
\end{align*}
\end{lemma}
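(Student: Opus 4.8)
The plan is to mimic the auxiliary-Markov-chain argument used elsewhere in the paper to control the Markovian bias terms $\Gamma(O_t,\btheta_t)$ and $\Lambda(\btheta_t,\bomega_t,O_t)$. Write $\Xi$ as a function of its arguments, $\Xi(O,\eta,\btheta)=(\eta-J(\btheta))(r(s,a)-J(\btheta))$ with $O=(s,a,s')$. First I record the uniform bounds $|r(s,a)|\le U_r$, $|J(\btheta)|=|r(\btheta)|\le U_r$, and $|\eta_t|\le U_r$ (the last by induction on $\eta_{t+1}=(1-\gamma_t)\eta_t+\gamma_t r_t$ with $\gamma_t\in[0,1]$), so that $|y_t|\le 2U_r$ and $|r_t-J(\btheta_t)|\le 2U_r$. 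I then introduce the auxiliary chain $\tilde{O}_t=(\tilde{s}_t,\tilde{a}_t,\tilde{s}_{t+1})$ of Lemma~\ref{lemma:auxiliary-chain}, which from time $t-\tau+1$ on is generated by the frozen policy $\pi_{\btheta_{t-\tau}}$, and throughout condition as in that lemma (on $s_{t-\tau+1}$ and $\btheta_{t-\tau}$, together with $\eta_{t-\tau}$).

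The core is the telescoping decomposition
\begin{align*}
\Xi(O_t,\eta_t,\btheta_t)
&=\big(\Xi(O_t,\eta_t,\btheta_t)-\Xi(O_t,\eta_{t-\tau},\btheta_t)\big)
+\big(\Xi(O_t,\eta_{t-\tau},\btheta_t)-\Xi(O_t,\eta_{t-\tau},\btheta_{t-\tau})\big)\\
&\quad+\big(\Xi(O_t,\eta_{t-\tau},\btheta_{t-\tau})-\Xi(\tilde{O}_t,\eta_{t-\tau},\btheta_{t-\tau})\big)
+\Xi(\tilde{O}_t,\eta_{t-\tau},\btheta_{t-\tau}).
\end{align*}
The first bracket equals $(\eta_t-\eta_{t-\tau})(r_t-J(\btheta_t))$, hence is at most $2U_r|\eta_t-\eta_{t-\tau}|$ in absolute value. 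Expanding the second bracket algebraically factors it as $(J(\btheta_t)-J(\btheta_{t-\tau}))\big(J(\btheta_t)+J(\btheta_{t-\tau})-\eta_{t-\tau}-r_t\big)$, so Lemma~\ref{lemma:J-Lipschitz} and the uniform bounds give at most $4U_r C_J\norm{\btheta_t-\btheta_{t-\tau}}$.

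For the third bracket, after conditioning the factor $\eta_{t-\tau}-J(\btheta_{t-\tau})$ is deterministic, so its conditional expectation is bounded by $2U_r\,\big|\EE[r(s_t,a_t)]-\EE[r(\tilde{s}_t,\tilde{a}_t)]\big|\le 4U_r^2\,d_{TV}\big(\PP((s_t,a_t)\in\cdot),\PP((\tilde{s}_t,\tilde{a}_t)\in\cdot)\big)$; applying \eqref{eqn:aux1} and \eqref{eqn:aux3} and unrolling the resulting recursion from $t-\tau+1$ (where the two chains agree) up to $t$ bounds this total-variation distance by $\tfrac{1}{2}|\cA|L\sum_{i=t-\tau}^{t}\EE\norm{\btheta_i-\btheta_{t-\tau}}$, giving the claimed $2U_r^2|\cA|L\sum_{i=t-\tau}^{t}\EE\norm{\btheta_i-\btheta_{t-\tau}}$. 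For the fourth bracket, its conditional expectation equals $(\eta_{t-\tau}-J(\btheta_{t-\tau}))\big(\EE[r(\tilde{s}_t,\tilde{a}_t)]-r(\btheta_{t-\tau})\big)$, and since the auxiliary chain runs $\tau-1$ steps under the fixed policy $\pi_{\btheta_{t-\tau}}$, Assumption~\ref{assum:ergodicity} gives $d_{TV}(\PP(\tilde{s}_t\in\cdot),\mu_{\btheta_{t-\tau}})\le m\rho^{\tau-1}$, whence this term is at most $4U_r^2 m\rho^{\tau-1}$. Summing the four estimates and taking total expectation (tower property) yields the lemma. I expect the recursive total-variation bound in the third bracket to be the only delicate point, since $\btheta_k$ depends on $s_k$; this coupling is precisely what \eqref{eqn:aux3} is built to absorb.
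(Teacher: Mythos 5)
Your proposal is correct and follows essentially the same route as the paper: the paper's proof telescopes $\Xi(O_t,\eta_t,\btheta_t)$ through $\Xi(O_t,\eta_t,\btheta_{t-\tau})$, $\Xi(O_t,\eta_{t-\tau},\btheta_{t-\tau})$, $\Xi(\tilde O_t,\eta_{t-\tau},\btheta_{t-\tau})$ and bounds the four pieces via Lemmas \ref{lemma:eta-term1}--\ref{lemma:eta-term4}, which are exactly your four estimates (Lipschitzness of $J$, boundedness of $\eta$ and $r$, the auxiliary-chain total-variation recursion of \eqref{eqn:DVO}, and uniform ergodicity). The only differences are cosmetic --- you swap the order of the $\eta$- and $\btheta$-replacements and bound the $\btheta$-difference by direct factorization rather than a triangle-inequality split, and you make explicit the bound $|\eta_t|\le U_r$ that the paper uses implicitly.
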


\begin{proof}
From the definition, $\eta_t$ is the average reward estimator, $\eta_t^* = J(\btheta_t) = \EE[r(s,a)]$ is the average reward under the stationary distribution $\mu_{\btheta_t} \otimes \pi_{\btheta_t}$, and $y_t = \eta_t - \eta_t^*$. From the algorithm we have the update rule as
\begin{align*}
    \eta_{t+1} & :=
    \eta_t + \gamma_t \big(r(s_t,a_t) - \eta_t \big),
\end{align*}
where we leave the step size $\gamma_t$ unspecified for now. Unrolling the recursive definition we have 
\begin{align*}
    y_{t+1}^2
    & =
    \big(y_t + \eta^*_t - \eta^*_{t+1} + \gamma_t(r_t - \eta_t)
    \big)^2 \\ 
    & \le
    y_t^2 
    + 2 \gamma_t y_t (r_t - \eta_t)
    + 2 y_t (\eta^*_t - \eta^*_{t+1})
    + 2 (\eta^*_t - \eta^*_{t+1})^2
    +
    2 \gamma_t^2 (r_t - \eta_t)^2 \\ 
    & = 
    (1- 2 \gamma_t) y_t^2
    + 2 \gamma_t y_t (r_t - \eta^*_t)
    + 2 y_t (\eta^*_t - \eta^*_{t+1})
    + 2 (\eta^*_t - \eta^*_{t+1})^2
    +
    2 \gamma_t^2 (r_t - \eta_t)^2 \\
    & = 
    (1- 2 \gamma_t) y_t^2
    + 2 \gamma_t \Xi(O_k, \eta_k, \btheta_k)
    + 2 y_t (\eta^*_t - \eta^*_{t+1})
    + 2 (\eta^*_t - \eta^*_{t+1})^2
    +
    2 \gamma_t^2 (r_t - \eta_t)^2.
\end{align*}
Rearranging and summing from $\tau_t$ to $t$, we have
\begin{align*}
    \sum_{k=\tau_t}^{t} \EE[y_k^2 ]
    & \le 
    \underbrace{
    \sum_{k=\tau_t}^{t}
    \frac{1}{2 \gamma_k}
    \EE(y_k^2 - y_{k+1}^2)}_{I_1}
    +
    \underbrace{
    \sum_{k=\tau_t}^{t}
    \EE [\Xi(O_k, \eta_k, \btheta_k)]}_{I_2} \\
    & \qquad +
    \underbrace{
    \sum_{k=\tau_t}^{t}
    \frac{1}{\gamma_k} \EE[y_k (\eta^*_k - \eta^*_{k+1})]
    }_{I_3}
    +
    \underbrace{
    \sum_{k=\tau_t}^{t}
    \frac{1}{\gamma_k}
    \EE[(\eta^*_k - \eta^*_{k+1})^2]}_{I_4}
    +
    \underbrace{
    \sum_{k=\tau_t}^{t}
    \gamma_k \EE [(r_k - \eta_k)^2]}_{I_5}.
\end{align*}
For $I_1$, following the Abel summation formula, we have
\begin{align*}
    I_1 
    & = 
    \sum_{k=\tau_t}^{t}
    \frac{1}{2 \gamma_k}
    (y_k^2 - y_{k+1}^2) \\ 
    & = 
    \sum_{k=\tau_t}^{t}
    \bigg(\frac{1}{2 \gamma_k} - \frac{1}{2 \gamma_{k-1}}\bigg)
    y_k^2 
    +
    \frac{1}{2 \gamma_{\tau_t - 1}}
    y_{\tau_t}^2
    -
    \frac{1}{2 \gamma_t}
    y_{t+1}^2 \\ 
    & \le 
     \frac{2 U_r^2}{\gamma_t}.
\end{align*}
For $I_2$, from Lemma \ref{lemma:eta-Markovian}, we have
\begin{align*}
    \EE [\Xi(O_t, \eta_t, \btheta_t)]
    & \le 
    4U_r C_J \norm{\btheta_t - \btheta_{t-\tau}}
    + 
    2 U_r |\eta_t - \eta_{t-\tau}|
    \\
    & \qquad 
    + 2 U_{r}^2 |\cA| L \sum_{i=t-\tau}^{t} \EE \norm{\btheta_i - \btheta_{t-\tau}}.
    + 4 U_r^2  m \rho^{\tau - 1}
    \\ 
    & \le 
    4U_r C_J G_{\btheta} \tau \alpha_{t-\tau}
    + 
    4 U_r^2 \tau \gamma_{t-\tau}
    + 2 U_{r}^2 |\cA| L \tau (\tau+1) G_{\btheta} \alpha_{t-\tau}
    + 4 U_r^2  m \rho^{\tau - 1}
    \\
    & \le 
    C_1 \tau^2 \alpha_{t-\tau} + C_2 \tau \gamma_{t-\tau} + C_3  m \rho^{\tau-1}.
\end{align*}
By the choice of $\tau_t$, we have
\begin{align*}
    I_2 & = \sum_{k=\tau_t}^{t} \EE [\Xi(O_k, \eta_k, \btheta_k)]
    \le 
    (C_1
    \tau_t^2+C_3) \sum_{k=\tau_t}^{t} \alpha_{k} 
    +
    C_2
    \tau_t
    \sum_{k=\tau_t}^{t} \gamma_{k}.
\end{align*}
For $I_3$, we have
\begin{align*}
    I_3 & \le 
    \bigg( 
    \sum_{k=\tau_t}^{t} \EE[y_k^2]
    \bigg)^{1/2}
    \bigg(
    C_J^2 G_{\btheta}^2 
    \sum_{k=\tau_t}^{t} 
    \frac{\alpha_k^2}{\gamma_k^2}
    \bigg)^{1/2}, 
\end{align*}
which is because by Lemma \ref{lemma:J-Lipschitz}, $(\eta^*_k - \eta^*_{k+1})$ can be linearly bounded by $\norm{\btheta_k - \btheta_{k+1}} \le G_{\btheta} \cdot \alpha_{k}$.\\
For $I_4$, by the same argument it holds that
\begin{align*}
    I_4 & = 
    \sum_{k=\tau_t}^{t} 
    \frac{1}{\gamma_k}
    \EE[(\eta^*_k - \eta^*_{k+1})^2]
    \\
    & =
    \sum_{k=\tau_t}^{t} 
    \frac{1}{\gamma_k}
    \EE \big[\big(J(\btheta_k) - J(\btheta_{k+1}) \big)^2
    \big]
    \\
    & \le 
    \sum_{k=\tau_t}^{t} 
    \frac{1}{\gamma_k}
    C_J^2 
    \| \btheta_k - \btheta_{k+1} \|^2
    \\
    & \le 
    \sum_{k=\tau_t}^{t} 
    \frac{1}{\gamma_k}
    C_J^2 G_{\btheta}^2
    \alpha_k^2
    \\
    & =
    \cO 
    \bigg(
    \sum_{k=\tau_t}^{t} 
    \frac{\alpha_k^2}{\gamma_k}
    \bigg). 
\end{align*}
For $I_5$, we have 
\begin{align*}
    I_5 
    & =
    \sum_{k=\tau_t}^{t}
    \gamma_k \EE [(r_k - \eta_k)^2]
    \\
    & \le 
    \sum_{k=\tau_t}^{t} 4U_r^2
    \gamma_k 
    \\
    & = 
    \cO 
    \bigg(
    \sum_{k=\tau_t}^{t} 
    {\gamma_k}
    \bigg),
\end{align*}
by bounding the expectation uniformly. 

Now, we set $\gamma_k =1/(1+t)^{\nu}$ and combine all the terms together to get
\begin{align*}
    \sum_{k=\tau_t}^{t} \EE[y_k^2 ]
    & \le 
    2U_r^2 (1+t)^{\nu}
    +
    (C_1
    \tau_t^2+C_3)c_{\alpha} \sum_{k=\tau_t}^{t} (1+k)^{-\sigma} 
    +
    C_2
    \tau_t
    \sum_{k=\tau_t}^{t} (1+k)^{-\nu} \\
    & \qquad +
    C_J G_{\btheta} c_{\alpha}
    \bigg( 
    \sum_{k=\tau_t}^{t} \EE[y_k^2]
    \bigg)^{1/2}
    \bigg(
    \sum_{k=\tau_t}^{t} 
    (1+k)^{-2 (\sigma - \nu) }
    \bigg)^{1/2}
    \\
    & \qquad
    +
    C_J^2 G_{\btheta}^2 c^2_{\alpha}
    \sum_{k=\tau_t}^{t} 
    (1+k)^{\nu - 2 \sigma}
    +
    4U_r^2
    \sum_{k=\tau_t}^{t} 
    (1+k)^{-\nu}
    \\
    & \le 
    2U_r^2 (1+t)^{\nu}
    +
    \big[
    (C_1 \tau^2 + C_3) c_{\alpha} +C_2 \tau_t + C_J^2 G_{\btheta}^2 c^2_{\alpha}
    +
    4U_r^2
    \big] \sum_{k=\tau_t}^{t} 
    (1+k)^{-\nu}
    \\
    & \qquad +
    C_J G_{\btheta} c_{\alpha}
    \bigg( 
    \sum_{k=\tau_t}^{t} \EE[y_k^2]
    \bigg)^{1/2}
    \bigg(
    \sum_{k=\tau_t}^{t} 
    (1+k)^{-2 (\sigma - \nu) }
    \bigg)^{1/2} 
    \\
    & \le 
    2U_r^2 (1+t)^{\nu}
    +
    \big[
    (C_1 \tau^2 + C_3) c_{\alpha} +C_2 \tau_t + C_J^2 G_{\btheta}^2 c^2_{\alpha}
    +
    4U_r^2
    \big] 
    \frac{(1+t-\tau_t)^{1-\nu}}{1-\nu}
    \\
    & \qquad +
    C_J G_{\btheta} c_{\alpha}
    \bigg( 
    \sum_{k=\tau_t}^{t} \EE[y_k^2]
    \bigg)^{1/2}
    \bigg(
    \frac{(1+t-\tau_t)^{1-2(\sigma-\nu)}}{1-2(\sigma-\nu)}
    \bigg)^{1/2}
\end{align*}

By applying the squaring technique already stated in the proof of Theorem \ref{thm:actor}, we have that 
\begin{align}\label{eqn:eta-constant}
    \sum_{k=\tau_t}^{t} \EE[y_k^2 ]
    &
    \le 
    4U_r^2 (1+t)^{\nu}
    +
    2\big[
    (C_1 \tau_t^2 + C_3) c_{\alpha} +C_2 \tau_t + C_J^2 G_{\btheta}^2 c^2_{\alpha}
    +
    4U_r^2
    \big] 
    \frac{(1+t-\tau_t)^{1-\nu}}{1-\nu}
    \notag \\
    & \qquad +
    8C_J^2 G_{\btheta}^2 c_{\alpha}^2
    \frac{(1+t-\tau_t)^{1-2(\sigma-\nu)}}{1-2(\sigma-\nu)}
    \\
    & = 
    \cO(t^{\nu})
    +
    \cO(\log^2 t \cdot t^{1-\nu})
    +
    \cO(t^{1- 2(\sigma - \nu)}). \notag
\end{align}
\end{proof}

\subsection{Proof of Theorem \ref{thm:critic}: Approximating the TD Fixed Point}
\label{subsec:proof-critic}
Now we deal with the critic's parameter $\bomega_t$. The two time-scale analysis with Markovian noise and moving behavior policy can be complicated, so we define some useful notations here that could hopefully clarify the probabilistic dependency.  Note that $J(\btheta) := r(\btheta)$ is the average reward under $\pi_{\btheta}$ and $r(s,a)$ is the one-step reward specified by the state $s$ and action $a$.
\begin{align} \label{def:critic-term}
    O_t 
    : & = 
    (s_t, a_t, s_{t+1}), 
    \notag \\
    g(O, \bomega, \btheta) : &= [r(s,a) - J(\btheta) + ( \bphi(s')-\bphi(s))^{\top}\bomega] \bphi(s), 
    \notag \\
    \Delta g(O, \eta, \btheta)
    : &= [J(\btheta) - \eta] \bphi(s), 
    \notag \\
    \bar g(\bomega, \btheta) 
    : & = 
    \EE_{s \sim \mu_{\btheta}, a \sim \pi_{\btheta}, s' \sim {\cP}}
    \Big[
    \big[
    r(s,a) - J(\btheta) + \big( \bphi(s')-\bphi(s)\big)^{\top}\bomega
    \big] \bphi(s)
    \Big], 
    \notag \\
    \bomega^{*}_{t} 
    : & = 
    \bomega^{*}(\btheta_{t}), 
    \notag \\
    \eta^*_t 
    : & =
    \eta^*(\btheta_t) = J(\btheta_t) 
    \notag \\
    \Lambda(O, \bomega, \btheta) 
    : & = 
    \big \la \bomega - \bomega^*(\btheta), g(O, \bomega, \btheta) - \bar g(\bomega, \btheta) \big \ra, 
    \notag \\
    \zb_t 
    : & = 
    \bomega_t - \bomega^{*}_t 
    \notag \\ 
    y_t
    : & = 
    \eta_t - \eta^*_t.
\end{align}
A bounded lemma is used frequently in this section.
\begin{lemma} \label{lemma:critic-bounded}
Under Assumption \ref{assum:policy-lipschitz-bounded}, for any $\btheta$, $\bomega$, $O=(s,a,s')$ such that $\norm{\bomega} \le R_{\bomega}$,
\begin{align*}
    \big\| g(O,\bomega, \btheta) \big\| & \le U_{\delta} := 2 U_r + 2R_{\bomega}, \\
    \big\|\Delta g(O, \eta, \btheta)\big\|
    & \le 
    2 U_r, \\
    %\big\|\delta \nabla \log \pi_{\btheta}(a|s)\big\| & \le G_{\btheta} := U_\delta \cdot B, \\
    \big |\Lambda(O, \bomega, \btheta)\big | & \le 2R_{\bomega}\cdot 2U_{\delta}  \le 2 U_{\delta}^2.
\end{align*}
\end{lemma}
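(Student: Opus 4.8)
The plan is to derive all three bounds from the triangle inequality together with the elementary facts $\norm{\bphi(\cdot)}\le 1$, $|r(s,a)|\le U_r$, and $|J(\btheta)|=|r(\btheta)|\le U_r$ (the last because $J(\btheta)$ is an average of rewards bounded by $U_r$). The hypothesis $\norm{\bomega}\le R_{\bomega}$ is given; I will also use $\norm{\bomega^*(\btheta)}\le R_{\bomega}$, which holds by the remark after Assumption \ref{assum:negative-definite} since the projection radius is taken at least $2U_r/\lambda\ge\norm{\Ab^{-1}\bbb}$. For the average-reward estimator I will note that the update $\eta_{t+1}=(1-\gamma_t)\eta_t+\gamma_t r_t$ is a convex combination of quantities in $[-U_r,U_r]$, so $|\eta_t|\le U_r$ for all $t$ by induction (with $|\eta_0|\le U_r$); hence $|J(\btheta)-\eta|\le 2U_r$ for any value $\eta$ produced by Algorithm \ref{alg:2ts_ac}, which is the regime in which $\Delta g$ is used.

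\textbf{Bounding $g$ and $\bar g$.} Using $\norm{\bphi(s)}\le 1$ and $\norm{\bphi(s')-\bphi(s)}\le 2$,
\[
\norm{g(O,\bomega,\btheta)}\le \bigl|r(s,a)-J(\btheta)+(\bphi(s')-\bphi(s))^\top\bomega\bigr|\le |r(s,a)|+|J(\btheta)|+2\norm{\bomega}\le 2U_r+2R_{\bomega}=U_\delta,
\]
and by Jensen's inequality the same bound passes to the mean: $\norm{\bar g(\bomega,\btheta)}\le\EE\norm{g(O,\bomega,\btheta)}\le U_\delta$. For $\Delta g$, directly $\norm{\Delta g(O,\eta,\btheta)}\le |J(\btheta)-\eta|\,\norm{\bphi(s)}\le 2U_r$.

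\textbf{Bounding $\Lambda$.} Apply Cauchy--Schwarz and then the triangle inequality to each factor:
\[
|\Lambda(O,\bomega,\btheta)|\le \norm{\bomega-\bomega^*(\btheta)}\,\norm{g(O,\bomega,\btheta)-\bar g(\bomega,\btheta)}\le \bigl(\norm{\bomega}+\norm{\bomega^*(\btheta)}\bigr)\bigl(\norm{g(O,\bomega,\btheta)}+\norm{\bar g(\bomega,\btheta)}\bigr)\le 2R_{\bomega}\cdot 2U_\delta,
\]
and the final simplification uses $2R_{\bomega}\le 2U_r+2R_{\bomega}=U_\delta$, so $2R_{\bomega}\cdot 2U_\delta\le 2U_\delta^2$, as claimed.

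There is no real obstacle: the argument is the triangle inequality plus $\norm{\bphi}\le 1$. The only points that merit an explicit line rather than being taken for granted are that $\bomega_t$ stays in the ball of radius $R_{\bomega}$ (guaranteed by the projection $\Pi_{R_{\bomega}}$ in Line 8 of Algorithm \ref{alg:2ts_ac}, and $\bomega^*$ by the choice of $R_{\bomega}$) and that $\eta_t$ stays in $[-U_r,U_r]$ (convex-combination induction), both of which I would state at the outset.
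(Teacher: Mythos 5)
Your proof is correct and follows essentially the same route as the paper's: triangle inequality and $\norm{\bphi}\le 1$ for $g$ and $\Delta g$, then Cauchy--Schwarz with $\norm{\bomega-\bomega^*}\le 2R_{\bomega}$ and $\norm{g-\bar g}\le 2U_{\delta}$ for $\Lambda$. The only difference is that you spell out what the paper leaves implicit (Jensen for $\bar g$, the convex-combination induction giving $|\eta_t|\le U_r$, and $\norm{\bomega^*}\le R_{\bomega}$ from the choice of projection radius), which is fine and does not change the argument.
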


The following lemma is used to control the bias due to Markovian noise.
\begin{lemma} \label{lemma:critic-Markovian}
Given the definition of $\Lambda(\btheta_t, \bomega_t, O_t)$,
for any $0 \le \tau \le t$, we have
\begin{align*}
    \EE [\Lambda(O_t, \bomega_t, \btheta_t)]
    & \le 
    C_{1}(\tau + 1) \norm{\btheta_{t} - \btheta_{t-\tau}} + C_{2} m \rho^{\tau - 1} + C_{3} \norm{\bomega_t - \bomega_{t-\tau}},
\end{align*}
where $C_{1} = 2 U_{\delta}^2 |\cA| L (1 + \lceil \log_{\rho} m^{-1}\rceil +1/(1- \rho)) + 2 U_{\delta} L_{*}, C_{2} = 2 U_{\delta}^2, C_{3} = 4 U_{\delta}$ are constants.
\end{lemma}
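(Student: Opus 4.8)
The plan is to run the standard information-theoretic decoupling argument (the one sketched around \eqref{Eq:Markov-decomposition}) through the frozen-policy auxiliary Markov chain of Lemma~\ref{lemma:auxiliary-chain}. Recall from \eqref{def:critic-term} that $\Lambda(O,\bomega,\btheta)=\langle\bomega-\bomega^*(\btheta),\,g(O,\bomega,\btheta)-\bar g(\bomega,\btheta)\rangle$, and that by construction $\EE_{O\sim\mu_{\btheta}\otimes\pi_{\btheta}\otimes\cP}[g(O,\bomega,\btheta)]=\bar g(\bomega,\btheta)$; so $\Lambda$ would vanish in expectation if $O_t$ were a stationary draw for $\pi_{\btheta_t}$, and the task is exactly to quantify the mismatch created by the coupling between the trajectory, the critic $\bomega_t$, and the actor $\btheta_t$. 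Fixing $\tau$ and writing $\tilde O_t=(\tilde s_t,\tilde a_t,\tilde s_{t+1})$ for the auxiliary tuple generated by running the frozen policy $\pi_{\btheta_{t-\tau}}$ from time $t-\tau+1$, I would split
\[
\Lambda(O_t,\bomega_t,\btheta_t)=T_1+T_2+T_3+T_4,
\]
with $T_1=\Lambda(O_t,\bomega_t,\btheta_t)-\Lambda(O_t,\bomega_t,\btheta_{t-\tau})$, $T_2=\Lambda(O_t,\bomega_t,\btheta_{t-\tau})-\Lambda(O_t,\bomega_{t-\tau},\btheta_{t-\tau})$, $T_3=\Lambda(O_t,\bomega_{t-\tau},\btheta_{t-\tau})-\Lambda(\tilde O_t,\bomega_{t-\tau},\btheta_{t-\tau})$, and $T_4=\Lambda(\tilde O_t,\bomega_{t-\tau},\btheta_{t-\tau})$, then bound the expectation of each.

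For $T_1$ and $T_2$ I would establish Lipschitzness of $\Lambda$ in its last two arguments under the projection constraint $\norm{\bomega}\le R_{\bomega}$. Lipschitzness in $\bomega$ is elementary: $g(O,\cdot,\btheta)$ and $\bar g(\cdot,\btheta)$ are affine in $\bomega$ with linear part of operator norm $\le 2$ (since $\norm{\bphi(s')-\bphi(s)}\le 2$), and by Lemma~\ref{lemma:critic-bounded} $\norm{\bomega-\bomega^*}\le 2R_{\bomega}$ and $\norm{g-\bar g}\le 2U_{\delta}$; expanding the inner product into a ``$\Delta\bomega$ against $g-\bar g$'' piece plus a ``$\bomega-\bomega^*$ against $\Delta(g-\bar g)$'' piece gives $\abr{T_2}\le C_3\norm{\bomega_t-\bomega_{t-\tau}}$. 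Lipschitzness in $\btheta$ is the more delicate of the two, since $\btheta$ enters $\Lambda$ through three channels: the fixed point $\bomega^*(\btheta)$, which is $L_*$-Lipschitz by Proposition~\ref{prop:optimal-lipschitz}; the scalar $J(\btheta)$ inside $g$, which is $C_J$-Lipschitz by Lemma~\ref{lemma:J-Lipschitz}; and the stationary law $\mu_{\btheta}\otimes\pi_{\btheta}\otimes\cP$ defining $\bar g(\bomega,\btheta)$, whose perturbation I would control using $\norm{g}\le U_{\delta}$ together with $d_{TV}(\mu_{\btheta_1}\otimes\pi_{\btheta_1}\otimes\cP,\mu_{\btheta_2}\otimes\pi_{\btheta_2}\otimes\cP)\le\abr{\cA}L(1+\lceil\log_{\rho}m^{-1}\rceil+1/(1-\rho))\norm{\btheta_1-\btheta_2}$ from Lemma~\ref{lemma:prob-mixing}. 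Summing the three contributions gives $\abr{T_1}\le C_1'\norm{\btheta_t-\btheta_{t-\tau}}$ with the $C_1'$ appearing in the statement.

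The two genuinely probabilistic terms are $T_3$ and $T_4$. For $T_4$ I would condition on the history through time $t-\tau+1$, so that $\btheta_{t-\tau},\bomega_{t-\tau},s_{t-\tau+1}$ are frozen; then $\tilde O_t$ comes from $\tau-1$ transitions of the frozen chain, so by Assumption~\ref{assum:ergodicity} the law of $\tilde s_t$ is within $m\rho^{\tau-1}$ of $\mu_{\btheta_{t-\tau}}$, and by the peeling identity in Lemma~\ref{lemma:prob-mixing} the law of $\tilde O_t$ is within $m\rho^{\tau-1}$ of $\mu_{\btheta_{t-\tau}}\otimes\pi_{\btheta_{t-\tau}}\otimes\cP$. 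Since $\bar g$ is the mean of $g$ under the latter and $\norm{g}\le U_{\delta}$, the conditional mean of $g(\tilde O_t,\bomega_{t-\tau},\btheta_{t-\tau})$ is within $2U_{\delta}m\rho^{\tau-1}$ of $\bar g(\bomega_{t-\tau},\btheta_{t-\tau})$, and pairing with the (measurable, hence pullable) factor $\norm{\bomega_{t-\tau}-\bomega^*(\btheta_{t-\tau})}\le 2R_{\bomega}\le U_{\delta}$ yields $\EE[T_4]\le 2U_{\delta}^2 m\rho^{\tau-1}=C_2 m\rho^{\tau-1}$. For $T_3$, again with $\bomega_{t-\tau},\btheta_{t-\tau}$ frozen and $\abr{\Lambda}\le 2U_{\delta}^2$ by Lemma~\ref{lemma:critic-bounded}, the expected gap is at most $4U_{\delta}^2\,d_{TV}\big(\PP(O_t\in\cdot),\PP(\tilde O_t\in\cdot)\big)$, and this total variation is the crux: using \eqref{eqn:aux1}--\eqref{eqn:aux3} I would peel the coupled trajectories one transition at a time from $t$ down to $t-\tau+1$, where at step $k$ the only discrepancy is between $\pi_{\btheta_k}$ and $\pi_{\btheta_{t-\tau}}$, contributing at most $\tfrac{1}{2}\abr{\cA}L\,\EE\norm{\btheta_k-\btheta_{t-\tau}}$ by Assumption~\ref{assum:policy-lipschitz-bounded}(c); summing the $\tau$ transitions and collapsing the accumulated displacements into $(\tau+1)\norm{\btheta_t-\btheta_{t-\tau}}$ (through the monotone-step-size bookkeeping that is used anyway when the lemma is later applied) produces $C_1''(\tau+1)\norm{\btheta_t-\btheta_{t-\tau}}$. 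Adding the four bounds and merging $C_1'$ with $C_1''$ into $C_1$ gives the claimed inequality. I expect the recursive total-variation estimate for $T_3$ to be the main obstacle — coupling the true trajectory to the frozen-policy auxiliary trajectory and preventing the per-step policy mismatches from accumulating too fast is exactly where the $(\tau+1)$ factor and the mixing-dependent part of $C_1$ arise — with a secondary subtlety being the conditioning in $T_4$, where the past must be frozen enough to extract $\bomega_{t-\tau}-\bomega^*(\btheta_{t-\tau})$ from the expectation but not so much as to spoil the mixing bound on the auxiliary future.
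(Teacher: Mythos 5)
Your proposal is correct and follows essentially the same route as the paper: the identical four-term decomposition through the frozen-policy auxiliary chain, with the $\btheta$- and $\bomega$-Lipschitz bounds on $\Lambda$ (via Proposition \ref{prop:optimal-lipschitz} and Lemma \ref{lemma:prob-mixing}), the recursive total-variation peeling for the $O_t$ vs.\ $\tilde O_t$ gap, and the stationarity/mixing comparison for the last term — these are exactly the paper's Lemmas \ref{lemma:critic-term1}–\ref{lemma:critic-term4}. The only deviations are cosmetic: you track the $J(\btheta)$ channel inside $g$ explicitly and use slightly cruder constants in the coupling term, and your collapse of the intermediate displacements into $(\tau+1)\norm{\btheta_t-\btheta_{t-\tau}}$ via the step-size bookkeeping mirrors the same shorthand the paper itself uses when stating and applying the lemma.
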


\begin{proof} [Proof of Theorem \ref{thm:critic}]
By the updating rule of $\bomega_t$ in Algorithm \ref{alg:2ts_ac}, unrolling and decomposing the squared error gives
\begin{align*}
    \norm{\zb_{t+1}}^2 
    & = 
    \Big \| 
    \Pi_{R_{\bomega}} \Big(
    \bomega_t + \beta_t 
    \big(
    g(O_t, \bomega_t, \btheta_t)+ \Delta g(O_t, \eta_t, \btheta_t)
    \big) 
    \Big) 
    -
    \bomega^*_{t+1}
    \Big \|
    \\
    & \le 
    \big \| 
    \bomega_t + \beta_t 
    \big(
    g(O_t, \bomega_t, \btheta_t)+ \Delta g(O_t, \eta_t, \btheta_t)
    \big) 
    -
    \bomega^*_{t+1}
    \big \|
    \\
    & = 
    \big\| \zb_{t} + \beta_t (g(O_t, \bomega_t, \btheta_t)+ \Delta g(O_t, \eta_t, \btheta_t)) 
    + (\bomega_{t}^{*} - \bomega_{t+1}^{*}) \big\|^2 
    \\
    & =  
    \norm{\zb_t}^2 
    + 2\beta_t \big\la \zb_t,g(O_t, \bomega_t, \btheta_t) \big\ra 
    + 2\beta_t \big\la \zb_t, \Delta g(O_t, \eta_t, \btheta_t) \big\ra 
    \\
    & \qquad
    + 2 \dotp{\zb_t}{\bomega_{t}^{*} - \bomega_{t+1}^{*}} 
    + \big\| \beta_t (g(O_t, \bomega_t, \btheta_t)+ \Delta g(O_t, \eta_t, \btheta_t)) 
    + (\bomega_{t}^{*} - \bomega_{t+1}^{*}) \big\|^2 
    \\
    & =  
    \norm{\zb_t}^2 
    + 2\beta_t \big\la\zb_t, \bar{g}( \bomega_t, \btheta_t) \big\ra 
    + 2\beta_t \Lambda(O_t, \bomega_t, \btheta_t)
    + 2\beta_t \big\la\zb_t,\Delta g(O_t, \eta_t, \btheta_t)\big\ra 
    \\
    & \qquad
    + 2 \dotp{\zb_t}{\bomega_{t}^{*} - \bomega_{t+1}^{*}} 
    + \big\|\beta_t (g(O_t, \bomega_t, \btheta_t)+ \Delta g(O_t, \eta_t, \btheta_t)) 
    + (\bomega_{t}^{*} - \bomega_{t+1}^{*}) \big\|^2 \\
    & \le  
    \norm{\zb_t}^2 
    + 2\beta_t \big \la \zb_t, \bar{g}( \bomega_t, \btheta_t) \big \ra 
    + 2\beta_t \Lambda(O_t, \bomega_t, \btheta_t)
    + 2\beta_t \big \la \zb_t, \Delta g(O_t, \eta_t, \btheta_t) \big \ra 
    \\ 
    & \qquad+ 2 \dotp{\zb_t}{\bomega_{t}^{*} - \bomega_{t+1}^{*}} 
    + 2\beta_{t}^{2} 
    \big\|g(O_t, \bomega_t, \btheta_t)+ \Delta g(O_t, \eta_t, \btheta_t) \big\|^2
    + 2\norm{\bomega_{t}^{*} - \bomega_{t+1}^{*}}^2\\
    &\le  
    \norm{\zb_t}^2 
    + 2\beta_t \big\la \zb_t, \bar{g}( \bomega_t, \btheta_t) \big\ra
    + 2\beta_t \Lambda(O_t, \bomega_t, \btheta_t)
    + 2\beta_t \big\la \zb_t, \Delta g(O_t, \eta_t, \btheta_t) \big\ra 
    \\ 
    &\qquad + 2 \dotp{\zb_t}{\bomega_{t}^{*} - \bomega_{t+1}^{*}} 
    + 2 U_{\delta}^2  \beta_t^2 
    + 2\norm{\bomega_{t}^{*} - \bomega_{t+1}^{*}}^2,
\end{align*}
where the first inequality holds because $\bomega_{t+1}^*$ is assumed to be within the $R_{\bomega}$-ball so the projection only reduces the distance; the second one is due to $\norm{\xb + \yb}^2 \le 2 \norm{\xb}^2 + 2 \norm{\yb}^2$ and the third one is due to $\norm{g(O_t, \bomega_t, \btheta_t)+ \Delta g(O_t, \eta_t, \btheta_t)} \le U_{\delta}$. 

First, note that due to Assumption \ref{assum:negative-definite}, we have 
\begin{align*}
    \big\la \zb_t, \bar{g}(\bomega_t, \btheta_t) \big\ra
    & = 
    \big\la \zb_t, \bar{g}(\bomega_t, \btheta_t) - \bar{g}(\bomega^*_t, \btheta_t) \big\ra \\
    & = 
    \Big\la \zb_t, \EE \big[ \big( \bphi(s')-\bphi(s) \big)^{\top}(\bomega_t - \bomega^*_t)  \bphi(s) \big] \Big\ra \\
    & = 
    \zb_t^{\top}
    \EE \big[
    \bphi(s)
    \big( \bphi(s')-\bphi(s) \big)^{\top}
    \big]
    \zb_t \\
    & =
    \zb_t^{\top}
    \Ab
    \zb_t \\
    & \le 
    - \lambda \norm{\zb_t}^2,
\end{align*}
where the first equation is due to the fact that $\bar{g}( \bomega^*, \btheta) = 0$ \citep{sutton2018reinforcement}. Taking expectation up to $s_{t+1}$, we have
\begin{align*}
    \EE \norm{\zb_{t+1}}^2  
    & \le 
    \EE \norm{\zb_t}^2 
    + 2\beta_t \EE \big\la \zb_t, \bar{g}( \bomega_t, \btheta_t) \big\ra 
    + 2\beta_t \EE \Lambda(O_t, \bomega_t, \btheta_t)
    + 2\beta_t \EE \big\la \zb_t, \Delta g(O_t, \eta_t, \btheta_t) \big\ra 
    \\ 
    &\qquad + 2
    \EE \dotp{\zb_t}{\bomega_{t}^{*} - \bomega_{t+1}^{*}} 
    + 2 U_{\delta}^2  \beta_t^2 
    + 2 \EE \norm{\bomega_{t}^{*} - \bomega_{t+1}^{*}}^2
    \\
    & \le 
    (1 - 2 \lambda \beta_t) \EE \norm{\zb_t}^2 
    + 2\beta_t \EE \Lambda(O_t, \bomega_t, \btheta_t)
    + 2\beta_t \EE \big\la \zb_t, \Delta g(O_t, \eta_t, \btheta_t) \big\ra 
    \\ & \qquad + 
    2 \EE \dotp{\zb_t}{\bomega_{t}^{*} - \bomega_{t+1}^{*}} 
    + 2 U_{\delta}^2  \beta_t^2 
    + 2 \EE \norm{\bomega_{t}^{*} - \bomega_{t+1}^{*}}^2.
\end{align*}
Based on the result above, we can further rewrite it as:
\begin{align*}
    \EE \norm{\zb_{t+1}}^2  
    & \le 
    (1 - 2 \lambda \beta_t) \EE \norm{\zb_t}^2 
    + 2\beta_t \EE \Lambda(O_t, \bomega_t, \btheta_t)
    + 2\beta_t \EE \norm{\zb_t} \cdot |y_t|
    \\ 
    & \qquad + 
    2 L_{*} \EE \norm{\zb_t} \cdot \norm{\btheta_{t} - \btheta_{t+1}} 
    + 2 U_{\delta}^2  \beta_t^2 
    + 2 L_{*}^2 \EE \norm{\btheta_{t} - \btheta_{t+1}}^2 \\
    &  \le 
    (1 - 2 \lambda \beta_t) \EE \norm{\zb_t}^2 
    + 2\beta_t \EE \Lambda(O_t, \bomega_t, \btheta_t)
    + 2\beta_t \EE \norm{\zb_t} \cdot |y_t|
    \\ 
    & \qquad
    + 2 L_{*} G_{\btheta} \alpha_t \EE \norm{\zb_t}
    + 2 U_{\delta}^2  \beta_t^2 
    + 2 L_{*}^2 G_{\btheta}^2 \alpha_{t}^2
    \\
    &  \le 
    (1 - 2 \lambda \beta_t) \EE \norm{\zb_t}^2 
    + 2\beta_t \EE \Lambda(O_t, \bomega_t, \btheta_t)
    + 2\beta_t \EE \norm{\zb_t} \cdot |y_t|
    \\ 
    & \qquad +
    2  L_{*} G_{\btheta} \alpha_t \EE \norm{\zb_t}
    + \bigg(2 U_{\delta}^2 + 2 L_{*}^2 G_{\btheta}^2 \Big(\max_t \frac{\alpha_t}{\beta_t} \Big)^2 \bigg)\beta_{t}^2 \\
    & = 
    (1 - 2 \lambda \beta_t) \EE \norm{\zb_t}^2 
    + 2\beta_t \EE \Lambda(O_t, \bomega_t, \btheta_t)
    + 2\beta_t \EE \norm{\zb_t} \cdot |y_t|
    + 2  L_{*} G_{\btheta} \alpha_t \EE \norm{\zb_t}
    + C_{q}\beta_{t}^2, 
\end{align*}
where we denote the constant coefficient before the quadratic stepsize $\beta_{t}^2$ as $C_q$ at the last step. The first inequality is due to Proposition \ref{prop:optimal-lipschitz} and Cauchy-Schwartz inequality. The second inequality is due to the update of $\btheta_t$ is bounded by $G_{\btheta} \alpha_t$. The third inequality is from employing the fact that $\sigma > \nu$ so $\alpha_t / \beta_t$ is bounded. Rearranging the inequality yields
\begin{align*}
    2 \lambda \EE \norm{\zb_t}^2
    & \le 
    \frac{1}{\beta_t} \big(\EE \norm{\zb_t}^2 - \EE \norm{\zb_{t+1}}^2  \big)
    + 
    2  \EE \Lambda(O_t, \bomega_t, \btheta_t)
    + 
    \EE \norm{\zb_t} \cdot |y_t|
    + 
    2  L_{*} G_{\btheta} \frac{\alpha_t}{\beta_t} \EE \norm{\zb_t}
    + 
    C_{q}\beta_{t} \\
    & \le 
    \frac{1}{\beta_t} \big(\EE \norm{\zb_t}^2 - \EE \norm{\zb_{t+1}}^2  \big)
    + 
    2  \EE \Lambda(O_t, \bomega_t, \btheta_t)
    + 
    \sqrt{\EE y_t^2} \cdot \sqrt{\EE \norm{\zb_t}^2}
    \\
    & \qquad 
    + 
    2  L_{*} G_{\btheta} \frac{\alpha_t}{\beta_t} \sqrt{\EE \norm{\zb_t}^2}
    + 
    C_{q}\beta_{t},
\end{align*}
where the second inequality is due to the concavity of square root function. Telescoping from $\tau_{t}$ to $t$ gives: 
\begin{align} \label{eqn:critic-bone}
    2 \lambda \sum_{k=\tau_{t}}^{t} \EE \norm{\zb_k}^2
    & \le 
    \underbrace{\sum_{k=\tau_{t}}^{t} \frac{1}{\beta_k} \big(\EE \norm{\zb_k}^2 - \EE \norm{\zb_{k+1}}^2  \big)}_{I_1}
    + 
    2 \underbrace{\sum_{k=\tau_{t}}^{t}  \EE \Lambda(\btheta_k, \bomega_k, O_k)}_{I_2}
    \notag \\
    & \qquad + 
    2 L_{*} G_{\btheta} 
    \underbrace{\sum_{k=\tau_{t}}^{t}   \frac{\alpha_k}{\beta_k} \sqrt{\EE \norm{\zb_k}^2}}_{I_3}
    +
    \underbrace{\sum_{k=\tau_{t}}^{t}   \sqrt{\EE y_k^2} \cdot \sqrt{\EE \norm{\zb_k}^2}}_{I_4}
    + 
    C_{q} \underbrace{\sum_{k=\tau_{t}}^{t} \beta_{k}}_{I_5} .
\end{align}
From \eqref{eqn:critic-bone}, we can see the proof of the critic again shares the same spirit with the proof of Theorem \ref{thm:actor}. For term $I_1$, we have 
\begin{align*}
    I_1 
    & := 
    \sum_{k=\tau_{t}}^{t} \frac{1}{\beta_k} (\EE \norm{\zb_k}^2 - \EE \norm{\zb_{k+1}}^2  )
    \\
    & = 
    \sum_{k=\tau_{t}}^{t} \bigg(\frac{1}{\beta_k} 
    -
    \frac{1}{\beta_{k-1}} 
    \bigg)
    \EE \norm{\zb_k}^2 
    +
    \frac{1}{\beta_{\tau_{t}-1}} 
    \EE \norm{\zb_{\tau_{t}}}^2 
    -
    \frac{1}{\beta_{t}} 
    \EE \norm{\zb_{t+1}}^2 
    \\ 
    & \le 
    \sum_{k=\tau_{t}}^{t} \bigg(\frac{1}{\beta_k} 
    -
    \frac{1}{\beta_{k-1}} 
    \bigg)
    \EE \norm{\zb_k}^2 
    +
    \frac{1}{\beta_{\tau_{t}-1}} 
    \EE \norm{\zb_{\tau_{t}}}^2 
    \\
    & \le 
    4 R_{\bomega}^2
    \bigg(
    \sum_{k=\tau_{t}}^{t} \bigg(\frac{1}{\beta_k} 
    -
    \frac{1}{\beta_{k-1}} 
    \bigg)
    +
    \frac{1}{\beta_{\tau_{t}-1}} 
    \bigg)
    \\
    & = 
    4R_{\bomega}^2 \frac{1}{\beta_t}
    \\
    & = 
    4R_{\bomega}^2 (1+t)^{\nu}
    = \cO(t^{\nu}),
\end{align*}
where the first inequality is due to discarding the last term, and the second inequality is due to $\EE \norm{\zb_k}^2 \le (R_{\bomega} + R_{\bomega})^2$.\\
For term $I_2$, note that due to Lemma \ref{lemma:critic-Markovian}, we actually have
\begin{align*}
    \Lambda( O_k, \bomega_k, \btheta_k)
    & \le 
    C_{1}(\tau_{t} + 1) \norm{\btheta_{k} - \btheta_{k-\tau_{t}}} + C_{2} m \rho^{\tau_{t} - 1} + C_{3} \norm{\bomega_k - \bomega_{k-\tau_{t}}}
    \\
    & \le 
    C_{1}(\tau_{t} + 1) \sum_{i=k-\tau_{t}}^{k-1} G_{\btheta} \alpha_i
    +
    C_2 m \rho^{\tau_{t} - 1}
    +
    C_3 \sum_{i=k-\tau_{t}}^{k-1}
    U_{\delta} \beta_i
    \\
    & \le 
    C_1 G_{\btheta} (\tau_{t} + 1)^2 \alpha_{k-\tau_{t}} 
    +
    C_2 \alpha_t
    +
    C_3 U_{\delta} \tau_{t} \beta_k,
\end{align*}
and the summation is
\begin{align*}
    I_2 
    & :=
    \sum_{k=\tau_{t}}^{t}  \EE \Lambda( O_k, \bomega_k, \btheta_k)
    \\
    & \le 
    C_1 G_{\btheta} (\tau_{t} + 1)^2 \sum_{k=\tau_{t}}^{t}\alpha_{k-\tau_{t}} 
    +
    C_2 \sum_{k=\tau_{t}}^{t}\alpha_t
    +
    C_3 U_{\delta} \tau_{t} \sum_{k=\tau_{t}}^{t} \beta_k 
    \\ 
    & \le 
    C_1 G_{\btheta} (\tau_{t} + 1)^2 \sum_{k=0}^{t-\tau_{t}}\alpha_{k} 
    +
    C_2 (t-\tau_{t}+1) \alpha_t
    +
    C_3 U_{\delta} \tau_{t} \sum_{k=0}^{t-\tau_{t}} \beta_k 
    \\
    & \le 
    C_1 G_{\btheta} (\tau_{t} + 1)^2 
    c_{\alpha} \frac{(1+t-\tau_t)^{1-\sigma}}{1-\sigma}
    +
    C_2 (t-\tau_{t}+1) c_\alpha (1+t)^{-\sigma}
    +
    C_3 U_{\delta} \tau_{t} \frac{(1+t-\tau_t)^{1-\nu}}{1-\nu}
    \\
    & \le 
    \bigg[
    \frac{C_1 G_{\btheta} (\tau_{t} + 1)^2 
    c_{\alpha} }{1-\sigma}
    +
    C_2 c_\alpha 
    +
    \frac{C_3 U_{\delta} \tau_{t} }{1-\nu}
    \bigg]
    (1+t)^{1-\nu}
    \\ 
    & = 
    \cO \big((\log t)^2 t^{1-\nu} \big),
\end{align*}
where the second inequality is due to the monotonicity of $\alpha_k$ and $\beta_k$. The $\cO(\cdot)$ comes from that $\tau = \cO(\log t)$ and $\sum k^{-\nu} = \cO(t^{1-\nu})$.

\noindent For term $I_3$ and $I_4$,
we will instead show it can be bounded in a different form. Using Cauchy-Schwartz inequality we have
\begin{align*}
    I_3 &:= 
    \sum_{k=\tau_{t}}^{t}   \frac{\alpha_k}{\beta_k} \sqrt{\EE \norm{\zb_k}^2}
     \le 
    \bigg(
    \sum_{k=\tau_{t}}^{t}   \frac{\alpha_k^2}{\beta_k^2}
    \bigg)^{\frac{1}{2}}
    \bigg(
    \sum_{k=\tau_{t}}^{t}  
    \EE \norm{\zb_k}^2
    \bigg)^{\frac{1}{2}}
     \le 
    \bigg(
    \sum_{k=0}^{t-\tau_{t}}   \frac{\alpha_k^2}{\beta_k^2}
    \bigg)^{\frac{1}{2}}
    \bigg(
    \sum_{k=\tau_{t}}^{t}  
    \EE \norm{\zb_k}^2
    \bigg)^{\frac{1}{2}}, \\
    I_4 &: = 
    \sum_{k=\tau_{t}}^{t}   \sqrt{\EE y_k^2} \cdot \sqrt{\EE \norm{\zb_k}^2}
     \le 
    \bigg(
    \sum_{k=\tau_{t}}^{t}   \EE y_k^2
    \bigg)^{\frac{1}{2}}
    \bigg(
    \sum_{k=\tau_{t}}^{t}  
    \EE \norm{\zb_k}^2
    \bigg)^{\frac{1}{2}}
     \le 
    \bigg(
    \sum_{k=0}^{t-\tau_{t}}   \EE y_k^2
    \bigg)^{\frac{1}{2}}
    \bigg(
    \sum_{k=\tau_{t}}^{t}  
    \EE \norm{\zb_k}^2
    \bigg)^{\frac{1}{2}}.
\end{align*}
For term $I_5$, simply bound it as $\sum_{k=0}^{t-\tau_{t}} \beta_k \le (1+t)^{1-\nu}/(1-\nu)$.

\noindent Collecting the upper bounds of the above five terms, and writing them using $\cO(\cdot)$ notation give
\begin{align} \label{eqn:critic-constant}
    2 \lambda \sum_{k=\tau_{t}}^{t} \EE \norm{\zb_k}^2
    & \le
    4R_{\bomega}^2 (1+t)^{\nu}
    +
    2\bigg[
    \frac{C_1 G_{\btheta} (\tau_{t} + 1)^2 
    c_{\alpha} }{1-\sigma}
    +
    C_2 c_\alpha 
    +
    \frac{C_3 U_{\delta} \tau_{t} + C_q }{1-\nu}
    \bigg]
    (1+t)^{1-\nu}
    \notag \\
    & \qquad +
    2 L_* G_{\btheta}
    \bigg(
    \sum_{k=0}^{t-\tau_{t}}   \frac{\alpha_k^2}{\beta_k^2}
    \bigg)^{\frac{1}{2}}
    \big(
    \sum_{k=\tau_{t}}^{t}  
    \EE \norm{\zb_k}^2
    \bigg)^{\frac{1}{2}}
    \notag \\
    &\qquad
    +
    \bigg(
    \sum_{k=0}^{t-\tau_{t}}   \EE y_k^2
    \bigg)^{\frac{1}{2}}
    \bigg(
    \sum_{k=\tau_{t}}^{t}  
    \EE \norm{\zb_k}^2
    \bigg)^{\frac{1}{2}}.
\end{align}
Now, we first divide both sides by $(1+t-\tau_{t})$, and denote 
\begin{align*}
    Z(t) :&= \frac{1}{1+t-\tau_{t}} \sum_{k=\tau_{t}}^{t} \EE \norm{\zb_k}^2, \\
    F(t) : &= \frac{1}{1+t-\tau_{t}} \sum_{k=0}^{t-\tau_{t}}   \frac{\alpha_k^2}{\beta_k^2} 
    \le 
    \frac{t^{-2(\sigma - \nu)}}{ 1-2(\sigma-\nu)} 
    = \cO(t^{-2(\sigma-\nu)}),
    \\
    G(t) :&=
    \frac{1}{1+t-\tau_{t}}
    \sum_{k=0}^{t-\tau_{t}} \EE[y_k^2] = \cO(t^{\nu-1}) + \cO(\log t \cdot t^{-\nu}) + \cO(t^{-2(\sigma - \nu)}),
\end{align*}
 and the rest as $A(t) = \cO(t^{\nu}) + \cO(t^{1-\nu})$. $G(t)$'s constants appear at \eqref{eqn:eta-constant} in exact form.
 
 This simplification leads to
\begin{align*}
    2 \lambda
    \Big(
    \sqrt{Z(t)} 
    - 
    \frac{L_* G_{\btheta}}{2 \lambda}\cdot 
    \sqrt{F(t)}
    -
    \frac{1}{4 \lambda}
    \sqrt{G(t)}
    \Big)^2
    & \le 
    A(t)
    +
    2 \lambda
    \bigg(
    \frac{L_{*} G_{\btheta}}{2 \lambda}
    \sqrt{F(t)}
    +
    \frac{1}{4 \lambda}
    \sqrt{G(t)}
    \bigg)^2
    ,
\end{align*}
which further gives
\begin{align*}
    Z(t) 
    & \le 
    A(t) / \lambda 
    +
    16 F(t)
    +
    16 G(t).
\end{align*}
This is again a similar reasoning as in the end of the proof of Theorem \ref{thm:actor}. We actually show that
\begin{align*}
    \frac{1}{1+t-\tau_{t}} \sum_{k=\tau_{t}}^{t} \EE \norm{\bomega_k - \bomega^*_k}^2
    & = 
    \cO
    \bigg(\frac{1}{t^{1-\nu}}
    \bigg)
    +
    \cO
    \bigg(\frac{\log t}{t^{\nu}}
    \bigg)
    +
    \cO
    \bigg(\frac{1}{t^{2(\sigma - \nu)}}
    \bigg).
\end{align*}
This completes the proof. To obtain the exact constant, please refer to \eqref{eqn:eta-constant} and \eqref{eqn:critic-constant}.
\end{proof}

\subsection{Proof of Corollary \ref{col:sample-complexity}} \label{subsec:proof-samp-comp}
\begin{proof}[Proof of Corollary \ref{col:sample-complexity}]
By Theorem \ref{thm:critic}, we have 
\begin{align*}
    \frac{1}{1+t-\tau_{t}} \sum_{k=\tau_{t}}^{t} \EE \norm{\bomega_k - \bomega^*_k}^2 =
    \cO\bigg(
    \frac{1}{t^{1-\nu}}
    \bigg)
    +
    \cO\bigg(
    \frac{\log t}{t^{\nu}}
    \bigg)
    +
    \cO\bigg(
    \frac{1}{t^{2(\sigma-\nu)}}
    \bigg).
\end{align*}
By Lemma \ref{lemma:equi-asym}, $\cE(t)$ in Theorem \ref{thm:actor} is of the equivalent order:
\begin{align*}
    \cE_1(t) & = \frac{1}{t} \sum_{k=1}^{t} \EE \norm{\bomega_k - \bomega^*_k}^2 \\
    & = 
    \cO \bigg(
    \frac{1}{1+t-\tau_{t}} \sum_{k=\tau_{t}}^{t} \EE \norm{\bomega_k - \bomega^*_k}^2
    \bigg)
    +
    \cO \bigg( 
    \frac{\log t}{t}
    \bigg) \\ 
    & = 
    \cO\bigg(
    \frac{1}{t^{1-\nu}}
    \bigg)
    +
    \cO\bigg(
    \frac{\log t}{t^{\nu}}
    \bigg)
    +
    \cO\bigg(
    \frac{1}{t^{2(\sigma-\nu)}}
    \bigg)
    +
    \cO \bigg( 
    \frac{\log t}{t}
    \bigg) \\ 
    & = 
    \cO\bigg(
    \frac{1}{t^{1-\nu}}
    \bigg)
    +
    \cO\bigg(
    \frac{\log t}{t^{\nu}}
    \bigg)
    +
    \cO\bigg(
    \frac{1}{t^{2(\sigma-\nu)}}
    \bigg).
\end{align*}
The same reasoning also applies to
\begin{align*}
    \cE_2(t) &= \frac{1}{t} \sum_{k=1}^t \EE (\eta_k - r(\btheta_k))^2 
    \\
    & = 
    \cO\bigg(
    \frac{1}{t^{1-\nu}}
    \bigg)
    +
    \cO\bigg(
    \frac{\log t}{t^{\nu}}
    \bigg)
    +
    \cO\bigg(
    \frac{1}{t^{2(\sigma-\nu)}}
    \bigg).
\end{align*}
Plugging the above results into Theorem \ref{thm:actor}, and optimizing over the choice of $\sigma$ and $\nu$ (which gives $\sigma = 3/5$ and $\nu = 2/5$), we have 
\begin{align*}
    \min_{0 \le k \le t}\EE \norm{\nabla J(\btheta_{k})}^2 
    &= 
    \cO\bigg(
    \frac{1}{t^{1-\sigma}}
    \bigg)
    +
    \cO\bigg(
    \frac{\log^2 t}{t^{\sigma}}
    \bigg)
    +
    \cO\bigg(
    \frac{1}{t^{1-\nu}}
    \bigg)
    +
    \cO\bigg(
    \frac{\log t}{t^{\nu}}
    \bigg)
    +
    \cO\bigg(
    \frac{1}{t^{2(\sigma-\nu)}}
    \bigg) 
    +
    \cO
    (\epsilon_{\text{app}})\\ 
    & = 
    \cO\bigg(
    \frac{1}{t^{1-\sigma}}
    \bigg)
    +
    \cO\bigg(
    \frac{\log t}{t^{\nu}}
    \bigg)
    +
    \cO\bigg(
    \frac{1}{t^{2(\sigma-\nu)}}
    \bigg) 
    +
    \cO
    (\epsilon_{\text{app}})\\ 
    & = 
    \cO
    \bigg(
    \frac{\log t}{t^{2/5}}
    \bigg)
    +
    \cO
    (\epsilon_{\text{app}}).
\end{align*}
Therefore, in order to obtain an $\epsilon$-approximate(ignoring the approximation error) stationary point of $J$, namely, 
\begin{align*}
    \min_{0 \le k \le T} \EE \big\|\nabla J(\btheta_{k})\big\|^2 
    &= 
    \cO\bigg(\frac{\log T}{T^{2/5}}\bigg)
    +
    \cO
    (\epsilon_{\text{app}})\le
    \cO
    (\epsilon_{\text{app}}) +\epsilon,
\end{align*}
we need %to set $T = \epsilon^{-2.5} (3\log \epsilon^{-1})^{2.5}$ is enough so we can write it as 
to set $T = \tilde{\cO}(\epsilon^{-2.5})$.
\end{proof}

\section{Proof of Technical Lemmas}
\subsection{Proof of Lemma \ref{lemma:J-smooth}}
\begin{proof}[Proof of Lemma \ref{lemma:J-smooth}]
The first inequality comes from Lemma 3.2 in \citet{zhang2019global}.

The second inequality is well known as a partial result of $[-L, L]$-smoothness of non-convex functions.
\end{proof}

\subsection{Proof of Lemma \ref{lemma:critic-bias}}
\begin{proof}[Proof of Lemma \ref{lemma:critic-bias}]
Applying the definition of $\Delta h()$ and Cauchy-Schwartz inequality immediately yields the result.
\end{proof}

\subsection{Proof of Lemma \ref{lemma:actor-markovian}} \label{subsec:proof-of-lemma-actor-markovian}
The proof of Lemma \ref{lemma:actor-markovian} will be built on the following supporting lemmas. 
{\begin{lemma} \label{lemma:Gamma-term1}
For any $t \ge 0$, 
\begin{align*}
    \big| \Gamma(O_t, \btheta_t) - \Gamma(O_t, \btheta_{t-\tau}) \big|
    \le 
    (2 U_{\delta} B L_J 
    +
    3G_{\btheta} L_h) \norm{\btheta_{t} - \btheta_{t-\tau}},
\end{align*}
where $L_h = U_{\delta} L_{l}
    +
    (2+ 2 \lambda^{-2} + 3 \lambda^{-1}) B U_r|\cA|L \big(1 + \lceil \log_{\rho}m^{-1} \rceil + 1/(1-\rho)\big)$.
\end{lemma}}
\begin{lemma} \label{lemma:Gamma-term2}
For any $t \ge 0$,
\begin{align*}
    \big|\EE[\Gamma(O_t, \btheta_{t-\tau}) - \Gamma(\tilde{O}_t, \btheta_{t-\tau})] \big|
    & \le
    2U_{\delta} B G_{\btheta} |\cA| L \sum_{i=t-\tau}^{t} \norm{\btheta_i - \btheta_{t-\tau}}.
\end{align*}
\end{lemma}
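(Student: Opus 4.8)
The plan is to exploit that the purely deterministic part of $\Gamma$ cancels in the difference, after which the claim reduces to a total-variation comparison of the laws of $O_t$ and $\tilde O_t$ that is already packaged in Lemma~\ref{lemma:auxiliary-chain}. Condition throughout on $s_{t-\tau+1}$ and $\btheta_{t-\tau}$, exactly as in Lemma~\ref{lemma:auxiliary-chain}, so that all expectations below are conditional and $\nabla J(\btheta_{t-\tau})$ is a fixed vector. Since $\Gamma(O,\btheta_{t-\tau}) = \langle \nabla J(\btheta_{t-\tau}),\, h(O,\btheta_{t-\tau})\rangle - \norm{\nabla J(\btheta_{t-\tau})}^2$ and the quadratic term does not depend on $O$, it cancels in the difference, giving
\[
  \Gamma(O_t,\btheta_{t-\tau}) - \Gamma(\tilde O_t,\btheta_{t-\tau}) = \big\langle \nabla J(\btheta_{t-\tau}),\, h(O_t,\btheta_{t-\tau}) - h(\tilde O_t,\btheta_{t-\tau}) \big\rangle ,
\]
so that, pulling $\nabla J(\btheta_{t-\tau})$ out of the expectation and applying Cauchy--Schwarz,
\[
  \big| \EE[\Gamma(O_t,\btheta_{t-\tau}) - \Gamma(\tilde O_t,\btheta_{t-\tau})] \big| \le \norm{\nabla J(\btheta_{t-\tau})}\cdot \big\| \EE[h(O_t,\btheta_{t-\tau})] - \EE[h(\tilde O_t,\btheta_{t-\tau})] \big\| .
\]

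Next I bound the two factors. For the first, the advantage-function form of the policy-gradient theorem gives $\nabla J(\btheta)=\EE_{s\sim\mu_{\btheta},a\sim\pi_{\btheta}}[\Delta^{\pi_{\btheta}}(s,a)\nabla\log\pi_{\btheta}(a|s)]$, which together with $\norm{\nabla\log\pi_{\btheta}(a|s)}\le B$ from Assumption~\ref{assum:policy-lipschitz-bounded}(a), the bounded reward, and the uniform bound on $\norm{V^{\pi_{\btheta}}}_\infty$ yields $\norm{\nabla J(\btheta)}\le G_\btheta = U_\delta B$; the same ingredients give a uniform bound $\norm{h(O,\btheta)}\le 2U_\delta B$, the constants being tracked so as to produce exactly the stated prefactor. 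For the second factor, $h(\cdot,\btheta_{t-\tau})$ is a vector-valued function of the tuple whose norm is uniformly $\le 2U_\delta B$, so integrating it against the signed measure $\PP(O_t\in\cdot)-\PP(\tilde O_t\in\cdot)$ and using the normalization $d_{TV}(P,Q)=\tfrac12\int|dP-dQ|$ gives
\[
  \big\| \EE[h(O_t,\btheta_{t-\tau})] - \EE[h(\tilde O_t,\btheta_{t-\tau})] \big\| \le 2U_\delta B\cdot 2\, d_{TV}\big(\PP(O_t\in\cdot),\PP(\tilde O_t\in\cdot)\big) .
\]

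It then remains to bound $d_{TV}(\PP(O_t\in\cdot),\PP(\tilde O_t\in\cdot))$, and this is precisely what Lemma~\ref{lemma:auxiliary-chain} supplies: running the argument behind \eqref{eqn:aux1}, \eqref{eqn:aux2}, and \eqref{eqn:aux3} at each index $k$ with $t-\tau+1\le k\le t$ (the auxiliary chain always acting through $\pi_{\btheta_{t-\tau}}$) gives
\[
  d_{TV}\big(\PP(s_{k+1}\in\cdot),\PP(\tilde s_{k+1}\in\cdot)\big) \le d_{TV}\big(\PP(O_k\in\cdot),\PP(\tilde O_k\in\cdot)\big) \le d_{TV}\big(\PP(s_k\in\cdot),\PP(\tilde s_k\in\cdot)\big) + \tfrac12|\cA|L\,\EE\norm{\btheta_k-\btheta_{t-\tau}} ,
\]
and since we condition on $s_{t-\tau+1}$ the recursion starts from $d_{TV}(\PP(s_{t-\tau+1}\in\cdot),\PP(\tilde s_{t-\tau+1}\in\cdot))=0$ and telescopes to $d_{TV}(\PP(O_t\in\cdot),\PP(\tilde O_t\in\cdot)) \le \tfrac12|\cA|L\sum_{i=t-\tau}^{t}\norm{\btheta_i-\btheta_{t-\tau}}$. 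Combining the displays produces the coefficient $G_\btheta\cdot 2U_\delta B\cdot 2\cdot\tfrac12|\cA|L = 2U_\delta B G_\btheta|\cA|L$, which is the claim. There is no genuine obstacle here --- this lemma merely packages the coupling bound so it can be fed into the proof of Lemma~\ref{lemma:actor-markovian} --- but care is needed to condition correctly so that $\btheta_{t-\tau}$ leaves the expectation, to verify that the recursion's base case vanishes, and to keep the factor-of-two in the total-variation normalization consistent throughout.
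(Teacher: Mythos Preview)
Your proposal is correct and follows essentially the same route as the paper: cancel the deterministic $\|\nabla J(\btheta_{t-\tau})\|^2$ term, bound the remaining inner product by a constant times the total-variation distance between the laws of $O_t$ and $\tilde O_t$ (conditionally on $s_{t-\tau+1},\btheta_{t-\tau}$), and then invoke the recursion of Lemma~\ref{lemma:auxiliary-chain} to telescope that distance to $\tfrac12|\cA|L\sum_{i=t-\tau}^{t}\EE\|\btheta_i-\btheta_{t-\tau}\|$. The only cosmetic difference is that you factor through Cauchy--Schwarz as $\|\nabla J\|\cdot\|\EE h-\EE\tilde h\|$ before applying the TV bound, whereas the paper bounds the scalar integrand $\langle\nabla J(\btheta_{t-\tau}),h(\cdot,\btheta_{t-\tau})\rangle$ directly; both arrive at the same prefactor $4U_\delta B G_\btheta$ and the same final constant.
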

\begin{lemma} \label{lemma:Gamma-term3}
For any $t \ge 0$,
\begin{align*}
    \big|\EE [\Gamma(\tilde{O}_t, \btheta_{t-\tau}) - \Gamma(O'_t, \btheta_{t-\tau})] \big|
    & \le 
    4U_{\delta}B G_{\btheta} m \rho^{\tau - 1}.
\end{align*}
\end{lemma}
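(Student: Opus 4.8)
The plan is to prove Lemma~\ref{lemma:Gamma-term3} as a direct total-variation comparison between the law of the auxiliary tuple $\tilde O_t$ and that of a genuinely stationary tuple $O'_t$, using only (i) the uniform boundedness of $\Gamma(\cdot,\btheta_{t-\tau})$ as a function of the observation and (ii) the geometric ergodicity of Assumption~\ref{assum:ergodicity}. Throughout I would condition on $\btheta_{t-\tau}$ and $s_{t-\tau+1}$ (the convention of Lemma~\ref{lemma:auxiliary-chain}), so that $\btheta_{t-\tau}$ is frozen and $f(\cdot):=\Gamma(\cdot,\btheta_{t-\tau})$ is a fixed measurable function of the tuple; since the bound I aim for is a deterministic constant, it survives the final outer expectation. (I may assume $\tau\ge 1$; the case $\tau=0$ is immediate from the boundedness of $\Gamma$ established below.)

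First I would establish the pointwise bound $\|h(O,\btheta)\|\le U_{\delta}B=G_{\btheta}$: the reward and the running average estimator $\eta$ both lie in $[-U_r,U_r]$, the true value function $V^{\pi_{\btheta}}$ is uniformly bounded under Assumption~\ref{assum:ergodicity} (so the temporal-difference coefficient appearing in $h$ is bounded by a constant, which I identify with $U_{\delta}$ up to constants), and $\|\nabla\log\pi_{\btheta}(a|s)\|\le B$ by Assumption~\ref{assum:policy-lipschitz-bounded}. Because $\nabla J(\btheta)=\EE[h(O,\btheta)]$ under the stationary distribution — this is the advantage/baseline form of the policy gradient theorem (Lemma~\ref{lemma:policy-gradient}), using $Q^{\pi_{\btheta}}(s,a)-V^{\pi_{\btheta}}(s)=r(s,a)-r(\btheta)+\EE_{s'}[V^{\pi_{\btheta}}(s')]-V^{\pi_{\btheta}}(s)$ — we also get $\|\nabla J(\btheta)\|\le\EE\|h\|\le G_{\btheta}$, and hence $|\Gamma(O,\btheta)|=|\langle\nabla J(\btheta),h(O,\btheta)-\nabla J(\btheta)\rangle|\le\|\nabla J(\btheta)\|(\|h(O,\btheta)\|+\|\nabla J(\btheta)\|)\le 2G_{\btheta}^{2}=2U_{\delta}BG_{\btheta}$, uniformly in $O$. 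Thus $\|f\|_\infty\le 2U_{\delta}BG_{\btheta}$.

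Next I would bound the total variation between the laws of $\tilde O_t$ and $O'_t$ (conditionally on the fixed data). Both tuples have the identical two-step structure ``a state, then an action drawn from $\pi_{\btheta_{t-\tau}}$, then a next state drawn from $\cP$'', so the peeling computation used in the proof of Lemma~\ref{lemma:prob-mixing} and in \eqref{eqn:aux2} applies verbatim: the conditional kernels cancel and $d_{TV}\big(\PP(\tilde O_t\in\cdot),\PP(O'_t\in\cdot)\big)=d_{TV}\big(\PP(\tilde s_t\in\cdot),\mu_{\btheta_{t-\tau}}\big)$. In the auxiliary chain $\tilde s_t$ is reached from the fixed state $s_{t-\tau+1}$ after $\tau-1$ transitions of the Markov chain induced by the fixed policy $\pi_{\btheta_{t-\tau}}$, so Assumption~\ref{assum:ergodicity} gives $d_{TV}\big(\PP(\tilde s_t\in\cdot),\mu_{\btheta_{t-\tau}}\big)\le m\rho^{\tau-1}$. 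Combining this with the elementary inequality $|\EE_P g-\EE_Q g|\le 2\|g\|_\infty\, d_{TV}(P,Q)$ (valid with the paper's normalization $d_{TV}=\tfrac12\int|P-Q|$) applied to $g=f$ with $\|f\|_\infty\le 2U_{\delta}BG_{\btheta}$ yields $|\EE[\Gamma(\tilde O_t,\btheta_{t-\tau})-\Gamma(O'_t,\btheta_{t-\tau})\mid\btheta_{t-\tau},s_{t-\tau+1}]|\le 4U_{\delta}BG_{\btheta}\,m\rho^{\tau-1}$, and the outer expectation over $(\btheta_{t-\tau},s_{t-\tau+1})$ leaves this constant unchanged.

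The two points needing a little care are (i) the uniform boundedness of $V^{\pi_{\btheta}}$, which follows from Assumption~\ref{assum:ergodicity} since $|V^{\pi_{\btheta}}(s)|\le\sum_{k\ge 0}2U_r m\rho^{k}=2U_r m/(1-\rho)$, and hence of $\Gamma$ — so the constant written as $U_{\delta}$ in the statement should be understood as (or bounded by) a constant depending on $U_r,m,\rho,\lambda$, used consistently with the other $\Gamma$-lemmas; and (ii) the measure-theoretic justification of the peeling step (the exchange of integration order), flagged as ``guaranteed by the regularity'' in Lemma~\ref{lemma:auxiliary-chain}, which is routine for finite $\cA$ under the standing assumptions. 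I do not expect a serious obstacle: once $\Gamma$ is uniformly bounded and $\tilde s_t$ is $\tau-1$ mixing steps from the conditioned state, the claim is immediate. (The complementary identity $\EE[\Gamma(O'_t,\btheta_{t-\tau})\mid\btheta_{t-\tau}]=0$, coming from $\nabla J(\btheta)=\EE_{\mathrm{stat}}[h(O,\btheta)]$, is not needed for Lemma~\ref{lemma:Gamma-term3} itself but is what makes this term harmless when the $\Gamma$-lemmas are assembled into Lemma~\ref{lemma:actor-markovian}.)
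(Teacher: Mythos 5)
Your proposal is correct and takes essentially the same route as the paper: bound the $O$-dependent quantity uniformly (so $|\Gamma(\cdot,\btheta_{t-\tau})|\le 2U_{\delta}BG_{\btheta}$), apply the total-variation inequality, and reduce the tuple-level TV distance to $d_{TV}(\PP(\tilde{s}_t\in\cdot),\mu_{\btheta_{t-\tau}})\le m\rho^{\tau-1}$ via the peeling argument plus Assumption \ref{assum:ergodicity} — exactly what the paper does by invoking Lemma \ref{lemma:critic-term4}. The caveat you flag about whether the coefficient in $h$ is really bounded by $U_{\delta}=2U_r+2R_{\bomega}$ (since it involves $V^{\pi_{\btheta}}$ rather than the projected critic) is a looseness in the paper's own constants, not a gap in your argument.
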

\begin{proof}[Proof of Lemma \ref{lemma:actor-markovian}]
First note that
\begin{align*}
   \delta & =  \big| r(s,a) - J(\btheta) + \bphi^{\top}(s') \bomega - \bphi^{\top}(s) \bomega  \big| \\
    & \le
    \big|r(s,a) \big| + \big|J(\btheta) \big|
    + \big|\bphi^{\top}(s') \bomega \big|
    + \big|\bphi^{\top}(s) \bomega \big| \\
    & =  2U_{r} + 2 R_{\bomega}\\
    & =:U_{\delta},
\end{align*}
which immediately implies
\begin{align}
    \big \|
    \delta \nabla \log \pi_{\btheta}(a|s)
    \big \|
    \le 
    |\delta|
    \cdot 
    \big \|
    \nabla \log \pi_{\btheta}(a|s)
    \big \|
    \le 
    U_{\delta} \cdot B
    =: G_{\btheta}, \label{eqn:G-theta}
\end{align}
where the last inequality is due to Assumption \ref{assum:policy-lipschitz-bounded}. We decompose the Markovian bias as
\begin{align*}
    \EE[\Gamma(O_t, \btheta_t)] 
    &=
    \EE[\Gamma(O_t, \btheta_t) - \Gamma(O_t, \btheta_{t-\tau})] +
    \EE[\Gamma(O_t, \btheta_{t-\tau}) - \Gamma(\tilde{O}_t, \btheta_{t-\tau})] \\
    & \qquad+
    \EE[\Gamma(\tilde{O}_t, \btheta_{t-\tau}) - \Gamma(O'_t, \btheta_{t-\tau})] +
    \EE[ \Gamma(O'_t, \btheta_{t-\tau})],
\end{align*}
where $\tilde{O}_t$ is from the auxiliary Markovian chain and $O'_t$ is from the stationary distribution which actually satisfy $\EE[\Gamma(O'_t, \btheta_{t-\tau}) ]= 0$. By collecting the corresponding bounds from Lemmas \ref{lemma:Gamma-term1}, \ref{lemma:Gamma-term2} and~\ref{lemma:Gamma-term3}, we have that 
\begin{align*}
    \EE[\Gamma(O_t, \btheta_t)]
    & \ge 
    - (2 U_{\delta} B L_J 
    +
    3G_{\btheta} L_h)  \EE \norm{\btheta_{t} - \btheta_{t-\tau}}
    -
    2U_{\delta} B G_{\btheta} |\cA| L \sum_{i=t-\tau}^{t} \EE \norm{\btheta_i - \btheta_{t-\tau}}\\
    &\qquad -4U_{\delta}B G_{\btheta} m \rho^{\tau - 1} \\
    & \ge 
    - (2 U_{\delta} B L_J 
    +
    3G_{\btheta} L_h)  
    \sum_{i=t-\tau+1}^t \EE \norm{\btheta_i - \btheta_{i-1}}
    \\
    &\qquad 
    -
    2U_{\delta} B G_{\btheta} |\cA| L 
    \sum_{i=t-\tau+1}^t \sum_{j=t-\tau+1}^i \EE \norm{\btheta_j - \btheta_{j-1}}
    -
    4U_{\delta}B G_{\btheta} m \rho^{\tau - 1}
    \\
    & \ge 
    - (2 U_{\delta} B L_J 
    +
    3G_{\btheta} L_h)
    \sum_{i=t-\tau+1}^t \EE \norm{\btheta_i - \btheta_{i-1}}
    \\
    &\qquad 
    -
    2U_{\delta} B G_{\btheta} |\cA| L 
    \tau \sum_{j=t-\tau+1}^t \EE \norm{\btheta_j - \btheta_{j-1}}
    -
    4U_{\delta}B G_{\btheta} m \rho^{\tau - 1}
    \\
    & \ge 
    -G_{\btheta} \bigg(D_{1} (\tau + 1) \sum_{k=t-\tau+1}^t \EE \norm{\btheta_k - \btheta_{k-1}}
    +D_{2} m \rho^{\tau - 1}\bigg),
\end{align*}
where $D_1:= \max \{ 2 L_J +3 L_h, 2U_{\delta} B  |\cA| L \}$ and $D_2:=4U_{\delta}B$, which completes the proof.
\end{proof}

\subsection{Proof of Lemma \ref{lemma:J-Lipschitz}}
\begin{proof}[Proof of Lemma \ref{lemma:J-Lipschitz}]
By definition, we have
\begin{align*}
    J(\btheta_1) - J(\btheta_2)
    & = 
    \EE [r(s^{(1)}, a^{(1)}) - r(s^{(2)}, a^{(2)})],
\end{align*}
where $s^{(i)} \sim \mu_{\btheta_i}, a^{(i)} \sim \pi_{\btheta_i}$.
Therefore, it holds that
\begin{align*}
    J(\btheta_1) - J(\btheta_2)
    & = 
    \EE [r(s^{(1)}, a^{(1)}) - r(s^{(2)}, a^{(2)})] \\ 
    & \le 
    2 U_r d_{TV}(\mu_{\btheta_1} \otimes \pi_{\btheta_1}, \mu_{\btheta_2}\otimes \pi_{\btheta_2}) 
    \\
    & \le 
    2 U_r |\cA|L \bigg(1 + \lceil \log_{\rho}m^{-1} \rceil + \frac{1}{1-\rho}\bigg) \norm{\btheta_1 - \btheta_2} 
    \\
    & =
    C_J \norm{\btheta_1 - \btheta_2} .
\end{align*}

\end{proof}

\subsection{Proof of Lemma \ref{lemma:eta-Markovian}}
The proof of this lemma depends on several auxiliary lemmas as follows.
\begin{lemma} \label{lemma:eta-term1}
For any $\btheta_1, \btheta_2, eta, O=(s,a,s')$, we have
\begin{align*}
    \big|\Xi(O, \eta, \btheta_1) - \Xi(O, \eta, \btheta_2) \big|
    & \le 
    4 U_r  C_J \norm{\btheta_1 - \btheta_2}.
\end{align*}
\end{lemma}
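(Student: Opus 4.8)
The plan is to unfold the definition of $\Xi$ and reduce everything to the Lipschitz continuity of $J$ established in Lemma~\ref{lemma:J-Lipschitz}. Reading \eqref{eq:def_notation_Ot} with $\Xi$ viewed as a function of its three arguments, we have $\Xi(O,\eta,\btheta) = (\eta - J(\btheta))\,(r(s,a) - J(\btheta))$, where $r(s,a)$ is the reward entry of the tuple $O=(s,a,s')$, since $r_t$ corresponds to $r(s,a)$, $\eta^*_t$ corresponds to $J(\btheta)$, and $y_t = \eta_t - \eta^*_t$ corresponds to $\eta - J(\btheta)$.

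First I would form the difference and factor it. Expanding both products and cancelling the cross term $\eta\, r(s,a)$, which does not depend on $\btheta$, yields
\begin{align*}
    \Xi(O,\eta,\btheta_1) - \Xi(O,\eta,\btheta_2)
    &= \big(J(\btheta_1) - J(\btheta_2)\big)\big(J(\btheta_1) + J(\btheta_2) - \eta - r(s,a)\big).
\end{align*}
Next I would bound each factor separately. For the second factor: $|r(s,a)| \le U_r$ by assumption, the average reward satisfies $|J(\btheta)| = |r(\btheta)| \le U_r$ for every $\btheta$ because it is an average of rewards bounded by $U_r$, and the estimator $\eta$ produced by Algorithm~\ref{alg:2ts_ac} is a convex combination of past rewards (so $|\eta| \le U_r$ whenever $|\eta_0| \le U_r$); hence the triangle inequality gives $|J(\btheta_1) + J(\btheta_2) - \eta - r(s,a)| \le 4U_r$. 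For the first factor, Lemma~\ref{lemma:J-Lipschitz} gives $|J(\btheta_1) - J(\btheta_2)| \le C_J \norm{\btheta_1 - \btheta_2}$. Multiplying the two bounds gives $|\Xi(O,\eta,\btheta_1) - \Xi(O,\eta,\btheta_2)| \le 4U_r C_J \norm{\btheta_1 - \btheta_2}$, which is the claim.

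There is essentially no serious obstacle here; the only two points that deserve a word of care are (i) the uniform boundedness $|\eta| \le U_r$, which is not an explicit hypothesis of the lemma but holds along every trajectory of the algorithm (the only regime in which the lemma is invoked, via Lemma~\ref{lemma:eta-Markovian}), and (ii) the cancellation of the $\btheta$-independent term $\eta\, r(s,a)$ in the difference, which is exactly what keeps the estimate linear in $\norm{\btheta_1 - \btheta_2}$ instead of leaving a spurious additive constant.
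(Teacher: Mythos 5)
Your proposal is correct and follows essentially the same route as the paper: both unfold the definition $\Xi(O,\eta,\btheta)=(\eta-J(\btheta))(r(s,a)-J(\btheta))$, isolate a factor $J(\btheta_1)-J(\btheta_2)$ (you by exact factorization, the paper by an add-and-subtract triangle inequality), bound the companion factor by $4U_r$, and conclude with Lemma~\ref{lemma:J-Lipschitz}. The implicit use of $|\eta|\le U_r$ that you flag is present in the paper's proof as well, so there is no discrepancy.
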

\begin{lemma} \label{lemma:eta-term2}
For any $\btheta, \eta_1, \eta_2, O$, we have
\begin{align*}
    \big|\Xi(O, \eta_1, \btheta) - \Xi(O, \eta_2, \btheta) \big| 
    & \le
    2 U_r |\eta_1 - \eta_2|.
\end{align*}
\end{lemma}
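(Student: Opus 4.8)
\textbf{Proof proposal for Lemma \ref{lemma:eta-term2}.}
The plan is to unwind the definition of $\Xi$ given in \eqref{eq:def_notation_Ot} and reduce the claim to a one-line estimate using only the boundedness of the reward. Recall that for a tuple $O=(s,a,s')$ we have $r = r(s,a)$, that $\eta^*(\btheta)=J(\btheta)$, and that $y = \eta - \eta^*(\btheta)$, so that
\begin{align*}
    \Xi(O,\eta,\btheta) = \big(\eta - J(\btheta)\big)\big(r(s,a) - J(\btheta)\big).
\end{align*}
First I would substitute $\eta = \eta_1$ and $\eta = \eta_2$ into this expression. Since the factor $r(s,a) - J(\btheta)$ does not depend on $\eta$, subtracting the two instances gives
\begin{align*}
    \Xi(O,\eta_1,\btheta) - \Xi(O,\eta_2,\btheta)
    = \big(\eta_1 - \eta_2\big)\big(r(s,a) - J(\btheta)\big).
\end{align*}

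Next I would bound the second factor. Since the reward is bounded, $|r(s,a)| \le U_r$, and since $J(\btheta) = r(\btheta) = \EE_{s\sim\mu_{\btheta}, a\sim\pi_{\btheta}}[r(s,a)]$ is an average of rewards, we also have $|J(\btheta)| \le U_r$; hence $|r(s,a) - J(\btheta)| \le 2U_r$. Taking absolute values in the displayed identity and applying this bound yields
\begin{align*}
    \big|\Xi(O,\eta_1,\btheta) - \Xi(O,\eta_2,\btheta)\big|
    \le 2U_r\,|\eta_1 - \eta_2|,
\end{align*}
which is exactly the claimed inequality.

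I do not anticipate a genuine obstacle here: the statement is a direct algebraic identity followed by the uniform reward bound, with no Markovian-noise or mixing-time considerations entering. The only point requiring a little care is the bookkeeping of the (somewhat compressed) notation in \eqref{eq:def_notation_Ot} — specifically, recognizing that $\Xi(O,\eta,\btheta)$ is linear in $\eta$ so that the dependence on $\eta$ enters only through the single factor $\eta - J(\btheta)$, which makes the difference telescope cleanly.
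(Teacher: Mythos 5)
Your proposal is correct and matches the paper's own proof: both subtract the two instances of $\Xi$, use linearity in $\eta$ to factor out $(r - \eta^*)$, and bound $|r(s,a) - J(\btheta)| \le 2U_r$ via the uniform reward bound. No gaps.
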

\begin{lemma} \label{lemma:eta-term3}
Consider original tuples $O_t = (s_t,a_t,s_{t+1})$ and the auxiliary tuples $\tilde{O}_t = (\tilde{s}_t, \tilde{a}_t, \tilde{s}_{t+1})$.
Conditioned on $s_{t-\tau+1}$ and $\btheta_{t - \tau}$, we have
\begin{align*}
    \big|\EE[\Xi(O_t, \eta_{t-\tau}, \btheta_{t-\tau})
    -
    \Xi(\tilde{O}_t, \eta_{t-\tau}, \btheta_{t-\tau})] \big|
    \le
    2 U_{r}^2 |\cA| L \sum_{i=t-\tau}^{t} \EE \norm{\btheta_i - \btheta_{t-\tau}}.
\end{align*}
\end{lemma}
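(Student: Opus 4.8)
The plan is to factor out the scalar $\eta_{t-\tau}-J(\btheta_{t-\tau})$, which is uniformly bounded, and thereby reduce the claim to a total-variation estimate that is already supplied by Lemma~\ref{lemma:auxiliary-chain}. Recall from \eqref{eq:def_notation_Ot} that $\Xi(O,\eta,\btheta)=(\eta-J(\btheta))\big(r(s,a)-J(\btheta)\big)$ for $O=(s,a,s')$. Since the second and third arguments coincide in the two terms, I would first write
\[
\Xi(O_t,\eta_{t-\tau},\btheta_{t-\tau})-\Xi(\tilde{O}_t,\eta_{t-\tau},\btheta_{t-\tau})=\big(\eta_{t-\tau}-J(\btheta_{t-\tau})\big)\big(r(s_t,a_t)-r(\tilde{s}_t,\tilde{a}_t)\big).
\]
Using $|r(\cdot,\cdot)|\le U_r$, $|J(\btheta)|\le U_r$, and the boundedness $|\eta_{t-\tau}|\le U_r$ (the $\eta$-iterates are convex combinations of $\eta_0$ and observed rewards), the prefactor satisfies $|\eta_{t-\tau}-J(\btheta_{t-\tau})|\le 2U_r$, and it is measurable with respect to the history through $s_{t-\tau+1}$ on which we condition, hence deterministic there.

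Next I would condition on the $\sigma$-field $\mathcal{G}$ generated by the history through $s_{t-\tau+1}$ — this in particular fixes $s_{t-\tau+1}$ and $\btheta_{t-\tau}$, so every conclusion of Lemma~\ref{lemma:auxiliary-chain} still applies — pull out the now-constant prefactor, and bound the remaining reward difference by the total variation between the two laws of $(s_t,a_t)$: since $|r(\cdot,\cdot)|\le U_r$, the definition of $d_{TV}$ gives
\[
\big|\EE[r(s_t,a_t)\mid\mathcal{G}]-\EE[r(\tilde{s}_t,\tilde{a}_t)\mid\mathcal{G}]\big|\le 2U_r\, d_{TV}\big(\PP((s_t,a_t)\in\cdot\mid\mathcal{G}),\PP((\tilde{s}_t,\tilde{a}_t)\in\cdot\mid\mathcal{G})\big).
\]
Combining the two displays yields a conditional bound of $4U_r^2$ times this total variation, and taking outer expectation turns the conditional expectations on the right into the unconditional quantities in the claim.

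It then remains to unroll the total-variation recursion of Lemma~\ref{lemma:auxiliary-chain}. Chaining \eqref{eqn:aux1} and \eqref{eqn:aux2} (applied one step earlier) gives $d_{TV}(\PP(s_k\in\cdot),\PP(\tilde{s}_k\in\cdot))\le d_{TV}(\PP((s_{k-1},a_{k-1})\in\cdot),\PP((\tilde{s}_{k-1},\tilde{a}_{k-1})\in\cdot))$, and then \eqref{eqn:aux3} gives, for $t-\tau+1<k\le t$,
\[
d_{TV}\big(\PP((s_k,a_k)\in\cdot),\PP((\tilde{s}_k,\tilde{a}_k)\in\cdot)\big)\le d_{TV}\big(\PP((s_{k-1},a_{k-1})\in\cdot),\PP((\tilde{s}_{k-1},\tilde{a}_{k-1})\in\cdot)\big)+\tfrac12|\cA|L\,\EE\norm{\btheta_k-\btheta_{t-\tau}}.
\]
At $k=t-\tau+1$ both chains sit at the common state $s_{t-\tau+1}$ and differ only through the action laws $\pi_{\btheta_{t-\tau+1}}(\cdot\mid s_{t-\tau+1})$ versus $\pi_{\btheta_{t-\tau}}(\cdot\mid s_{t-\tau+1})$, so Assumption~\ref{assum:policy-lipschitz-bounded}(c) together with $|\cA|<\infty$ bounds that base term by $\tfrac12|\cA|L\,\EE\norm{\btheta_{t-\tau+1}-\btheta_{t-\tau}}$. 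Telescoping and padding with the vanishing $i=t-\tau$ term yields $d_{TV}(\PP((s_t,a_t)\in\cdot),\PP((\tilde{s}_t,\tilde{a}_t)\in\cdot))\le\tfrac12|\cA|L\sum_{i=t-\tau}^{t}\EE\norm{\btheta_i-\btheta_{t-\tau}}$; plugging this into the bound above produces exactly $2U_r^2|\cA|L\sum_{i=t-\tau}^{t}\EE\norm{\btheta_i-\btheta_{t-\tau}}$, which is the claim.

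The only genuinely delicate point is the conditioning bookkeeping: one must condition on a $\sigma$-field rich enough to freeze $\eta_{t-\tau}-J(\btheta_{t-\tau})$ into a bounded constant (so it can be pulled out and the reward difference reduced to a total variation of marginals), yet coarse enough for Lemma~\ref{lemma:auxiliary-chain} to remain in force, and one must anchor the recursion at the shared state $s_{t-\tau+1}$ so that the final index set is $\{t-\tau,\dots,t\}$ rather than $\{0,\dots,t\}$. Everything else is boundedness of the reward and the triangle inequality.
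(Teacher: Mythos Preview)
Your proposal is correct and follows essentially the same route as the paper's own proof: factor out the bounded scalar $\eta_{t-\tau}-J(\btheta_{t-\tau})$, reduce the remaining reward difference to a total-variation bound, and then control that total variation by the recursion of Lemma~\ref{lemma:auxiliary-chain}. The only cosmetic difference is that the paper quotes the already-established telescoped bound \eqref{eqn:DVO} (derived in the proof of Lemma~\ref{lemma:Gamma-term2}) rather than re-unrolling the recursion, and it phrases the total variation in terms of $O_t$ versus $\tilde O_t$ (equivalent to your $(s_t,a_t)$ version by \eqref{eqn:aux2}); your more explicit treatment of the conditioning $\sigma$-field is a welcome clarification but not a departure in method.
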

\begin{lemma} \label{lemma:eta-term4}
Conditioned on $s_{t-\tau+1}$ and $\btheta_{t - \tau}$, we have
\begin{align*}
    \EE[\Xi(\tilde{O}_t, \eta_{t-\tau}, \btheta_{t-\tau}) ]
    & \le 4 U_r^2  m \rho^{\tau - 1}.
\end{align*}
\end{lemma}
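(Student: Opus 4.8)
\textbf{Proof proposal for Lemma~\ref{lemma:eta-term4}.} The plan is to exploit the product structure of $\Xi$ together with the uniform ergodicity of the auxiliary chain. By the definitions in \eqref{eq:def_notation_Ot}, $\Xi(\tilde O_t,\eta_{t-\tau},\btheta_{t-\tau}) = \big(\eta_{t-\tau} - J(\btheta_{t-\tau})\big)\big(r(\tilde s_t,\tilde a_t) - J(\btheta_{t-\tau})\big)$. Condition on the natural filtration $\cF_{t-\tau+1}$ generated by the trajectory up to time $t-\tau+1$; this determines $s_{t-\tau+1}$, $\btheta_{t-\tau}$ and $\eta_{t-\tau}$, so the first factor is $\cF_{t-\tau+1}$-measurable and pulls out of the conditional expectation, while the whole auxiliary chain $\tilde s_{t-\tau+2},\tilde a_{t-\tau+2},\dots,\tilde s_t,\tilde a_t,\tilde s_{t+1}$ is, by construction, generated from $s_{t-\tau+1}$ by repeatedly applying the fixed policy $\pi_{\btheta_{t-\tau}}$. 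Hence $\EE[\Xi(\tilde O_t,\eta_{t-\tau},\btheta_{t-\tau})\mid \cF_{t-\tau+1}] = \big(\eta_{t-\tau} - J(\btheta_{t-\tau})\big)\,\EE[\,r(\tilde s_t,\tilde a_t) - J(\btheta_{t-\tau})\mid \cF_{t-\tau+1}\,]$, and it remains to bound each factor.

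First I would bound the prefix factor: since $\eta_k$ is updated by the convex combination $\eta_{k+1} = (1-\gamma_k)\eta_k + \gamma_k r_k$ with $\gamma_k\in(0,1]$, $|r_k|\le U_r$ and $|\eta_0|\le U_r$, induction gives $|\eta_{t-\tau}|\le U_r$; combined with $|J(\btheta_{t-\tau})| = |r(\btheta_{t-\tau})| \le U_r$ this yields $|\eta_{t-\tau} - J(\btheta_{t-\tau})|\le 2U_r$. Next I would bound the suffix factor. Because $J(\btheta_{t-\tau}) = \EE_{s\sim\mu_{\btheta_{t-\tau}},\,a\sim\pi_{\btheta_{t-\tau}}(\cdot|s)}[r(s,a)]$, the quantity $\EE[r(\tilde s_t,\tilde a_t) - J(\btheta_{t-\tau})\mid \cF_{t-\tau+1}]$ is the difference of the expectation of the bounded function $r$ under the law of $(\tilde s_t,\tilde a_t)$ and under $\mu_{\btheta_{t-\tau}}\otimes\pi_{\btheta_{t-\tau}}$, so it is at most $2U_r$ times $d_{TV}\big(\PP((\tilde s_t,\tilde a_t)\in\cdot\mid\cF_{t-\tau+1}),\,\mu_{\btheta_{t-\tau}}\otimes\pi_{\btheta_{t-\tau}}\big)$. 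Since $\tilde a_t\sim\pi_{\btheta_{t-\tau}}(\cdot\mid\tilde s_t)$ — the very same kernel that produces $\mu_{\btheta_{t-\tau}}\otimes\pi_{\btheta_{t-\tau}}$ from $\mu_{\btheta_{t-\tau}}$ — a peeling step identical to \eqref{eqn:peeling} collapses this to $d_{TV}\big(\PP(\tilde s_t\in\cdot\mid\cF_{t-\tau+1}),\,\mu_{\btheta_{t-\tau}}\big)$. Finally, $\tilde s_t$ is reached from $s_{t-\tau+1}$ after $\tau-1$ applications of the transition kernel induced by the fixed policy $\pi_{\btheta_{t-\tau}}$, so Assumption~\ref{assum:ergodicity} gives $d_{TV}\big(\PP(\tilde s_t\in\cdot\mid\cF_{t-\tau+1}),\,\mu_{\btheta_{t-\tau}}\big)\le m\rho^{\tau-1}$. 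Multiplying the two factor bounds gives $|\EE[\Xi(\tilde O_t,\eta_{t-\tau},\btheta_{t-\tau})\mid\cF_{t-\tau+1}]|\le 2U_r\cdot 2U_r m\rho^{\tau-1}$, and taking expectations via the tower property yields $\EE[\Xi(\tilde O_t,\eta_{t-\tau},\btheta_{t-\tau})]\le 4U_r^2 m\rho^{\tau-1}$.

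I expect the only delicate point is bookkeeping rather than any single estimate: one must condition on a $\sigma$-algebra ($\cF_{t-\tau+1}$) rich enough that $\eta_{t-\tau}-J(\btheta_{t-\tau})$ is deterministic so it factors out of the expectation, yet under which the auxiliary chain still starts from $s_{t-\tau+1}$ and runs with the frozen policy $\pi_{\btheta_{t-\tau}}$ — the phrasing ``conditioned on $s_{t-\tau+1}$ and $\btheta_{t-\tau}$'' is a shorthand for exactly this. Once that is pinned down, the peeling identity and the single invocation of uniform ergodicity are routine, and no Lipschitz estimates on $\btheta$ are needed here (unlike in Lemmas~\ref{lemma:eta-term1}--\ref{lemma:eta-term3}).
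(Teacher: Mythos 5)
Your proposal is correct and follows essentially the same route as the paper: the paper introduces the stationary tuple $O'_t$ with $\EE[\Xi(O'_t,\eta_{t-\tau},\btheta_{t-\tau})]=0$ and bounds the difference by $4U_r^2$ times the total variation of the law of $\tilde O_t$ from $\mu_{\btheta_{t-\tau}}\otimes\pi_{\btheta_{t-\tau}}\otimes\cP$, which (after the same peeling as in \eqref{eqn:peeling}) reduces to $d_{TV}(\PP(\tilde s_t\in\cdot\mid s_{t-\tau+1},\btheta_{t-\tau}),\mu_{\btheta_{t-\tau}})\le m\rho^{\tau-1}$ — exactly your comparison of $\EE[r(\tilde s_t,\tilde a_t)]$ with $J(\btheta_{t-\tau})$. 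Your explicit conditioning on the filtration to make $\eta_{t-\tau}$ measurable is a careful reading of what the paper's shorthand ``conditioned on $s_{t-\tau+1}$ and $\btheta_{t-\tau}$'' intends, not a different argument.
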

\begin{proof}
By the Lemma \ref{lemma:eta-term1}, \ref{lemma:eta-term2}, \ref{lemma:eta-term3} and \ref{lemma:eta-term4}, we can collect the corresponding term and  get the bound
\begin{align*}
    \EE [\Xi(O_t, \eta_t, \btheta_t)] 
    & = 
    \EE [\Xi(O_t, \eta_t, \btheta_t) - \Xi(O_t, \eta_t, \btheta_{t-\tau})]
    +
    \EE [\Xi(O_t, \eta_t, \btheta_{t-\tau}) - \Xi(O_t, \eta_{t-\tau}, \btheta_{t-\tau})]
    \\ 
    & \qquad + 
    \EE [\Xi(O_t, \eta_{t-\tau}, \btheta_{t-\tau}) - \Xi(\tilde{O}_t, \eta_{t-\tau}, \btheta_{t-\tau})]
    +
    \EE [\Xi(\tilde{O}_t, \eta_{t-\tau}, \btheta_{t-\tau})] 
    \\ 
    & \le 
    4 U_r  C_J \norm{\btheta_t - \btheta_{t-\tau}}
    +
    2 U_r |\eta_t - \eta_{t-\tau}|
    \\
    & \qquad 
    +
    2 U_{r}^2 |\cA| L \sum_{i=t-\tau}^{t} \EE \norm{\btheta_i - \btheta_{t-\tau}}
    +
    4 U_r^2  m \rho^{\tau - 1}.
\end{align*}
\end{proof}

\subsection{Proof of Lemma \ref{lemma:critic-bounded}}

\begin{proof}[Proof of Lemma \ref{lemma:critic-bounded}]
For the first inequality, apply the property of norm and the Cauchy-Schwartz inequality:
\begin{align*}
    \big\| g(O,\bomega, \btheta) \big\|
    &=
    \big\| (r(s,a) - J(\btheta) + \bphi^{\top}(s') \bomega - \bphi^{\top}(s) \bomega ) \bphi(s) \big\| \\
    & \le
    \big|r(s,a) \big| + \big\|J(\btheta) \big\|
    + \big|\bphi^{\top}(s') \bomega \big|\cdot \big\|\bphi^{\top}(s)\big\|
    + \big|\bphi^{\top}(s) \bomega \big|\cdot \big\|\bphi^{\top}(s)\big\| \\
    & =
    U_{r} + U_{r} 
    + R_{\bomega}
    + R_{\bomega} \le 2U_{r} + 2 R_{\bomega}.
\end{align*}
For the second inequality, we can directly apply Cauchy-Schwartz inequality and obtain the result. For the third inequality, apply Cauchy-Schwartz inequality as we have
\begin{align*}
    \big|\Lambda(O, \bomega, \btheta) \big|
    & = 
    \Big|
    \big\la \bomega - \bomega^*,g(O, \bomega, \btheta) - \bar{g}( \bomega, \btheta) \big\ra
    \Big| \\
    & \le \norm{\bomega - \bomega^*} \cdot \big\|g(O, \bomega, \btheta) - \bar{g}( \bomega, \btheta)\big\| \\ 
    & \le 2 R_{\bomega} \cdot 2 U_{\delta} \le 2 U_{\delta}^2,
\end{align*}
which completes the proof.
\end{proof}

\subsection{Proof of Lemma \ref{lemma:critic-Markovian}}
This Lemma is actually a combination of several auxiliary lemmas listed here:

\begin{lemma} \label{lemma:critic-term1}
For any $\btheta_1, \btheta_2$, $\bomega$ and tuple $O=(s,a,s')$,
\begin{align*}
    \big|\Lambda(O, \bomega, \btheta_1) - \Lambda(O, \bomega, \btheta_2)\big| & \le K_1 \norm{\btheta_1 - \btheta_2},
\end{align*}
where $K_1 = 2U_{\delta}^2 |\cA| L (1 + \lceil \log_{\rho} m ^{-1} \rceil + 1/(1-\rho) ) + 2U_{\delta} L_{*}$.
\end{lemma}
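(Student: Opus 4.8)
The plan is to isolate the three routes by which $\Lambda(O,\bomega,\cdot)$ depends on the policy parameter: through the TD fixed point $\bomega^*(\btheta)$, through the explicit term $-J(\btheta)$ appearing inside $g$ and $\bar g$, and through the distribution $\mu_{\btheta}\otimes\pi_{\btheta}\otimes\cP$ under which $\bar g(\bomega,\btheta)$ is averaged. Writing $\bomega^*_i := \bomega^*(\btheta_i)$, $g_i := g(O,\bomega,\btheta_i)$, $\bar g_i := \bar g(\bomega,\btheta_i)$ and assuming as usual $\norm{\bomega}\le R_{\bomega}$, I would add and subtract $\langle \bomega-\bomega^*_1,\,g_2-\bar g_2\rangle$ to obtain
\begin{align*}
\Lambda(O,\bomega,\btheta_1)-\Lambda(O,\bomega,\btheta_2)
= \big\langle \bomega-\bomega^*_1,\ (g_1-\bar g_1)-(g_2-\bar g_2)\big\rangle
+ \big\langle \bomega^*_2-\bomega^*_1,\ g_2-\bar g_2\big\rangle.
\end{align*}

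For the second term I would apply Cauchy--Schwarz together with $\norm{g_2-\bar g_2}\le 2U_{\delta}$ from Lemma \ref{lemma:critic-bounded} and the Lipschitzness of $\bomega^*(\cdot)$ from Proposition \ref{prop:optimal-lipschitz}, giving the bound $2U_{\delta}L_{*}\norm{\btheta_1-\btheta_2}$, which is exactly the second summand of $K_1$. For the first term, I would use $\norm{\bomega-\bomega^*_1}\le 2R_{\bomega}$ and then estimate $\norm{(g_1-\bar g_1)-(g_2-\bar g_2)}\le \norm{g_1-g_2}+\norm{\bar g_1-\bar g_2}$. Here $g_1-g_2 = (J(\btheta_2)-J(\btheta_1))\bphi(s)$, so $\norm{g_1-g_2}\le |J(\btheta_1)-J(\btheta_2)|\le C_J\norm{\btheta_1-\btheta_2}$ by Lemma \ref{lemma:J-Lipschitz} (using $\norm{\bphi(s)}\le 1$). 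For $\bar g_1-\bar g_2$, write $\bar g_i = \EE_{O\sim D_{\btheta_i}}[g(O,\bomega,\btheta_i)]$ with $D_{\btheta} := \mu_{\btheta}\otimes\pi_{\btheta}\otimes\cP$, add and subtract $\EE_{O\sim D_{\btheta_1}}[g(O,\bomega,\btheta_2)]$, and bound the integrand-change part pointwise by $C_J\norm{\btheta_1-\btheta_2}$ (just as for $g_1-g_2$) and the measure-change part by $2U_{\delta}\,d_{TV}(D_{\btheta_1},D_{\btheta_2})$, using $\norm{g(O,\bomega,\btheta_2)}\le U_{\delta}$ from Lemma \ref{lemma:critic-bounded} and then the third inequality of Lemma \ref{lemma:prob-mixing}. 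Collecting these contributions, folding the $C_J$ terms into the total-variation term via $C_J = 2U_r|\cA|L(1+\lceil\log_{\rho}m^{-1}\rceil+1/(1-\rho))$ and $2U_r\le U_{\delta}$, and combining with the factor $2R_{\bomega}\le U_{\delta}$, yields the first summand $2U_{\delta}^2|\cA|L(1+\lceil\log_{\rho}m^{-1}\rceil+1/(1-\rho))\norm{\btheta_1-\btheta_2}$; adding the two summands gives $K_1$.

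The only genuinely delicate step is the treatment of $\bar g_1-\bar g_2$: because $g$ depends on $\btheta$ both explicitly (via $-J(\btheta)$) and through the averaging distribution, one must separate these effects by the add-and-subtract trick above before applying the Lipschitz estimate for $J$ (Lemma \ref{lemma:J-Lipschitz}) to the first part and the total-variation estimate (Lemma \ref{lemma:prob-mixing}, which is where Assumption \ref{assum:ergodicity} enters) to the second; everything else is Cauchy--Schwarz and the uniform norm bounds of Lemma \ref{lemma:critic-bounded}. No new technique is needed beyond those already used for Proposition \ref{prop:optimal-lipschitz}.
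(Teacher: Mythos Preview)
Your proposal is correct and follows the same add-and-subtract decomposition as the paper: the second inner product is bounded by Cauchy--Schwarz and Proposition~\ref{prop:optimal-lipschitz} to give the $2U_{\delta}L_{*}$ summand, and the first is bounded via $\|\bomega-\bomega_1^*\|\le 2R_{\bomega}$ together with the total-variation estimate of Lemma~\ref{lemma:prob-mixing}. You are in fact slightly more careful than the paper's own proof, which silently writes $g(O,\bomega)$ in both inner products (dropping the explicit $-J(\btheta)$ dependence) so that the $g$'s cancel in $I_1$; your extra $C_J$ contributions from honestly tracking this dependence are indeed absorbed into the stated constant, since after multiplying by $2R_{\bomega}$ the coefficient is $8R_{\bomega}U_r+4R_{\bomega}U_{\delta}=16R_{\bomega}U_r+8R_{\bomega}^2\le 8U_r^2+16U_rR_{\bomega}+8R_{\bomega}^2=2U_{\delta}^2$.
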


\begin{lemma} \label{lemma:critic-term2}
For any $\btheta$, $\bomega_1, \bomega_2$ and tuple $O=(s,a,s')$, 
\begin{align*}
    \big|\Lambda(O, \bomega_1, \btheta) - \Lambda(O, \bomega_2, \btheta) \big| & \le 6 U_{\delta} \norm{\bomega_1 - \bomega_2}.
\end{align*}
\end{lemma}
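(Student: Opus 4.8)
The plan is to split the $\bomega$-dependence of $\Lambda(O,\bomega,\btheta)=\big\langle \bomega-\bomega^*(\btheta),\,g(O,\bomega,\btheta)-\bar g(\bomega,\btheta)\big\rangle$ into its two factors by adding and subtracting a cross term, namely
\begin{align*}
\Lambda(O,\bomega_1,\btheta)-\Lambda(O,\bomega_2,\btheta)
&= \big\langle \bomega_1-\bomega^*,\,\big(g(O,\bomega_1,\btheta)-\bar g(\bomega_1,\btheta)\big)-\big(g(O,\bomega_2,\btheta)-\bar g(\bomega_2,\btheta)\big)\big\rangle \\
&\quad + \big\langle \bomega_1-\bomega_2,\,g(O,\bomega_2,\btheta)-\bar g(\bomega_2,\btheta)\big\rangle,
\end{align*}
and then bound each inner product by Cauchy-Schwarz.

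For the first inner product I would exploit that $\bomega\mapsto g(O,\bomega,\btheta)$ is affine with an $\bomega$-free intercept $r(s,a)-J(\btheta)$, so that $g(O,\bomega_1,\btheta)-g(O,\bomega_2,\btheta)=(\bphi(s')-\bphi(s))^{\top}(\bomega_1-\bomega_2)\,\bphi(s)$ has norm at most $2\norm{\bomega_1-\bomega_2}$ by $\norm{\bphi(\cdot)}\le 1$; the same bound passes to $\bar g(\cdot,\btheta)$ after taking expectation over the stationary distribution, so the right factor of the first inner product has norm at most $4\norm{\bomega_1-\bomega_2}$. Since $\bomega_1$ lies in the projection ball of radius $R_{\bomega}$ in Algorithm~\ref{alg:2ts_ac} and $\norm{\bomega^*(\btheta)}\le R_{\bomega}$ under Assumption~\ref{assum:negative-definite}, the left factor has norm at most $2R_{\bomega}$, so this inner product is at most $8R_{\bomega}\norm{\bomega_1-\bomega_2}$. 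For the second inner product, Lemma~\ref{lemma:critic-bounded} already gives $\norm{g(O,\bomega_2,\btheta)-\bar g(\bomega_2,\btheta)}\le 2U_{\delta}$, so it is at most $2U_{\delta}\norm{\bomega_1-\bomega_2}$.

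Adding the two pieces, the difference is bounded by $(8R_{\bomega}+2U_{\delta})\norm{\bomega_1-\bomega_2}$, and since $U_{\delta}=2U_r+2R_{\bomega}\ge 2R_{\bomega}$ we have $8R_{\bomega}\le 4U_{\delta}$, so $8R_{\bomega}+2U_{\delta}\le 6U_{\delta}$, the claimed constant. I do not expect a genuine obstacle here: the argument is just Cauchy-Schwarz, the triangle inequality, and the feature bound. The only point that needs care is the bookkeeping observation that $g$ and $\bar g$ carry the \emph{same} affine dependence on $\bomega$, so that in the first inner product the $\bomega$-linear parts of $g$ and $\bar g$ enter only through their difference and hence contribute an $\cO(\norm{\bomega_1-\bomega_2})$ remainder rather than a term of order $R_{\bomega}$; together with this, one uses the projection step plus Assumption~\ref{assum:negative-definite} to keep $\norm{\bomega_1-\bomega^*}$ uniformly bounded.
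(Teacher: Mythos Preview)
Your proposal is correct and follows essentially the same argument as the paper: the paper also adds and subtracts the cross term $\big\langle \bomega_1-\bomega^*,\,g(O,\bomega_2,\btheta)-\bar g(\bomega_2,\btheta)\big\rangle$, bounds the first piece by $2R_{\bomega}\cdot 4\norm{\bomega_1-\bomega_2}$ using the affine $\bomega$-dependence of $g$ and $\bar g$, bounds the second piece by $2U_{\delta}\norm{\bomega_1-\bomega_2}$, and then uses $2R_{\bomega}\le U_{\delta}$ to collapse the constant to $6U_{\delta}$. The only cosmetic difference is that the paper applies the triangle inequality to the absolute value before splitting, whereas you write the exact algebraic decomposition first; the bounds and constants are identical.
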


\begin{lemma} \label{lemma:critic-term3}
Consider original tuples $O_t = (s_t,a_t,s_{t+1})$ and the auxiliary tuples $\tilde{O}_t = (\tilde{s}_t, \tilde{a}_t, \tilde{s}_{t+1})$.
Conditioned on $s_{t-\tau+1}$ and $\btheta_{t - \tau}$, we have
\begin{align}
    \EE [\Lambda(O_t, \bomega_{t-\tau}, \btheta_{t-\tau})
    -
    \Lambda(\tilde{O}_t, \bomega_{t-\tau}, \btheta_{t-\tau})]
    \le 
    U_{\delta}^2 |\cA| L \sum_{i=t-\tau}^{t} \EE \norm{\btheta_i - \btheta_{t-\tau}}
\end{align}
\end{lemma}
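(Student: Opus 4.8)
The plan is to write the left-hand side as the integral of a bounded function against the difference of the laws of $O_t$ and $\tilde{O}_t$, and then to bound that gap in total variation by unrolling the coupling between the original and the auxiliary Markov chains. The crucial preliminary observation is that the true-gradient term $\bar g$ does not depend on the data tuple, so when we subtract two copies of $\Lambda$ carrying the \emph{same} parameter arguments $(\bomega_{t-\tau},\btheta_{t-\tau})$ it cancels exactly:
\begin{equation*}
\Lambda(O_t,\bomega_{t-\tau},\btheta_{t-\tau}) - \Lambda(\tilde{O}_t,\bomega_{t-\tau},\btheta_{t-\tau}) = \big\la \bomega_{t-\tau} - \bomega^*(\btheta_{t-\tau}),\; g(O_t,\bomega_{t-\tau},\btheta_{t-\tau}) - g(\tilde{O}_t,\bomega_{t-\tau},\btheta_{t-\tau}) \big\ra .
\end{equation*}
Conditioning on the history up to time $t-\tau$ (so that $\bomega_{t-\tau}$, $\btheta_{t-\tau}$, and $s_{t-\tau+1}$ are frozen) and applying Cauchy--Schwarz, the conditional expectation of the right-hand side is at most $\norm{\bomega_{t-\tau}-\bomega^*(\btheta_{t-\tau})}$ times $\big\|\EE[g(O_t,\bomega_{t-\tau},\btheta_{t-\tau})] - \EE[g(\tilde{O}_t,\bomega_{t-\tau},\btheta_{t-\tau})]\big\|$. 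Since $g(\cdot,\bomega_{t-\tau},\btheta_{t-\tau})$ is a fixed, vector-valued function of the tuple bounded by $U_{\delta}$ (Lemma \ref{lemma:critic-bounded}), the second factor is at most $2U_{\delta}\,d_{TV}\big(\PP(O_t\in\cdot),\PP(\tilde{O}_t\in\cdot)\big)$, while the first factor is at most $2R_{\bomega}\le U_{\delta}$ by the projection step in Algorithm \ref{alg:2ts_ac} together with $\norm{\bomega^*}\le R_{\bomega}$ (guaranteed under Assumption \ref{assum:negative-definite}).

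It then remains to bound $d_{TV}\big(\PP(O_t\in\cdot),\PP(\tilde{O}_t\in\cdot)\big)$, which is where the real work lies. I would iterate the three inequalities of Lemma \ref{lemma:auxiliary-chain}: \eqref{eqn:aux2} converts the TV distance between the tuples into the TV distance between the state--action pairs $(s_t,a_t)$ and $(\tilde{s}_t,\tilde{a}_t)$; \eqref{eqn:aux3} peels off one policy step, leaving $d_{TV}(\PP(s_t\in\cdot),\PP(\tilde{s}_t\in\cdot)) + \tfrac12|\cA|L\,\EE\norm{\btheta_t-\btheta_{t-\tau}}$; and \eqref{eqn:aux1} pushes the remaining state-level TV distance back one time step to the tuple at time $t-1$. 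Repeating this for indices $t,t-1,\dots,t-\tau+1$, and using that the two chains are coupled to share $s_{t-\tau+1}$ (so the state-level TV distance vanishes at the base of the recursion), the per-step increments add up to
\begin{equation*}
d_{TV}\big(\PP(O_t\in\cdot),\PP(\tilde{O}_t\in\cdot)\big) \;\le\; \tfrac12|\cA|L\sum_{i=t-\tau+1}^{t}\EE\norm{\btheta_i - \btheta_{t-\tau}} \;\le\; \tfrac12|\cA|L\sum_{i=t-\tau}^{t}\EE\norm{\btheta_i - \btheta_{t-\tau}},
\end{equation*}
the last step merely appending the vanishing $i=t-\tau$ term so as to match the form in the statement.

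Multiplying the three factors gives the conditional bound $U_{\delta}\cdot 2U_{\delta}\cdot\tfrac12|\cA|L\sum_{i=t-\tau}^t\EE\norm{\btheta_i-\btheta_{t-\tau}} = U_{\delta}^2|\cA|L\sum_{i=t-\tau}^t\EE\norm{\btheta_i-\btheta_{t-\tau}}$, and taking the expectation over the conditioning variables (tower property) removes the conditioning and yields the claim. I expect the main obstacle to be the bookkeeping in the recursive total-variation estimate — keeping precise track of which policy parameter governs each transition in the original versus the auxiliary chain, and checking that the conditioning assumed in Lemma \ref{lemma:auxiliary-chain} (on $s_{t-\tau+1}$ and $\btheta_{t-\tau}$) is exactly what is propagated through the telescoping — rather than the algebra, which is light once the $\bar g$ cancellation has been noticed; indeed it is that cancellation that produces the sharp constant $U_{\delta}^2$ instead of the coarser constant one would get from $|\Lambda|\le 2U_{\delta}^2$ alone.
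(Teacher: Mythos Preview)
Your proposal is correct and follows essentially the same route as the paper: cancel the $\bar g$ term, bound the inner product by $2U_{\delta}^2\,d_{TV}\big(\PP(O_t\in\cdot),\PP(\tilde{O}_t\in\cdot)\big)$, and control the total-variation distance by iterating Lemma~\ref{lemma:auxiliary-chain}. The paper simply cites the latter bound from its earlier derivation \eqref{eqn:DVO} rather than reproving it inline, but the argument is identical.
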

\begin{lemma} \label{lemma:critic-term4}
Conditioned on $s_{t-\tau+1}$ and $\btheta_{t-\tau}$,
\begin{align*}
    \EE [\Lambda(\tilde{O}_t, \bomega_{t-\tau}, \btheta_{t-\tau})]
    \le 
    2 U_{\delta}^2 m \rho^{\tau-1}.
\end{align*}
\end{lemma}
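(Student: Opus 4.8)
\textbf{Proof proposal for Lemma~\ref{lemma:critic-term4}.} The idea is that, under the stated conditioning, the auxiliary tuple $\tilde O_t=(\tilde s_t,\tilde a_t,\tilde s_{t+1})$ is produced purely by the \emph{fixed} policy $\pi_{\btheta_{t-\tau}}$ and the fixed kernel $\cP$, so $\bar g(\bomega_{t-\tau},\btheta_{t-\tau})$ is exactly what $\EE[g(\tilde O_t,\bomega_{t-\tau},\btheta_{t-\tau})]$ would be if $\tilde s_t$ were drawn from the stationary distribution; the gap is then controlled by uniform ergodicity. First I would fix $\btheta_{t-\tau}$ and $\bomega_{t-\tau}$ (equivalently, condition on them as well, treating $\bomega_{t-\tau}$ as a fixed vector with $\norm{\bomega_{t-\tau}}\le R_{\bomega}$, since the final bound is uniform in it) and pull the constant vector $\bomega_{t-\tau}-\bomega^*(\btheta_{t-\tau})$ and the constant $\bar g(\bomega_{t-\tau},\btheta_{t-\tau})$ out of the expectation:
\begin{align*}
    \EE\big[\Lambda(\tilde O_t,\bomega_{t-\tau},\btheta_{t-\tau})\big]
    &=\inner{\bomega_{t-\tau}-\bomega^*(\btheta_{t-\tau})}{\EE[g(\tilde O_t,\bomega_{t-\tau},\btheta_{t-\tau})]-\bar g(\bomega_{t-\tau},\btheta_{t-\tau})} .
\end{align*}
By Cauchy--Schwarz this is at most $\norm{\bomega_{t-\tau}-\bomega^*(\btheta_{t-\tau})}\cdot\norm{\EE[g(\tilde O_t,\bomega_{t-\tau},\btheta_{t-\tau})]-\bar g(\bomega_{t-\tau},\btheta_{t-\tau})}$, and the first factor is at most $2R_{\bomega}\le U_{\delta}$ because both $\bomega_{t-\tau}$ (by the projection in Algorithm~\ref{alg:2ts_ac}) and $\bomega^*(\btheta_{t-\tau})$ (by Assumption~\ref{assum:negative-definite}) lie in the ball of radius $R_{\bomega}$, while $U_{\delta}=2U_r+2R_{\bomega}$.

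For the second factor I would use that $\bar g(\bomega_{t-\tau},\btheta_{t-\tau})$ is the expectation of $O'\mapsto g(O',\bomega_{t-\tau},\btheta_{t-\tau})$ under $\mu_{\btheta_{t-\tau}}\otimes\pi_{\btheta_{t-\tau}}\otimes\cP$, whereas $\EE[g(\tilde O_t,\bomega_{t-\tau},\btheta_{t-\tau})]$ is the expectation of the same function under $q_t\otimes\pi_{\btheta_{t-\tau}}\otimes\cP$, where $q_t$ denotes the law of $\tilde s_t$. Since $\norm{g(\cdot,\bomega_{t-\tau},\btheta_{t-\tau})}\le U_{\delta}$ pointwise by Lemma~\ref{lemma:critic-bounded}, the difference of these two expectations has norm at most $2U_{\delta}\, d_{TV}(q_t\otimes\pi_{\btheta_{t-\tau}}\otimes\cP,\ \mu_{\btheta_{t-\tau}}\otimes\pi_{\btheta_{t-\tau}}\otimes\cP)$. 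Peeling off the common trailing Markov kernels $\pi_{\btheta_{t-\tau}}$ and $\cP$ — each integrates to one, so the computation in \eqref{eqn:peeling} applies verbatim — reduces this to $d_{TV}(q_t,\mu_{\btheta_{t-\tau}})$. Finally, in the auxiliary chain $\tilde s_t$ is reached from $s_{t-\tau+1}$ by exactly $\tau-1$ applications of the transition kernel induced by $\pi_{\btheta_{t-\tau}}$, so Assumption~\ref{assum:ergodicity} gives $d_{TV}(q_t,\mu_{\btheta_{t-\tau}})\le m\rho^{\tau-1}$. Multiplying the two factors yields $|\EE[\Lambda(\tilde O_t,\bomega_{t-\tau},\btheta_{t-\tau})]|\le 2R_{\bomega}\cdot 2U_{\delta}\,m\rho^{\tau-1}\le 2U_{\delta}^2 m\rho^{\tau-1}$, and averaging over the conditioning variables preserves the bound since it does not depend on them.

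The only genuinely delicate point is the measure-theoretic bookkeeping of the conditioning: one must verify that, conditioned on $s_{t-\tau+1}$ (together with $\btheta_{t-\tau}$ and $\bomega_{t-\tau}$), the sequence $\tilde s_{t-\tau+1}=s_{t-\tau+1},\tilde s_{t-\tau+2},\dots,\tilde s_t$ is a bona fide time-homogeneous Markov chain driven by $\pi_{\btheta_{t-\tau}}$ and $\cP$ — so that Assumption~\ref{assum:ergodicity} is literally applicable with horizon $\tau-1$ — and that the fresh randomness generating the auxiliary transitions is independent of the conditioning $\sigma$-field. Once this is in place, the remaining steps are bounded-function/total-variation estimates entirely in the spirit of bounds already established in the paper (e.g.\ Lemmas~\ref{lemma:critic-term3} and~\ref{lemma:eta-term4}).
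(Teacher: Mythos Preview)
Your proposal is correct and follows essentially the same route as the paper: pull out the constant vector $\bomega_{t-\tau}-\bomega^*(\btheta_{t-\tau})$, bound the remaining $\EE[g(\tilde O_t,\cdot)]-\bar g(\cdot)$ via $2U_{\delta}$ times the total variation between the law of $\tilde O_t$ and the stationary law, peel the common kernels $\pi_{\btheta_{t-\tau}}\otimes\cP$ to reduce to $d_{TV}(q_t,\mu_{\btheta_{t-\tau}})$, and invoke uniform ergodicity with horizon $\tau-1$. The paper phrases this as subtracting a reference $\Lambda(O'_t,\cdot,\cdot)$ with zero mean, but that is exactly your identity $\bar g=\EE_{O'\sim\mu\otimes\pi\otimes\cP}[g(O',\cdot,\cdot)]$; your explicit additional conditioning on $\bomega_{t-\tau}$ (then averaging out at the end) is if anything a cleaner way to justify treating the first factor as constant.
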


\begin{proof} [Proof of Lemma \ref{lemma:critic-Markovian}]
By the Lemma \ref{lemma:critic-term1}, \ref{lemma:critic-term2}, \ref{lemma:critic-term3} and \ref{lemma:critic-term4}, we can collect the corresponding term and  get the bound
\begin{align*}
    \EE[\Lambda(O_t, \bomega_t, \btheta_t)]
   & = 
    \EE[\Lambda(O_t, \bomega_t, \btheta_t) - \Lambda(O_t, \bomega_t, \btheta_{t-\tau})] 
    + \EE[\Lambda(O_t, \bomega_t, \btheta_{t-\tau}) - \Lambda(O_t, \bomega_{t-\tau}, \btheta_{t-\tau})] \\
    &\qquad+
    \EE[\Lambda(O_t, \bomega_{t-\tau}, \btheta_{t-\tau}) - \Lambda(\tilde{O}_t, \bomega_{t-\tau}, \btheta_{t-\tau})]
    + 
    \EE[\Lambda(\tilde{O}_t, \bomega_{t-\tau}, \btheta_{t-\tau})] \\
    &\le  
    C_{1}(\tau + 1) \norm{\btheta_{t} - \btheta_{t-\tau}} + C_{2} m \rho^{\tau - 1} + C_{3} \norm{\bomega_t - \bomega_{t-\tau}},
\end{align*}
where $C_{1} = 2 U_{\delta}^2 |\cA| L (1 + \lceil \log_{\rho} m^{-1}\rceil + 1/(1- \rho) ) + 2 U_{\delta} L_{*}, C_{2} = 2 U_{\delta}^2, C_{3} = 4 U_{\delta}$.
\end{proof}

\section{Proof of Auxiliary Lemmas}
\subsection{Proof of Lemma \ref{lemma:Gamma-term1}}
\begin{proof}[{Proof of Lemma \ref{lemma:Gamma-term1}}]
Denote $\delta(O_t, \btheta) := r(s_t, a_t) - r(\btheta) + ( \bphi(s_{t+1}) - \bphi(s_t))^{\top} \bomega^{*} $ and we have $h(O_t, \btheta) = \delta(O_t, \btheta) \nabla \log \pi_{\btheta}(a_t|s_t)$.  It can be shown that $\delta(O_t,\btheta_1) - \delta(O_t, \btheta_2) = (\bphi(s_{t+1}) - \bphi(s_t))^{\top}(\bomega^*_1 - \bomega^*_2) - (r(\btheta_1) - r(\btheta_2))$.

Denote $O_t = (s_t,a_t,s_{t+1})$, we have for any $\btheta_1$ and $\btheta_2$, that
\begin{align*}
    \Gamma(O, \btheta_{1}) - \Gamma(O, \btheta_{2})
    & = 
    \big \la \nabla J(\btheta_1), h(O,\btheta_1) - \EE_{\btheta_1}
    \big[ 
    h(O', \btheta_1)
    \big]
    \big \ra
    \\
    & \qquad 
    -
    \big \la \nabla J(\btheta_2), h(O,\btheta_2) - \EE_{\btheta_2}
    \big[ 
    h(O', \btheta_2)
    \big]
    \big \ra,
\end{align*}
where we use shorthand $\EE_{\btheta}$ to denote that $O' = (s,a,s')$ is drawn from $s \sim \mu_{\btheta}, a \sim \pi_{\btheta}, s' \sim \cP(\cdot|s,a)$.
We first exhibit each term here is Lipschitz. We have by Lemma~\ref{lemma:J-smooth} that,
\begin{align*}
    \| \nabla J(\btheta_1) - \nabla J(\btheta_2) \|
    & \le 
    L_J \| \btheta_1 - \btheta_2 \|.
\end{align*}
For $h(O_t, \btheta_1)$ and $h(O_t, \btheta_2)$, we have
\begin{align*}
    & \big\|h(O_t, \btheta_{1}) - h(O_t, \btheta_{2})\big\| \\
    & = 
    \big\|\delta(O_t,\btheta_{1}) \nabla \log \pi_{\btheta_1}(a_t|s_t) - \delta(O_t,\btheta_{2}) \nabla \log \pi_{\btheta_{2}}(a_t|s_t)\big\| \\
    & \le 
    \underbrace{
    \big\|\delta(O_t,\btheta_{1}) \nabla \log \pi_{\btheta_1}(a_t|s_t) - \delta(O_t,\btheta_{1}) \nabla \log \pi_{\btheta_{2}}(a_t|s_t)\big\|}_{I_1} \\
    &\qquad + 
    \underbrace{
    \big\|\delta(O_t,\btheta_{1}) \nabla \log \pi_{\btheta_{2}}(a_t|s_t) - \delta(O_t,\btheta_{2}) \nabla \log \pi_{\btheta_{2}}(a_t|s_t)\big\|}_{I_2}
    \\
    & \le 
    U_{\delta} L_{l} \|\btheta_1 - \btheta_2 \| 
    + 
    \underbrace{
    \big\|\delta(O_t,\btheta_{1}) \nabla \log \pi_{\btheta_{2}}(a_t|s_t) - \delta(O_t,\btheta_{2}) \nabla \log \pi_{\btheta_{2}}(a_t|s_t)\big\|}_{I_2},
\end{align*}
where the first inequality is due to the triangle inequality. The term $I_1$ is easily bounded by the fact that $\delta$ is bounded (see Section~\ref{subsec:proof-of-lemma-actor-markovian}) and Assumption~\ref{assum:policy-lipschitz-bounded}. For $I_2$, we have
\begin{align*}
    I_2 & \le 
    B | \delta(O_t, \btheta_1) - \delta(O_t, \btheta_2) |
    \\
    & \le 
    B \Big(
    \big| r(\btheta_1) - r(\btheta_2) \big|
    +
    \big\| \bphi(s_{t+1}) - \bphi(s_{t}) \big\|
    \cdot 
    \big\| \bomega^*(\btheta_1) - \bomega^*(\btheta_2) \big\|
    \Big), 
\end{align*}
where the first inequality is due to Assumption~\ref{assum:policy-lipschitz-bounded}, and the second is by unrolling the definition of $\delta(O_t,\btheta)$ and invoking the triangle inequality, among them, we know $\bphi$ is within the unit ball and $\bomega^*$ is $L_*$-Lipschitz by Proposition~\ref{prop:optimal-lipschitz} with $L_* := (2\lambda^{-2} U_r  + 3\lambda^{-1} U_r) |\cA|L (1 + \lceil \log_{\rho}m^{-1} \rceil + 1/(1-\rho) )$.

For $|r(\btheta_1) - r(\btheta_2)|$, we have that
\begin{align*}
    \big|r(\btheta_1) - r(\btheta_2)| & =
    |\EE_{s \sim \mu_{\btheta_1}, a \sim \pi_{\btheta_1}}[r(s,a)] - \EE_{s \sim \mu_{\btheta_2}, a \sim \pi_{\btheta_2}}[r(s,a)]
    \big| \\
    & \le 2U_r d_{TV}(\mu_{\btheta_1} \otimes \pi_{\btheta_1}, \mu_{\btheta_2}\otimes \pi_{\btheta_2}) \\
    & \le 
    2U_r|\cA|L \bigg(1 + \lceil \log_{\rho}m^{-1} \rceil + \frac{1}{1-\rho}\bigg) \norm{\btheta_1 - \btheta_2},
\end{align*}
where the first inequality is by the definition of the total-variation distance, and the second inequality is from Lemma~\ref{lemma:prob-mixing}. To summarize, we have
\begin{align*}
    & \big\|h(O_t, \btheta_{1}) - h(O_t, \btheta_{2})\big\| \\
    & \le 
    \bigg[ 
    U_{\delta} L_{l}
    +
    (2+ 2 \lambda^{-2} + 3 \lambda^{-1}) B U_r|\cA|L \bigg(1 + \lceil \log_{\rho}m^{-1} \rceil + \frac{1}{1-\rho}\bigg)
    \bigg] 
    \cdot \| \btheta_1 - \btheta_2 \|
    \\
    & =
    L_{h}\| \btheta_1 - \btheta_2 \|,
\end{align*}
where $L_h$ denotes the coefficient above.

Similarly, for $\EE_{\btheta_1}[ h(O', \btheta_1)]$ and $\EE_{\btheta_2}[ h(O', \btheta_2)]$, we have first
\begin{align*}
    & \| \EE_{\btheta_1}[ h(O', \btheta_1)]
    -
    \EE_{\btheta_2}[ h(O', \btheta_2)] \| \\
    & \le 
    \| \EE_{\btheta_1}[ h(O', \btheta_1)]
    -
    \EE_{\btheta_1}[ h(O', \btheta_2)] \|
    +
    \| \EE_{\btheta_1}[ h(O', \btheta_2)]
    -
    \EE_{\btheta_2}[ h(O', \btheta_2)] \|
    \\
    & \le 
    \EE_{\btheta_1} [ \| h(O', \btheta_1)
    -
    h(O', \btheta_2) \| ]
    +
    \| \EE_{\btheta_1}[ h(O', \btheta_2)]
    -
    \EE_{\btheta_2}[ h(O', \btheta_2)] \|
    \\
    & \le 
    L_{h} \|\btheta_1 - \btheta_2 \|
    +
    \| \EE_{\btheta_1}[ h(O', \btheta_2)]
    -
    \EE_{\btheta_2}[ h(O', \btheta_2)] \|
    \\
    & \le 
    L_{h} \|\btheta_1 - \btheta_2 \|
    +
    2 U_r B d_{TV}(\mu_{\btheta_1} \otimes \pi_{\btheta_1}, \mu_{\btheta_2}\otimes \pi_{\btheta_2})
    \\
    & \le 
    \bigg[ L_{h} 
    +
    2 U_r B
    |\cA|L \bigg(1 + \lceil \log_{\rho}m^{-1} \rceil + \frac{1}{1-\rho}\bigg) \bigg] 
    \|\btheta_1 - \btheta_2 \|
    \\
    & \le 2L_h \|\btheta_1 - \btheta_2 \|,
\end{align*}
where the first inequality is due to the triangle inequality; the second one is due to the convexity of $\|\cdot\|$ norm; the third inequality is from the Lipschitz-ness of $h(O,\btheta)$ we just showed above; the fourth one is due to the property of the total variation distance; the fifth one is due to Proposition~\ref{prop:optimal-lipschitz}. The last inequality is just to absorb the coefficient into $L_h$ for less notation clutter.

So far, we have proved the Lipschitz-ness of all the terms in $\Gamma(O, \btheta_1) - \Gamma(O, \btheta_2)$. We can also show that each term is bounded: from \eqref{eqn:G-theta} in Section~\ref{subsec:proof-of-lemma-actor-markovian}, we can see that $\nabla J(\btheta)$ is $G_{\btheta}$-bounded and also $h(O,\btheta) - \EE_{\btheta}[h(O', \btheta)]$ is $2U_\delta B$-bounded since $h(O, \btheta)$ is bounded by $U_\delta B$ for any $O$ and $\btheta$.

To sum up, $\nabla J(\btheta)$ is $G_{\btheta}$-bounded and $L_J$-Lipschitz; $h(O,\btheta) - \EE_{\btheta}[h(O', \btheta)]$ is $3L_h$-Lipschitz and $2U_{\delta}B$-bounded.
By the triangle inequality, we have
\begin{align*}
    \Gamma(O_t, \btheta_{t}) - \Gamma(O_t, \btheta_{t-\tau})
    & = 
    \big \la \nabla J(\btheta_t) - \nabla J(\btheta_{t-\tau}), h(O_t,\btheta_t) - \EE_{\btheta_t}
    \big[ 
    h(O', \btheta_t)
    \big]
    \big \ra
    \\ & \qquad 
    +
    \big \la \nabla J(\btheta_{t-\tau}), 
    \big(h(O_t,\btheta_t) - \EE_{\btheta_t}
    [ 
    h(O', \btheta_t)
    ]
    \big)\\
    &\qquad -
    \big(
    h(O_t,\btheta_{t-\tau}) - \EE_{\btheta_{t-\tau}}
    [ 
    h(O', \btheta_{t-\tau})
    ]
    \big)
    \big \ra
    \\
    & \le 
    (2 U_{\delta} B L_J 
    +
    3G_{\btheta} L_h)\norm{\btheta_{t} - \btheta_{t-\tau}}.
\end{align*} 
This completes the proof.
\end{proof}

\subsection{Proof of Lemma \ref{lemma:Gamma-term2}}
\begin{proof}[Proof of Lemma \ref{lemma:Gamma-term2}]
By the definition of $\Gamma(O, \btheta)$ in \eqref{def:actor-term},
\begin{align} \label{eqn:Gamma-term2}
    \EE \big[\Gamma(O_t, \btheta_{t-\tau}) - \Gamma(\tilde{O}_t, \btheta_{t-\tau}) \big]
    & =
    \EE\big[ \big\la\nabla J(\btheta_{t-\tau}),h(O_t, \btheta_{t-\tau}) - h(\tilde{O}_t, \btheta_{t-\tau})\big\ra \big]
    \notag\\
    & =
    \EE\Big[ 
    \big\la\nabla J(\btheta_{t-\tau}),h(O_t, \btheta_{t-\tau})\big\ra
    -
    \big\la\nabla J(\btheta_{t-\tau}), h(\tilde{O}_t, \btheta_{t-\tau})\big\ra
    \Big]
    \notag\\
    & \le 
    4 U_{\delta} B G_{\btheta} d_{TV}\big(\PP(O_t = \cdot | s_{t- \tau +1}, \btheta_{t- \tau}), \PP(\tilde{O}_t = \cdot | s_{t- \tau +1}, \btheta_{t- \tau})\big),
\end{align}
where the inequality is by the definition of total variation. By Lemma  \ref{lemma:auxiliary-chain} we have 
\begin{align*}
    &d_{TV}\big(\PP(O_t \in \cdot|s_{t- \tau +1}, \btheta_{t- \tau}), \PP(\tilde{O}_t \in \cdot|s_{t- \tau +1}, \btheta_{t- \tau})\big) \notag \\
    & = 
    d_{TV}\big(\PP((s_t,a_t) \in \cdot|s_{t- \tau +1}, \btheta_{t- \tau}), \PP((\tilde{s}_t, \tilde{a}_t) \in \cdot|s_{t- \tau +1}, \btheta_{t- \tau})\big)
    \notag \\
    & \le 
    d_{TV}\big(\PP(s_t \in \cdot|s_{t- \tau +1}, \btheta_{t- \tau}), \PP(\tilde{s}_t \in \cdot|s_{t- \tau +1}, \btheta_{t- \tau})\big)
    +
    \frac{1}{2} |\cA|L \EE \norm{\btheta_{t} - \btheta_{t-\tau}}
    \notag \\
    & \le 
    d_{TV}\big(\PP(O_{t-1} \in \cdot|s_{t- \tau +1}, \btheta_{t- \tau}), \PP(\tilde{O}_{t-1} \in \cdot|s_{t- \tau +1}, \btheta_{t- \tau})\big)
    +
    \frac{1}{2} |\cA|L \EE \norm{\btheta_{t} - \btheta_{t-\tau}}. 
\end{align*}
Repeat the inequality above over $t$ to $t-\tau+1$ we have 
\begin{align}
    d_{TV}\big(\PP(O_t \in \cdot|s_{t- \tau +1}, \btheta_{t- \tau}), \PP(\tilde{O}_t \in \cdot|s_{t- \tau +1}, \btheta_{t- \tau})\big)
    \le
    \frac{1}{2} |\cA| L 
    \sum_{i=t-\tau}^{t} \EE \norm{\btheta_i - \btheta_{t-\tau}}. \label{eqn:DVO}
\end{align}
Plugging \eqref{eqn:DVO} into \eqref{eqn:Gamma-term2} we get
\begin{align*}
    \EE\big[\Gamma(O_t, \btheta_{t-\tau}) - \Gamma(\tilde{O}_t, \btheta_{t-\tau})\big]
    & \le
    2U_{\delta} B G_{\btheta} |\cA| L \sum_{i=t-\tau}^{t} \norm{\btheta_i - \btheta_{t-\tau}}.
\end{align*}

\end{proof}

\subsection{Proof of Lemma \ref{lemma:Gamma-term3}}

\begin{proof}[Proof of Lemma \ref{lemma:Gamma-term3}]
\begin{align*}
    \EE \big[\Gamma\big(\tilde{O}_t, \btheta_{t-\tau}\big) - \Gamma(O'_t, \btheta_{t-\tau})\big]
    & = 
    \EE \big[
    \big \la 
    \nabla J (\btheta_{t-\tau}),
    h(\tilde{O}_t, \btheta_{t-\tau})
    -
    h(O'_t, \btheta_{t-\tau})
    \big \ra
    \big]
    \\
    & = 
    \EE \big[
    \big \la 
    \nabla J (\btheta_{t-\tau}),
    h(\tilde{O}_t, \btheta_{t-\tau})
    \big \ra
    -
    \big \la 
    \nabla J (\btheta_{t-\tau}),
    h(O'_t, \btheta_{t-\tau})
    \big \ra
    \big]
    \\
    & \le
    4U_{\delta}B G_{\btheta}
    d_{TV}\big(\PP(\tilde{O}_{t}= \cdot | s_{t- \tau +1}, \btheta_{t- \tau}) , \mu_{\btheta_{t-\tau}} \otimes \pi_{\btheta_{t-\tau}} \otimes \cP\big) \\ 
    & \le 
    4U_{\delta}B G_{\btheta} m \rho^{\tau - 1}.
\end{align*}
The first inequality is by the definition of total variation norm and the second inequality holds because, by the ergodicity in  Assumption \ref{assum:ergodicity}, it holds that 
\begin{align*}
    d_{TV}\big(\PP(\tilde{s}_{t}= \cdot | s_{t- \tau +1}, \btheta_{t- \tau}) , \mu_{\btheta_{t-\tau}}\big) \le m \rho^{\tau - 1},
\end{align*}
and thus
\begin{align*}
     &d_{TV}\big(\PP(\tilde{O}_{t}= \cdot | s_{t- \tau +1}, \btheta_{t- \tau}) , \mu_{\btheta_{t-\tau}} \otimes \pi_{\btheta_{t-\tau}} \otimes \cP \big) \\
   &= 
    d_{TV}\big(\PP((\tilde{s}_t, \tilde{a}_t)= \cdot | s_{t- \tau +1}, \btheta_{t- \tau}) , \mu_{\btheta_{t-\tau}} \otimes \pi_{\btheta_{t-\tau}} \big) \\
    &= 
    d_{TV}\big(\PP(\tilde{s}_{t}= \cdot | s_{t- \tau +1}, \btheta_{t- \tau}) , \mu_{\btheta_{t-\tau}} \big)
    \\
    & \le m \rho^{\tau - 1}.
\end{align*}
The equations above are derived following the same procedure in \eqref{eqn:peeling}, because $\PP(\tilde{O}_{t}= \cdot | s_{t- \tau +1}, \btheta_{t- \tau}) = \PP(\tilde{s}_{t}= \cdot | s_{t- \tau +1}, \btheta_{t- \tau}) \otimes \pi_{\btheta_{t-\tau}} \otimes \cP$.
\end{proof}

\subsection{Proof of Lemma \ref{lemma:eta-term1}}
\begin{proof} [Proof of Lemma \ref{lemma:eta-term1}]
By the definition of $\Xi(O, \eta, \btheta)$ in \eqref{eq:def_notation_Ot}, we have
\begin{align*}
    \big|\Xi(O, \eta, \btheta_1) - \Xi(O, \eta, \btheta_2)\big|
    & = 
    \big|(\eta - \eta_1^*)(r - \eta_1^*) - (\eta - \eta_2^*)(r - \eta_2^*) \big| \\
    & \le 
    \big|(\eta - \eta_1^*)(r - \eta_1^*) - (\eta - \eta_1^*)(r - \eta_2^*) \big| \\
    &\qquad+
    \big|(\eta - \eta_1^*)(r - \eta_2^*) - (\eta - \eta_2^*)(r - \eta_2^*) \big| \\ 
    & \le 
    4 U_r |\eta_1^* - \eta_2^*| \\ 
    & = 
    4 U_r \big|J(\btheta_1) - J(\btheta_2) \big| \\ 
    & \le 
    4 U_r  C_J \norm{\btheta_1 - \btheta_2}.
\end{align*}
\end{proof}

\subsection{Proof of Lemma \ref{lemma:eta-term2}}
\begin{proof}[Proof of Lemma \ref{lemma:eta-term2}]
By definition,
\begin{align*}
    \big|\Xi(O, \eta_1, \btheta) - \Xi(O, \eta_2, \btheta) \big| 
    & = 
    \big|
    (\eta_1 - \eta^*)( r - \eta^*)
    -
    (\eta_2 - \eta^*)( r - \eta^*)
    \big| \\
    & \le
    2 U_r |\eta_1 - \eta_2|.
\end{align*}
\end{proof}

\subsection{Proof of Lemma \ref{lemma:eta-term3}}

\begin{proof} [Proof of Lemma \ref{lemma:eta-term3}]
By the Cauchy-Schwartz inequality and the definition of total variation norm, we have
\begin{align*}
    \EE \big[ \Xi(O_t, \eta_{t-\tau}, \btheta_{t-\tau})
    -
    \Xi(\tilde{O}_t, \eta_{t-\tau}, \btheta_{t-\tau}) \big]
    & = 
    (\eta_{t-\tau} - \eta^*_{t-\tau})
    \EE [r(s_t, a_t) - r(\tilde{s}_t, \tilde{a_t}) 
    ].
\end{align*}
Since 
\begin{align*}
    \EE [r(s_t, a_t) - r(\tilde{s}_t, \tilde{a_t}) 
    ]
    & \le 
    2 U_r d_{TV}\big(\PP(O_t \in \cdot|s_{t- \tau +1}, \btheta_{t- \tau}), \PP(\tilde{O}_t \in \cdot|s_{t- \tau +1}, \btheta_{t- \tau})\big),
\end{align*}
the total variation between $O_t$ and $\tilde{O}_t$ has appeared in \eqref{eqn:DVO}, in the proof of Lemma \ref{lemma:Gamma-term2}, which is
\begin{align*}
    d_{TV}\big(\PP(O_t \in \cdot|s_{t- \tau +1}, \btheta_{t- \tau}), \PP(\tilde{O}_t \in \cdot|s_{t- \tau +1}, \btheta_{t- \tau})\big)
    \le
    \frac{1}{2} |\cA| L 
    \sum_{i=t-\tau}^{t} \EE \norm{\btheta_i - \btheta_{t-\tau}}. 
\end{align*}
Plugging this bound, we have 
\begin{align*}
    \big|\EE[\Xi(O_t, \eta_{t-\tau}, \btheta_{t-\tau})
    -
    \Xi(\tilde{O}_t, \eta_{t-\tau}, \btheta_{t-\tau})]\big|
    \le
    2 U_{r}^2 |\cA| L \sum_{i=t-\tau}^{t} \EE \norm{\btheta_i - \btheta_{t-\tau}}.
\end{align*}
\end{proof}

\subsection{Proof of Lemma \ref{lemma:eta-term4}}

\begin{proof}[Proof of Lemma \ref{lemma:eta-term4}]
We first note that according to the definition,
\begin{align*}
    \EE[ \Xi( O'_{t}, \eta_{t-\tau},\btheta_{t-\tau})| \btheta_{t- \tau}] = 0,
\end{align*}
 where $O'_{t} = (s'_t, a'_t, s'_{t+1})$ is the tuple generated by $s'_t \sim \mu_{\btheta_{t-\tau}}, a'_t \sim \pi_{\btheta_{t-\tau}}, s'_{t+1} \sim \cP$. By the ergodicity in  Assumption \ref{assum:ergodicity}, it holds that 
\begin{align*}
    d_{TV}\big(\PP(\tilde{s}_{t}= \cdot | s_{t- \tau +1}, \btheta_{t- \tau}) , \mu_{\btheta_{t-\tau}} \big) \le m \rho^{\tau - 1}.
\end{align*}
It can be shown that 
\begin{align*}
    \EE[\Xi(\tilde{O}_t, \eta_{t-\tau}, \btheta_{t-\tau}) ]
    &=
    \EE\big[\Xi\big(\tilde{O}_t, \eta_{t-\tau}, \btheta_{t-\tau}\big) -
    \Xi(O'_t, \eta_{t-\tau}, \btheta_{t-\tau})\big] \\
    & =
    \EE\big[
    (\eta_{t-\tau} - \eta^*_{t-\tau})
    \big(r(\tilde{s}_t, \tilde{a}_t) - r(s',a') \big)
    \big] 
    \\
    & \le 4 U_r^2  d_{TV}\big(\PP\big(\tilde{O}_{t}= \cdot | s_{t- \tau +1}, \btheta_{t- \tau}\big) , \mu_{\btheta_{t-\tau}} \otimes \pi_{\btheta_{t-\tau}} \otimes \cP \big) \\
    & \le 4 U_r^2  m \rho^{\tau - 1}.
\end{align*}
The argument used here also appears in the proof of Lemma \ref{lemma:critic-term4} and explained in detail there.
\end{proof}

\subsection{Proof of Lemma \ref{lemma:critic-term1}}
\begin{proof} [Proof of Lemma \ref{lemma:critic-term1}]
\begin{align*}
    & \big|\Lambda(O, \bomega, \btheta_1) - \Lambda(O, \bomega, \btheta_2) \big| \\
    & = 
    \Big|
    \big\la\bomega - \bomega_1^*,g(O, \bomega) - \bar g(\btheta_1, \bomega)\big\ra - \big\la \bomega - \bomega_2^*,g(O, \bomega) - \bar g(\btheta_2, \bomega)\big\ra \Big| \\
    & \le 
    \underbrace{
    \Big|
    \big\la \bomega - \bomega_1^*, g(O, \bomega) - \bar g(\btheta_1, \bomega)\big\ra - \big\la\bomega - \bomega_1^*,g(O, \bomega) - \bar g(\btheta_2, \bomega)\big\ra
    \Big|}_{I_1} \\
    &\qquad+
    \underbrace{\Big|
    \big\la \bomega - \bomega_1^*,g(O, \bomega) - \bar g(\btheta_2, \bomega)\big\ra - \big\la \bomega - \bomega_2^*,g(O, \bomega) - \bar g(\btheta_2, \bomega)\big \ra
    \Big|}_{I_2}.
\end{align*}
For the term $I_2$, we simply use the Cauchy-Schwartz inequality to get $2U_{\delta} \norm{\bomega^*_1 - \bomega^*_2}$. \\
For the term $I_1$, it can be bounded as:
\begin{align*}
    &
    \Big|
    \big\la \bomega - \bomega_1^*,g(O, \bomega) - \bar g(\btheta_1, \bomega)\big\ra - \big\la\bomega - \bomega_1^*,g(O, \bomega) - \bar g(\btheta_2, \bomega)\big\ra
    \Big| \\
    & = 
    \Big| \big\la\bomega - \bomega_1^*,\bar{g}(\btheta_1, \bomega) - \bar{g}(\btheta_2, \bomega)\big\ra \Big| \\
    &\le  
    2R_{\bomega} \big\|\bar{g}(\btheta_1, \bomega) - \bar{g}(\btheta_2, \bomega) \big\| \\
   & \le  
    2 R_{\bomega} \cdot  2 U_{\delta} \cdot d_{TV}(\mu_{\btheta_1} \otimes \pi_{\btheta_1} \otimes \cP, \mu_{\btheta_2}\otimes \pi_{\btheta_2} \otimes \cP) \\
    &\le 
    2 U_{\delta}^2 d_{TV}(\mu_{\btheta_1} \otimes \pi_{\btheta_1} \otimes \cP, \mu_{\btheta_2}\otimes \pi_{\btheta_2} \otimes \cP),
\end{align*}
where the first inequality is due to Cauchy-Schwartz; the second inequality is by the definition of total variation norm; the third inequality is due to the fact $U_{\delta} \ge 2 R_{\bomega}$. Therefore, we have
\begin{align*}
    \big|\Lambda(\btheta_1, \bomega, O) - \Lambda(\btheta_2, \bomega, O) \big|
    & \le 2 U_{\delta}^2 d_{TV}(\mu_{\btheta_1} \otimes \pi_{\btheta_1} \otimes \cP, \mu_{\btheta_2}\otimes \pi_{\btheta_2} \otimes \cP) + 2 U_{\delta} \norm{\bomega_1^* - \bomega_2^*} \\
    & \le 
    2 U_{\delta}^2 |\cA|L \bigg(1 + \lceil \log_{\rho}m^{-1} \rceil + \frac{1}{1-\rho} \bigg) \norm{\btheta_1 - \btheta_2} 
    +
    2 U_{\delta} L_{*} \norm{\btheta_1 - \btheta_2} \\
    & = K_1 \norm{\btheta_1 - \btheta_2},
\end{align*}
where the second inequality is due to Lemma \ref{lemma:prob-mixing} and Proposition \ref{prop:optimal-lipschitz}.
\end{proof}

\subsection{Proof of Lemma \ref{lemma:critic-term2}}

\begin{proof}[Proof of Lemma \ref{lemma:critic-term2}]
By definition,
\begin{align*}
    &\big|\Lambda(O, \bomega_1, \btheta) - \Lambda(O, \bomega_2, \btheta) \big| \\
    & = 
    \Big|
    \big\la \bomega_1 - \bomega^*,  g(O,\bomega_1) - \bar{g}(\bomega_1, \btheta )\big\ra
    -
    \big\la \bomega_2 - \bomega^*,  g(O,\bomega_2) - \bar{g}(\bomega_2, \btheta)\big\ra
    \Big| \\
    & \le
    \Big|
    \big\la \bomega_1 - \bomega^*,  g(O,\bomega_1) - \bar{g}(\bomega_1, \btheta )\big\ra
    -
    \big\la \bomega_1 - \bomega^*,  g(O,\bomega_2) - \bar{g}(\bomega_2, \btheta )\big\ra
    \Big| \\
    & \qquad+ 
    \Big|
    \big\la \bomega_1 - \bomega^*,  g(O,\bomega_2) - \bar{g}(\bomega_2, \btheta )\big\ra
    -
    \big\la \bomega_2 - \bomega^*,  g(O,\bomega_2) - \bar{g}(\bomega_2, \btheta )\big\ra
    \Big| \\
    & \le 
    2 R_{\bomega} \Big\| \big( g(O, \bomega_1) - g(O, \bomega_2) \big)
    -
    \big( \bar{g}(\bomega_1, \btheta ) - \bar{g}(\bomega_2, \btheta ) \big) \Big\|
    + 
    2 U_{\delta} \norm{\bomega_1 - \bomega_2}.
\end{align*}
Note that we have $\norm{g(O, \bomega_1, \btheta) - g(O, \bomega_2, \btheta)} = |( \bphi(s') - \bphi(s) )^{\top} (\bomega_1 - \bomega_2)| \le 2\norm{\bomega_1 - \bomega_2} $ and similarly $\norm{\bar{g}(\bomega_1, \btheta ) - \bar{g}(\bomega_2, \btheta )} \le |\EE \big[ ( \bphi(s') - \bphi(s) )^{\top} (\bomega_1 - \bomega_2) \big] | \le 2 \norm{\bomega_1 - \bomega_2}$. Therefore,
\begin{align*}
    & \big|\Lambda(O, \bomega_1, \btheta) - \Lambda(O, \bomega_2, \btheta) \big|  \\
    & \le 
    2 R_{\bomega} \Big\|\big( g(O, \bomega_1) - g(O, \bomega_2) \big)
    -
    \big( \bar{g}(\bomega_1, \btheta ) - \bar{g}(\bomega_2, \btheta ) \big) \Big\|
    + 
    2 U_{\delta} \norm{\bomega_1 - \bomega_2} \\
    & \le 
    2 R_{\bomega} \cdot 4 \norm{\bomega_1  -\bomega_2} 
    +
    2 U_{\delta} \norm{\bomega_1  -\bomega_2} \\
    & \le 
    6 U_{\delta} \norm{\bomega_1  -\bomega_2}.
\end{align*}
\end{proof}

\subsection{Proof of Lemma \ref{lemma:critic-term3}}

\begin{proof} [Proof of Lemma \ref{lemma:critic-term3}]
By the Cauchy-Schwartz inequality and the definition of total variation norm, we have
\begin{align}
     &\EE[ \Lambda(O_t, \bomega_{t-\tau}, \btheta_{t-\tau})
    -
    \Lambda(\tilde{O}_t, \bomega_{t-\tau}, \btheta_{t-\tau}) ] \notag \\
    &= 
    \EE \big[ 
    \big\la\bomega_{t-\tau} - \bomega_{t-\tau}^{*},g(O_t, \bomega_{t-\tau}) - g(\tilde{O}_t, \bomega_{t-\tau})\big\ra
    \big]
    \notag\\
    &\le 
    2U_{\delta}^{2} d_{TV}\big(\PP(O_t \in \cdot|s_{t- \tau +1}, \btheta_{t- \tau}), \PP(\tilde{O}_t \in \cdot|s_{t- \tau +1}, \btheta_{t- \tau})\big).
    \label{eqn:aux-gap}
\end{align}
The total variation between $O_t$ and $\tilde{O}_t$ has appeared in \eqref{eqn:DVO}, in the proof of Lemma \ref{lemma:Gamma-term2}, which is
\begin{align*}
    d_{TV}\big(\PP(O_t \in \cdot|s_{t- \tau +1}, \btheta_{t- \tau}), \PP(\tilde{O}_t \in \cdot|s_{t- \tau +1}, \btheta_{t- \tau})\big)
    \le
    \frac{1}{2} |\cA| L 
    \sum_{i=t-\tau}^{t} \EE \norm{\btheta_i - \btheta_{t-\tau}}. 
\end{align*}
Plugging this bound into \eqref{eqn:aux-gap}, we have 
\begin{align*}
    \EE \big|\Lambda(O_t, \bomega_{t-\tau}, \btheta_{t-\tau})
    -
    \Lambda(\tilde{O}_t, \bomega_{t-\tau}, \btheta_{t-\tau})\big|
    \le
    U_{\delta}^2 |\cA| L \sum_{i=t-\tau}^{t} \EE \norm{\btheta_i - \btheta_{t-\tau}}.
\end{align*}
\end{proof}

\subsection{Proof of Lemma \ref{lemma:critic-term4}}

\begin{proof}[Proof of Lemma \ref{lemma:critic-term4}]
We first note that according to the definition in Section~\ref{subsec:proof-critic}, 
\begin{align*}
    \EE[ \Lambda( O'_{t}, \bomega_{t-\tau},\btheta_{t-\tau})| s_{t- \tau +1}, \btheta_{t- \tau}] = 0,
\end{align*}
 where $O'_{t} = (s'_t, a'_t, s'_{t+1})$ is the tuple generated by $s'_t \sim \mu_{\btheta_{t-\tau}}, a'_t \sim \pi_{\btheta_{t-\tau}}, s'_{t+1} \sim \cP$. By the ergodicity in  Assumption \ref{assum:ergodicity}, it holds that 
\begin{align*}
    d_{TV}\big(\PP(\tilde{s}_{t}= \cdot | s_{t- \tau +1}, \btheta_{t- \tau}) , \mu_{\btheta_{t-\tau}}\big) \le m \rho^{\tau - 1}.
\end{align*}
It can be shown that 
\begin{align*}
    \EE[\Lambda(\tilde{O}_t, \bomega_{t-\tau}, \btheta_{t-\tau}) ]
    &=
    \EE[\Lambda(\tilde{O}_t, \bomega_{t-\tau}, \btheta_{t-\tau}) -
    \Lambda(O'_t, \bomega_{t-\tau}, \btheta_{t-\tau})] \\
    & =
    \EE
    \big\la\bomega_{t-\tau} - \bomega^{*}_{t-\tau}, g(\tilde{O}_t, \bomega_{t- \tau}) - g(O'_t, \bomega_{t- \tau})\big\ra \\
    & \le 4 R_{\bomega} U_{\delta} d_{TV}\big(\PP(\tilde{O}_{t}= \cdot | s_{t- \tau +1}, \btheta_{t- \tau}) , \mu_{\btheta_{t-\tau}} \otimes \pi_{\btheta_{t-\tau}} \otimes \cP \big) \\
    & \le 2 U_{\delta}^2 d_{TV}\big(\PP(\tilde{s}_{t}= \cdot | s_{t- \tau +1}, \btheta_{t- \tau}) , \mu_{\btheta_{t-\tau}}\big) \\
    & \le 2 U_{\delta}^2 m \rho^{\tau - 1}.
\end{align*}
The third inequality holds because
$2R_{\bomega} < U_{\delta}$ and 
\begin{align*}
    & d_{TV}\big(\PP(\tilde{O}_{t}= \cdot | s_{t- \tau +1}, \btheta_{t- \tau}) , \mu_{\btheta_{t-\tau}} \otimes \pi_{\btheta_{t-\tau}} \otimes \cP \big) \\
   &= 
    d_{TV}\big(\PP((\tilde{s}_t, \tilde{a}_t)= \cdot | s_{t- \tau +1}, \btheta_{t- \tau}) , \mu_{\btheta_{t-\tau}} \otimes \pi_{\btheta_{t-\tau}} \big) \\
    &= 
    d_{TV}\big(\PP(\tilde{s}_{t}= \cdot | s_{t- \tau +1}, \btheta_{t- \tau}) , \mu_{\btheta_{t-\tau}} \big).
\end{align*}
This can be shown following the same procedure in \eqref{eqn:peeling}, because $\PP(\tilde{O}_{t}= \cdot | s_{t- \tau +1}, \btheta_{t- \tau}) = \PP(\tilde{s}_{t}= \cdot | s_{t- \tau +1}, \btheta_{t- \tau}) \otimes \pi_{\btheta_{t-\tau}} \otimes \cP$.
\end{proof}

\end{document}